\newtheorem{theorem}{Theorem}[section]
\newtheorem{setting}[theorem]{Setting}
\newtheorem{lemma}[theorem]{Lemma}
\newtheorem{corollary}[theorem]{Corollary}
\DeclarePairedDelimiter{\ltwo}{\Vert}{\Vert_2^2}
\DeclareMathOperator*{\E}{\mathbb{E}}
\let\Pr\relax\DeclareMathOperator{\Pr}{\mathbb{P}}
\DeclareMathOperator*{\Var}{\text{Var}}
\DeclareMathOperator*{\argmin}{arg\,min}
\DeclareMathOperator{\Tr}{Tr}
\DeclareMathOperator{\poly}{poly}
\newcommand{\norm}[1]{\left\|#1\right\|}
\newcommand{\wh}{\widehat}
\newcommand{\R}{\mathbb{R}}
\newcommand{\eps}{\varepsilon}
\newcommand{\grad}{\nabla}
\newcommand{\inner}[1]{\langle#1\rangle}
\newcommand{\lam}{\lambda}
\newcommand{\wt}{\widetilde}
\newcommand{\score}{\text{score}}
\newcommand{\train}{\text{train}}
\newcommand{\dis}{\text{dis}}
\let\Pr\relax\DeclareMathOperator{\Pr}{\mathbb{P}}
\renewcommand{\wh}{\widehat}
\newcommand{\abs}[1]{\left|#1\right|}
\newcommand{\Prb}[2][]{ \ifthenelse{\isempty{#1}}
  {\Pr\left[#2\right]}
  {\Pr_{#1}\left[#2\right]} }
\newcommand{\Ex}[2][]{ \ifthenelse{\isempty{#1}}
  {\E\left[#2\right]}
  {\E_{#1}\left[#2\right]} }
\newcommand{\var}[2][]{ \ifthenelse{\isempty{#1}}
  {\mathbf{Var}\left[#2\right]}
  {\mathop{\mathbf{Var}}_{#1}\left[#2\right]} }
\renewcommand{\d}[1]{\ensuremath{\operatorname{d}\!{#1}}}
\newcommand{\KL}{\textup{\textsf{KL}}}
\newcommand{\TV}{\textup{\textsf{TV}}}
\title{Improved Sample Complexity Bounds for Diffusion Model Training}
\author{%
  Shivam~Gupta \\
  UT Austin\\
  \texttt{shivamgupta@utexas.edu} \\
  \And
  Aditya Parulekar\\
  UT Austin\\
  \texttt{adityaup@cs.utexas.edu}\\
  \And
  Eric Price\\
  UT Austin\\
  \texttt{ecprice@cs.utexas.edu}\\
  \And
  Zhiyang Xun\\
  UT Austin\\
  \texttt{zxun@cs.utexas.edu}\\
}
\begin{document}

\maketitle
\def\thefootnote{}\footnotetext{Authors listed in alphabetical order.}

\begin{abstract}
Diffusion models have become the most popular approach to deep generative modeling of images, largely due to their empirical performance and reliability. From a theoretical standpoint, a number of recent works~\cite{chen2022,chen2022improved,benton2023linear} have studied the iteration complexity of sampling, assuming access to an accurate diffusion model. In this work, we focus on understanding the \emph{sample complexity} of training such a model; how many samples are needed to learn an accurate diffusion model using a sufficiently expressive neural network? Prior work~\cite{BMR20} showed bounds polynomial in the dimension, desired Total Variation error, and Wasserstein error. We show an \emph{exponential improvement} in the dependence on Wasserstein error and depth, along with improved dependencies on other relevant parameters.
\end{abstract}

\newenvironment{myquote}%
  {\list{}{\leftmargin=0.3in\rightmargin=0.3in}\item[]}%
  {\endlist}
\section{Introduction}

Score-based diffusion models are currently the most successful methods
for image generation, serving as the backbone for popular text-to-image models such as stable diffusion~\citep{Rombach_2022_CVPR},
Midjourney, and  DALL·E~\citep{dalle2} as well as achieving state-of-the-art performance on other audio and image generation tasks
\citep{first_diffusion_probabilistic_models, first_ddpm, cs_mri, songmri, diffusion_beats_gan}.

The goal of score-based diffusion is to produce a generative model for a possibly complicated distribution $q_0$. This involves two components: \emph{training} a neural network using true samples from $q_0$ to learn estimates of its (smoothed) score functions, and \emph{sampling} using the trained estimates. To this end, consider the following stochastic differential equation, often referred to as the \emph{forward} SDE:
\begin{align}
    \label{eq:forward_SDE}
    \d x_t = - x_t \d t+ \sqrt{2 }\d B_t, \quad x_0 \sim q_0
\end{align}
where $B_t$ represents Brownian motion.
Here, $x_0$ is a sample from the original distribution $q_0$ over $\R^d$, while the distribution of $x_t$ can be computed to be
\[
x_t \sim e^{-t} x_0 + \mathcal N(0, \sigma_t^2 I_d)
\]
for $\sigma_t^2 = 1 - e^{-2t}$. Note that this distribution approaches $\mathcal{N}(0, I_d)$, the stationary distribution of (\ref{eq:forward_SDE}), exponentially fast.  

 Let $q_t$ be the distribution of $x_t$, and let $s_t(y) := \grad \log q_t(y)$ be the associated \emph{score} function. We refer to $q_t$ as the $\sigma_t$-\emph{smoothed} version of $q_0$. Then, starting from a sample $x_T \sim q_T$, there is a reverse SDE associated with the above forward SDE in \eqref{eq:forward_SDE} \citep{anderson1982reverse}:
\begin{align}
\label{eq:reverse_SDE}
    \d x_{T-t} = \left( x_{T-t} + 2 s_{T-t}(x_{T - t})\right)\d t + \sqrt{2} \d B_t.
\end{align}
That is to say, if we begin at a sample $x_T \sim q_T$, following the reverse SDE in \eqref{eq:reverse_SDE} back to time $0$ will give us a sample from the \emph{original} distribution $q_0$. This suggests a natural strategy to sample from $q_0$: start at a large enough time $T$ and follow the reverse SDE back to time $0$. Since $x_T$ approaches $\mathcal N(0, I_d)$ exponentially fast in $T$, our samples at time $0$ will be distributed close to $q_0$. In particular, if $T$ is large enough---logarithmic in $\frac{m_2}{\eps}$---then samples produced by this process will be $\eps$-close in $\TV$ to being drawn from $q_0$. Here $m_2^2$ is the second moment of $q_0$, given by $m_2^2 \coloneqq \E_{x \sim q_0}[\|x\|^2 ]$.

In practice, this continuous reverse SDE \eqref{eq:reverse_SDE} is approximated by a time-discretized process. That is, the score $s_{T-t}$ is approximated at some fixed times $0 = t_0 \le t_1 \le \dots \le t_k < T$, and the reverse process is run using this discretization, holding the score term constant at each time $t_i$. This algorithm is referred to as “DDPM”, as defined in \cite{first_ddpm}.

Chen et al.~\cite{chen2022} proved that as long as we have access to sufficiently accurate estimates for each score $s_t$ at each discretized time, this reverse process produces accurate samples. 
Specifically, for \emph{any} $d$-dimensional distribution $q_0$ supported on the Euclidean Ball of radius $R$, the reverse process can sample from a distribution $\eps$-close in \TV{} to a distribution ($\gamma \cdot R$)-close in 2-Wasserstein to $q_0$ in $\poly(d, 1 / \eps, 1 / \gamma)$ steps, as long as the score estimates $\wh{s}_t$ used at each step are $\widetilde O(\eps^2)$ \footnote{$\wt O$ hides polylogarithmic factors in $d, \frac{1}{\eps}$ and $\frac{1}{\gamma}$} accurate in squared $L^2$. That is, as long as 
\begin{equation}\label{eq:l2error}\E_{x\sim q_t} [\| \wh{s}_t(x) - s_t(x) \|^2] \le \wt O(\eps^2).\end{equation}
In this work, we focus on understanding the \emph{sample complexity} of learning such score estimates. Specifically, we ask: \begin{myquote} How many samples are required for a sufficiently expressive neural network to learn accurate score estimates that generate high-quality samples using the DDPM algorithm? \end{myquote} We consider a training process that employs the empirical minimizer of the \textit{score matching objective} over a class of neural networks as the score estimate. More formally, we consider the following setting:

\begin{setting}
\label{set:1}
Let $\mathcal{F}(D, P, \Theta)$ be the class of functions represented by a fully connected neural network  with ReLU activations and depth $D$, with $P$ parameters, each with absolute value bounded by $\Theta > 2$. Let $\{t_k\}$ be some time discretization of $[0, T]$. Given $m$ i.i.d. samples $x_i \sim q_0$, for each $t \in \{t_k\}$, we take $m$ Gaussian samples $z_i \sim \mathcal{N}(0, \sigma_t^2I_d)$. We take $\wh s_t$ to be the minimizer of the score-matching objective: 
\begin{equation}
\label{eq:setting_score_matching_objective}
   \wh{s}_t =  \argmin_{f \in \mathcal{F}} \frac{1}{m} \sum_{i=1}^{m} \left\lVert f \left( e^{-t} x_i + z_i \right) - \frac{-z_i}{\sigma_{t}^2} \right\lVert_2^2.    
\end{equation}
We then use $\{\wh{s}_{t_k}\}$ as the score estimates in the DDPM algorithm. 
\end{setting}

This is the same setting as is used in practice, except that in practice~\eqref{eq:setting_score_matching_objective} is optimized with SGD rather than globally. As in \cite{chen2022}, we aim to output samples from a distribution that is $\varepsilon$-close in \TV{} to a distribution that is $\gamma R$ or $\gamma m_2$-close in Wasserstein to $q_0$. We thus seek to bound the number of samples $m$ to get such a good output distribution, in terms of the parameters of \cref{set:1} and $\eps, \gamma$.

Block et al.~\cite{BMR20} first studied the sample complexity of learning score estimates using the empirical minimizer of the score-matching objective \eqref{eq:setting_score_matching_objective}. They showed sample complexity bounds that depend on the Rademacher complexity of $\mathcal{F}$.  Applied to our setting and using known bounds on Rademacher complexity of neural networks, in~\cref{set:1} their result implies a sample complexity bound of $\wt O\left( \frac{d^{5/2}}{\gamma^3 \eps^2} (\Theta^2 P)^{D} \sqrt{D}  \right)$. See Appendix~\ref{appendix:discussion_of_block} for a detailed discussion.







Following the analysis of DDPM by~\cite{chen2022}, more recent work on the iteration complexity of \emph{sampling}~\citep{Chen2022ImprovedAO, benton2023linear}  has given an \emph{exponential improvement} on the Wasserstein accuracy $\gamma$, as well as replacing the uniform bound $R$ by $m_2$ --- the square root of the second moment.  In particular,~\cite{benton2023linear} show that $\widetilde{O}(\frac{d}{\eps^2} \log^2 \frac{1}{\gamma})$ iterations suffice to sample from a distribution that is $\gamma \cdot m_2$ close to $q_0$ in $2$-Wasserstein, as long as the score estimates are $\wt O\left({\eps^2} / {\sigma_t^2}\right)$ accurate, i.e., 
\begin{equation}\label{eq:l2error_recent}\E_{x\sim q_t} [\| \wh{s}_t(x) - s_t(x) \|^2] \le \wt O\left(\frac{\eps^2}{\sigma_t^2}\right).\end{equation}
Inspired by these works, we ask: is it possible to achieve a similar exponential improvement in the sample complexity of \emph{learning} score estimates using a neural network?
\subsection{Our Results}

We give a new sample complexity 
bound of $\wt O(\frac{d^2}{\eps^3} P D \log \Theta \log^3 \frac{1}{\gamma})$ for learning scores to sufficient accuracy for sampling via DDPM.  Learning is done by optimizing the same score matching objective that is used in practice.  Compared to~\cite{BMR20}, our bound has \emph{exponentially} better dependence on $\Theta, \gamma$, and $D$, and a better polynomial dependence on $d$ and $P$, at the cost of a worse polynomial dependence in $\eps$.

As discussed above, for the sampling process, it suffices for the score estimate $s_t$ at time $t$ to have error $\wt O\left({\eps^2} / {\sigma^2_t}\right)$.  This means that scores at larger times need higher accuracy, but they are also intuitively easier to estimate because the corresponding distribution is smoother.  Our observation is that the two effects cancel out: we show that the sample complexity to achieve this accuracy for a fixed $t$ is \emph{independent} of $\sigma_t$.  However, this only holds once we weaken the accuracy guarantee slightly (from $L^2$ error to ``$L^2$ error over a $1-\delta$ fraction of the mass'').  We show that this weaker guarantee is nevertheless sufficient to enable accurate sampling. Our approach lets us run the SDE to a very small final $\sigma_t = \gamma$, which yields a final $\gamma$ dependence of $O(\log^3\frac{1}{\gamma})$ via a union bound over the times $t$ that we need score estimates $s_t$ for. In contrast, the approach in~\cite{BMR20} gets the stronger $L^2$-accuracy, but requires a $\poly(\frac{1}{\sigma_t})$ dependence in sample complexity for each score $s_t$; this leads to their $\poly(\frac{1}{\gamma})$ sample complexity overall.


To state our results formally, we make the following assumptions on the data distribution and the training process:
\begin{enumerate}
    \item[\textbf{A1}] The second moment $m_2^2$ of $q_0$ is between $1 / \poly(d)$ and $\poly(d)$. 
    \item[\textbf{A2}] For the score $s_t$ used at each step, there exists some function $f \in \mathcal{F}(D, P, \Theta)$ (as defined in \cref{set:1}) such that the $L^2$ error, $\E_{x \sim q_{t}}[\|f(x) - s_{t}(x)\|^2]$, is sufficiently small. 
\end{enumerate}

Our main theorem is as follows\footnote{A previous version of this paper erroneously claimed an $\varepsilon^{-3}$ dependence in \cref{cor:end_to_end_neural}; we thank~\cite{GTKBA} for correction.}:

\begin{restatable}{theorem}{end_to_end_neural}
    \label{cor:end_to_end_neural}
    In \cref{set:1}, suppose assumptions \textbf{A1} and \textbf{A2} hold. 
    For any $\varepsilon\in(0,1)$ and $\gamma\in(0,1)$,
there exists a discretization schedule
such that the score functions trained from
      \[m \ge \wt O\left(\frac{d^2  P D}{\eps^5}  \cdot  \log \Theta \cdot {\log^3\frac{1}{\gamma}}\right)\]
    i.i.d.\ samples of $q_0$ yield, with
   $0.99$ probability over training draw, a DDPM sampler whose output distribution is $\varepsilon$-close in \TV{} to a distribution $\gamma m_2$-close to $q$ in 2-Wasserstein.
\end{restatable}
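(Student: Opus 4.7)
The plan is to combine three ingredients: (i) a sample-complexity bound for learning each individual score $s_t$ via empirical risk minimization that is \emph{uniform in $t$}, i.e.\ with no $\poly(1/\sigma_t)$ dependence; (ii) a DDPM sampling guarantee that only requires the relaxed accuracy notion of ``$L^2$ error on a $1-\delta$ mass fraction''; and (iii) a union bound over the discretization times combined with the Benton et al.\ iteration-complexity result.

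For step (i), fix a single time $t$ and consider the ERM $\wh{s}_t$ over $\mathcal{F}(D,P,\Theta)$. The naive obstacle is that the regression target $-z_i/\sigma_t^2$ has magnitude of order $\sqrt{d}/\sigma_t$, so a direct Rademacher/uniform-convergence argument on the unbounded squared loss pays $\poly(1/\sigma_t)$ in $m$. I would sidestep this by restricting attention to a high-probability ``typical'' set $E_t$ with $q_t(E_t) \ge 1-\delta$, chosen so that the true score $s_t$ is bounded by $O(\sqrt{d\log(1/\delta)})$ on $E_t$ independently of $\sigma_t$ (using that $q_t$ is the convolution $e^{-t}q_0 * \mathcal{N}(0,\sigma_t^2 I)$ and Gaussian concentration of $z$). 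I would then clip both the network outputs and the regression targets at a matching threshold and invoke a covering-number argument: a depth-$D$, $P$-parameter ReLU network with weights bounded by $\Theta$ admits an $\eta$-cover in sup-norm on a bounded input domain of size $\exp(O(PD\log(\Theta/\eta)))$. Standard uniform convergence then yields
\[
\E_{x \sim q_t \mid E_t}\|\wh{s}_t(x) - s_t(x)\|^2 \le \wt O(\eps^2/\sigma_t^2)
\]
using $m = \wt O\bigl(d PD \log \Theta / \eps^2\bigr)$ samples, with no $\sigma_t$ in $m$ because both sides of the comparison scale identically in $\sigma_t$.

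For step (ii), I would plug this ``$L^2$ error on a $(1-\delta)$ fraction of mass'' guarantee into a strengthened version of the Benton et al.\ Girsanov analysis. That analysis integrates $\|\wh{s}_t - s_t\|^2$ against $q_t$ along the reverse SDE; decomposing the integral into contributions from $E_t$ (controlled by step (i)) and from $E_t^c$ (probability $\le \delta$ but potentially large in norm), I would argue via a change-of-measure/stopping-time reduction that the bad-event contribution adds at most a $\sqrt\delta$-type additive TV error. Taking $\delta = \poly(\eps,\gamma/d)$ costs only a $\log(1/\delta)$ blow-up in the score-learning step. The main obstacle is precisely this step: verifying that the Girsanov argument and the exponential integrals in \cite{chen2022,benton2023linear} tolerate the exceptional set without blowing up. I expect this to require either comparing to a clipped ``target score'' that matches what the network can actually learn on $E_t$, or introducing a stopping time on exit from $E_t$ in the reverse process and bounding its probability.

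Finally, for step (iii), Benton et al.\ gives $K = \wt O\bigl(\frac{d}{\eps^2} \log^2\tfrac{1}{\gamma}\bigr)$ discretization times needed to reach $\sigma_t = \gamma$. Union bounding step (i) over these $K$ times, each with failure probability $1/(100K)$ and accuracy $\eps^2/(\sigma_t^2 K)$, and combining with step (ii), gives an overall TV error of $\eps$ against a distribution $\gamma m_2$-close in $2$-Wasserstein to $q_0$. Tracking the parameters: the $d^2/\eps^3$ factor arises from the $d/\eps^2$ iteration count times the $d/\eps$ per-step accuracy target after absorbing the $\delta$ tradeoff, and the $\log^3(1/\gamma)$ factor from the union bound over $\log^2(1/\gamma)$ steps each contributing an extra $\log(1/\gamma)$ through the high-probability concentration radius.
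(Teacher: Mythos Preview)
Your three-step outline—per-time learning uniform in $\sigma_t$, a relaxed sampling guarantee that tolerates an exceptional set, and a union bound over the Benton et al.\ schedule—matches the paper's structure exactly. The gap is in step (i): clipping the network output controls only $\mathrm{clip}(\wh s_t)$, not the raw ERM $\wh s_t$ that \cref{set:1} actually feeds into DDPM. With targets and true score bounded by $B\asymp\sqrt{d\log(1/\delta)}/\sigma_t$ on the typical set (not ``independently of $\sigma_t$'' as you wrote), uniform convergence over a cover of the clipped class yields a bound on $\E\|\mathrm{clip}(\wh s_t(x))-s_t(x)\|^2$, because projection onto the $B$-ball is a contraction so the empirical clipped loss of $\wh s_t$ sits below that of $s^*$. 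But nothing prevents the unclipped $\wh s_t$ from taking arbitrarily large values on $E_t$ away from the training points, so your stated conclusion $\E_{x\sim q_t\mid E_t}\|\wh s_t(x)-s_t(x)\|^2\le\wt O(\eps^2/\sigma_t^2)$ does not follow; and replacing $\wh s_t$ by $\mathrm{clip}(\wh s_t)$ in DDPM would change the setting.

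The paper handles this not by clipping the hypothesis class but by working with the quantile notion $D^{\delta_{\mathrm{score}}}_{q_t}(\wh s_t,s_t)\le\eps/\sigma_t$ and a per-sample case split in the excess loss $l'(s,x,z)=\|s(x)-s_t(x)\|^2+2\langle s(x)-s_t(x),\Delta\rangle$, where $\|\Delta\|\lesssim B$ with high probability. When $\|s(x_i)-s_t(x_i)\|>O(B)$ the quadratic term dominates the cross term \emph{deterministically for every} $z_i$, so that sample already contributes a large positive value; when $\|s(x_i)-s_t(x_i)\|\le O(B)$ the contribution has variance $\lesssim B^2\|s(x_i)-s_t(x_i)\|^2$ and Bernstein applies. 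This case split is precisely what lets the argument reject every bad candidate in the \emph{unclipped} class, and it naturally produces a quantile rather than a conditional-$L^2$ guarantee. The quantile form also makes step (ii) clean: one conditions on the pathwise event that $\|\wh s_{T-t_k}(x_{T-t_k})-s_{T-t_k}(x_{T-t_k})\|\le\eps/\sigma_{T-t_k}$ at every $k$, under which the Girsanov integrand is bounded deterministically—avoiding the difficulty you anticipate, since with your conditional-$L^2$ notion the law of $x_{T-t_k}$ under the joint path event $\bigcap_k\{x_{T-t_k}\in E_{T-t_k}\}$ is not the marginal $q_{T-t_k}\mid E_{T-t_k}$ to which your per-time bound applies.
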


We remark that assumption \textbf{A1} is made for a simpler presentation of our theorem; the (logarithmic) dependence on the second moment is analyzed explicitly in \cref{thm:sample_complexity_full}.
The quantitative bound for \textbf{A2}---exactly how small the $L^2$ error needs to be---is given in detail in \cref{thm:sample_complexity_quantitative}.



\paragraph{Barrier for $L^2$ accuracy.}
As mentioned before, previous works have been using $L^2$ accurate score estimation either as the assumption for sampling or the goal for training. 
Ideally, one would like to simply show that the ERM of the score matching objective will have bounded $L^2$ error of $\eps^2/\sigma^2$ with a number of samples that scales polylogarithmically in $\frac{1}{\sigma}$.  Unfortunately, this is \emph{false}.  In fact, it is information-theoretically impossible to achieve this in general without $\poly(\frac{1}{\sigma})$ samples.  Since sampling 
to $\gamma m_2$ Wasserstein error needs to consider a final $\sigma_t = \gamma$, this leads to a $\poly(\frac{1}{\gamma})$ dependence.
See Figure~\ref{fig:info_theoretic_lower_bound}, or the discussion in Section~\ref{sec:hard-instances}, for a hard instance.

\begin{figure*}[ht]
    \centering
    \hspace{-1.1cm}
    \begin{minipage}{.305\textwidth}
    \includegraphics[width=1.2\textwidth]{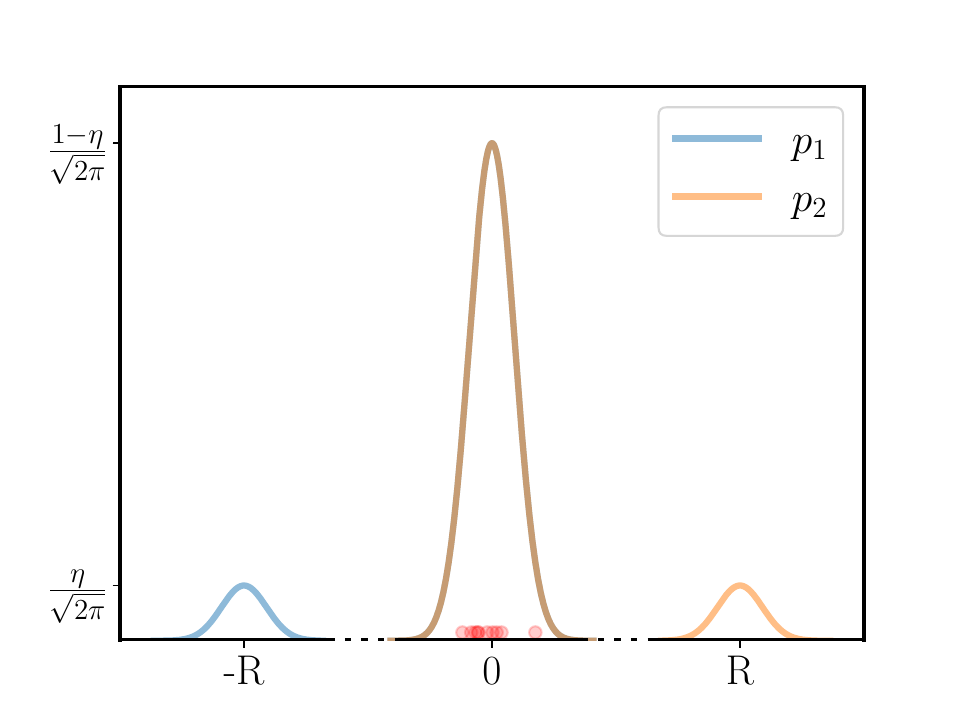}
    \end{minipage}
    \hspace{0.4cm}
    \begin{minipage}{.305\textwidth}
       \includegraphics[width=1.2\textwidth]{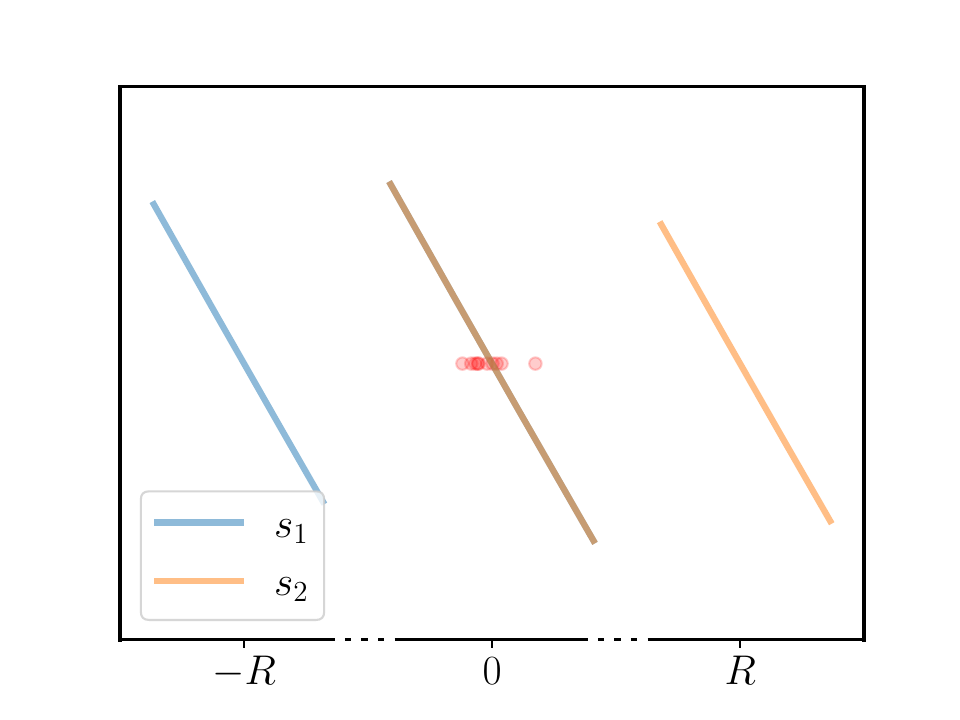} 
    \end{minipage}
    \hspace{0.6cm}
    \begin{minipage}{.305\textwidth}
        \vspace{0.15cm}
        \includegraphics[width=1.2\textwidth]{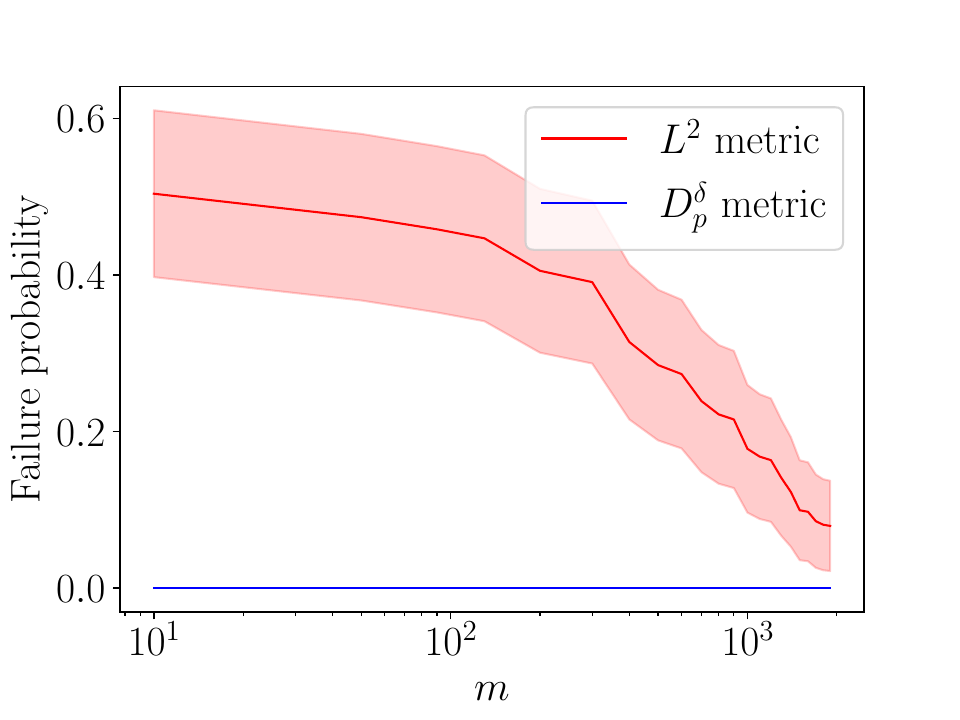}
    \end{minipage}
    \caption{Given $o\left( \frac{1}{\eta} \right)$ samples from either $p_1 = (1 - \eta)\mathcal N(0, 1) + \eta \mathcal N(-R, 1)$, or $p_2 = (1 - \eta) \mathcal N(0, 1) + \eta \mathcal N(R, 1)$ we will only see samples from the main Gaussian with high probability, and cannot distinguish between them. However, if we pick the wrong score function, the $L^2$ error incurred is large - about $\eta R^2$. On the right, we take $\eta = 0.001, R = 10000, \delta = 0.01$. We plot the probability that the ERM has error larger than $0$ in the $L^2$ sense, and our $D_p^\delta$ sense.}
    \label{fig:info_theoretic_lower_bound}
\end{figure*}

In the example in Figure~\ref{fig:info_theoretic_lower_bound}, score matching + DDPM still \emph{works} to sample from the distribution with sample complexity scaling with $\poly(\log \frac{1}{\gamma})$; the problem lies in the theoretical justification for it.  Given that it is impossible to learn the score in $L^2$ to sufficient accuracy with fewer than $\poly(\frac{1}{\gamma})$ samples, such a justification needs a different measure of estimation error.  We will introduce such a measure, showing (1) that it will be small for all relevant times $t$ after a number of samples that scales \emph{polylogarithmically} in $\frac{1}{\gamma}$, and (2) that this measure suffices for fast sampling via the reverse SDE.

The problem with measuring error in $L^2$ comes from outliers: rare, large errors can increase the $L^2$ error while not being observed on the training set.  We proved that we can relax the $L^2$ accuracy requirement for diffusion model to the $(1-\delta)$-quantile error:
For distribution $p$, and functions $f, g$, we say that
\begin{equation}
\label{eq:D_p^delta_def}
D_{p}^\delta(f, g) \leq \eps \iff \Pr_{x \sim p}\left[\|f(x) - g(x)\|_2 \ge \eps \right] \le \delta.
\end{equation}
We adapt~\citep{benton2023linear} to show that learning the score in our new outlier-robust sense also suffices  for sampling, if we have accurate score estimates at each relevant discretization time.
\begin{restatable}{lemma}{samplingGuarantee}
\label{thm:informal_sampling_guarantee}
\label{thm:sampling_guarantee}
Let $q$ be a distribution over $\R^d$ with second moment $m_2^2$ between $1/\text{poly}(d)$ and $\text{poly}(d)$. For any $\gamma > 0$, there exist $N = \widetilde{O}(\frac{d}{\eps^2+ \delta^2}\log^2\frac{1}{\gamma})$ discretization times $0 = t_0 < t_1 < \dots < t_N < T$ such that if the following holds for every $k \in \{0, \dots, N - 1\}$:
     \begin{align*}
         D_{q_{T - t_k}}^{\delta/N}(\wh{s}_{T - t_k}, s_{T - t_k}) \le \frac{\eps}{\sigma_{T - t_k}}
     \end{align*}
     then DDPM can sample from a distribution that is within  $\widetilde{O}(\delta + \eps\sqrt{\log\left(d/{\gamma}\right)})$ in $\TV$ distance to $q_{\gamma}$ in $N$ steps.
\end{restatable}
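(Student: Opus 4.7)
The plan is to reduce the statement to the known analysis of \cite{benton2023linear} by patching up the bad sets on which our score estimates fail. Concretely, I would pick the $N$ discretization times as in \cite{benton2023linear}: non-uniform steps whose sizes are (essentially) proportional to $\sigma_{T-t}^2$, so that the total number of steps scales as $\wt O(\frac{d}{\eps^2+\delta^2}\log^2\frac{1}{\gamma})$ while the KL contribution per step remains controlled. For each $k$, define the bad set
\[
B_k \coloneqq \left\{x \in \R^d : \|\wh{s}_{T-t_k}(x) - s_{T-t_k}(x)\| > \eps/\sigma_{T-t_k}\right\},
\]
which by the $D_{q_{T-t_k}}^{\delta/N}$ hypothesis satisfies $q_{T-t_k}(B_k) \le \delta/N$. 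Then define a ``repaired'' score $\tilde s_{T-t_k}(x) \coloneqq s_{T-t_k}(x)\1_{B_k}(x) + \wh{s}_{T-t_k}(x)\1_{B_k^c}(x)$, which differs from the true score by at most $\eps/\sigma_{T-t_k}$ \emph{pointwise}.

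Next, I would apply the Girsanov-based argument of \cite{benton2023linear} verbatim to the DDPM run with $\tilde s$ in place of $\wh s$. Since $\tilde s$ is uniformly $(\eps/\sigma)$-accurate, their analysis yields a KL bound between the true reverse process (call its law $P$) and the DDPM-with-$\tilde s$ process (law $Q$) whose score-error terms contribute $O(\eps^2/\sigma_{T-t_k}^2)$ times the chosen step sizes; these telescope under the Benton discretization, and Pinsker converts the result to a TV bound of order $\eps\sqrt{\log(d/\gamma)}$ (absorbing the usual initialization and truncation errors into $\wt O$).

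Finally, I would compare the DDPM-with-$\wh s$ process (law $R$) to $Q$ by a synchronous coupling with the same Gaussian noise at every step: since $\tilde s$ and $\wh s$ agree off $\cup_k B_k$, the two trajectories remain identical until the first step at which the trajectory lands in the corresponding $B_k$. Hence $\TV(R,Q)$ is at most the probability under this coupling that some $B_k$ is ever visited. Measuring that probability under $Q$ and then changing measure to $P$, it is at most $\Pr_P[\text{visit some }B_k] + \TV(P,Q) \le \sum_k q_{T-t_k}(B_k) + \TV(P,Q) \le \delta + \TV(P,Q)$, using that the marginal of $P$ at $t_k$ is exactly $q_{T-t_k}$. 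Combining via the triangle inequality, the total TV from $q_\gamma$ is $\le \delta + 2\,\TV(P,Q) + o(1) = \wt O(\delta + \eps\sqrt{\log(d/\gamma)})$.

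The main obstacle is the last step: one must avoid circularity, since a naive bound on $\Pr_R[\text{visit } B_k]$ requires knowing $\TV(R,Q)$, which is exactly what we are trying to bound. Going through $P$ as an intermediate reference (where marginals at each $t_k$ are \emph{exactly} $q_{T-t_k}$) sidesteps this, at the small cost of paying $\TV(P,Q)$ twice. A secondary subtlety is that the Benton analysis implicitly assumes the score map is well-behaved enough for Girsanov to apply; using $\tilde s$ rather than $\wh s$ is precisely what lets us invoke it without worrying about large deviations of $\wh s$ on the tails.
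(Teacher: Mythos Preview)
Your approach is correct and follows the same core idea as the paper: introduce an intermediate ``repaired'' process whose score is uniformly close to the true score (so Girsanov applies cleanly), and pay a separate $O(\delta)$ in TV for the bad event where the repair differs from $\wh s$. The organization differs slightly. You repair \emph{pointwise}, setting $\tilde s_{T-t_k}(x)=s_{T-t_k}(x)$ on $B_k$ and $\wh s_{T-t_k}(x)$ off it, so that $\|\tilde s-s\|\le \eps/\sigma$ everywhere; you then invoke the full \cite{benton2023linear} pipeline as a black box on $\tilde s$ and compare $R$ to $Q$ by coupling, changing measure to $P$ to evaluate the bad-event probability. The paper instead defines a \emph{path-dependent} intermediate $\widetilde Q$ that runs with the true score $s$ until the accumulated error $\sum_i\|\wh s-s\|^2(t_{i+1}-t_i)$ first exceeds $\eps^2$ and with $\wh s$ thereafter; this makes the accumulated drift difference between $\widetilde Q$ and $\overline Q$ deterministically bounded, so Girsanov compares $\widetilde Q$ to $\overline Q$ directly, while the bad-event comparison is between $\widetilde Q$ and $Q_{\dis}$. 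Your route is arguably more modular (it reuses \cite{benton2023linear} wholesale), at the cost of paying $\TV(P,Q)$ twice via the change-of-measure step you flagged; the paper's path-dependent switch avoids that detour but requires constructing the hybrid process explicitly. Both yield the same $\wt O(\delta+\eps\sqrt{\log(d/\gamma)})$ bound.
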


With this relaxed requirement, our main technical lemma shows that for each fixed time, a good $(1-\delta)$-quantile accuracy can be achieved with a small number of samples, independent of $\sigma_t$.

\begin{restatable}[Main Lemma]{lemma}{neuralnetthm}
 \label{thm:neural_net}
   In \cref{set:1}, suppose assumptions \textbf{A1} and \textbf{A2} hold. By taking $m$ i.i.d. samples from $q_0$ to train score function $\wh s_t$, when \[ m > \wt O\left(\frac{(d + \log \frac{1}{\delta_{\train}}) \cdot P D}{\eps^2 \delta_\score}  \cdot \log \left(\frac{ \Theta}{\delta_\train} \right)\right),\]
    with probability $1 - \delta_\train$, the score estimate $\wh{s_t}$ satisfies
    \[
        D^{\delta_{\score}}_{q_t}(\wh{s_t}, s_t) \le \eps / \sigma_t.
    \]
\end{restatable}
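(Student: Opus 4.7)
The strategy is to (i) reduce the quantile guarantee to an $L^2$ excess-risk bound via Markov's inequality, and then (ii) obtain the $L^2$ bound from a uniform-convergence analysis of score matching. For (i), Markov's inequality gives
\[
\Pr_{x \sim q_t}\!\left[\|\wh s_t(x) - s_t(x)\| > \eps/\sigma_t\right] \le \frac{\sigma_t^2}{\eps^2}\,\E_{q_t}\|\wh s_t - s_t\|^2,
\]
so it suffices to prove $\E_{q_t}\|\wh s_t - s_t\|^2 \le \delta_\score\,\eps^2/\sigma_t^2$ with probability $\ge 1 - \delta_\train$. The denoising-score-matching identity shows that~\eqref{eq:setting_score_matching_objective} is, up to an $f$-independent constant, empirical $L^2$ regression with targets $y = -z/\sigma_t^2$. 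By Tweedie's formula $\E[y \mid x_t] = s_t(x_t)$, so the residual $\xi := y - s_t(x_t)$ is mean-zero given $x_t$ and satisfies
\[
\E\|\xi\|^2 = \E\|z\|^2/\sigma_t^4 - \E\|s_t(x_t)\|^2 \le d/\sigma_t^2.
\]
This is exactly the scale that will cancel against the Markov factor $\sigma_t^2/\eps^2$.

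\textbf{Uniform convergence.} Truncate the $m$ targets $-z_i/\sigma_t^2$ at radius $R = \wt O(\sqrt{d\log(m/\delta_\train)}/\sigma_t)$; since each $z_i$ is Gaussian this preserves all $m$ targets with probability $\ge 1 - \delta_\train/2$. Clip every candidate output $f(x)$ to the ball of radius $R$ as well; this only improves the loss against $s_t$ on the typical-mass region, where $\|s_t\| = O(\sqrt d/\sigma_t)$. The squared loss $\ell_f$ is now bounded by $O(d/\sigma_t^2)$ and satisfies the Bernstein-type variance condition $\Var(\ell_f - \ell_{s_t}) \lesssim (d/\sigma_t^2)\cdot\E[\ell_f - \ell_{s_t}]$. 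Applying Bernstein's inequality uniformly over an $\eta$-net of $\mathcal F$, and extending to all of $\mathcal F$ via the Lipschitz dependence of a ReLU network on its parameters, combined with the standard parameter-space covering bound $\log\mathcal N(\mathcal F, \eta) = \wt O(PD\log(\Theta/\eta))$, yields a fast-rate oracle inequality
\[
\E_{q_t}\|\wh s_t - s_t\|^2 \le \inf_{f \in \mathcal F}\E_{q_t}\|f - s_t\|^2 + \wt O\!\left(\frac{(d + \log(1/\delta_\train))\,PD\log(\Theta/\delta_\train)}{m\,\sigma_t^2}\right)
\]
with probability $\ge 1-\delta_\train$. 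Combining with the quantitative version of \textbf{A2} (which ensures the approximation term is at most $\delta_\score\,\eps^2/(2\sigma_t^2)$) and the Markov step finishes the proof at the claimed $m$.

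\textbf{Main obstacle.} The essential difficulty, and the reason the bound can be independent of $\sigma_t$ (equivalently, polylogarithmic in $1/\gamma$), is that the regression targets $-z/\sigma_t^2$ are unbounded with variance $\Theta(d/\sigma_t^2)$. A Hoeffding- or Rademacher-style argument based on the range of the loss would scale the estimation error by the \emph{squared} truncation radius $\wt O(d/\sigma_t^2)$, reintroducing a $\poly(1/\sigma_t)$, hence $\poly(1/\gamma)$, factor in $m$ — exactly the barrier highlighted after \cref{thm:sampling_guarantee}. The workaround is Bernstein/localization, so that the \emph{variance} $O(d/\sigma_t^2)$, not the squared range, controls the deviation; this variance cancels against the $\eps^2/\sigma_t^2$ from Markov, leaving a clean $\sigma_t$-free sample size. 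Carrying out this cancellation rigorously — while handling the Gaussian-tail unboundedness of the targets via careful truncation, and while using a parameter-space covering to avoid the exponential-in-$D$ output Lipschitz constant — is the main technical work.
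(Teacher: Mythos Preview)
Your plan has a genuine gap: step (i) reduces the quantile guarantee to the $L^2$ bound $\E_{q_t}\|\wh s_t - s_t\|^2 \le \delta_\score\eps^2/\sigma_t^2$ for the \emph{unclipped} ERM $\wh s_t$, and you then claim to establish this bound with sample complexity independent of $\sigma_t$. But the paper's own \cref{sec:hard-instances} (Lemmas~\ref{lem:info_theoretic_lower_bound} and~\ref{lem:score_matching_lower_bound}) shows this is impossible: there exist distributions and hypothesis classes (compatible with A1, A2) for which the score-matching ERM necessarily has $L^2$ error $\gtrsim \eps^2/\sigma_t^2$ unless $m = \poly(1/\sigma_t)$. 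So the fast-rate oracle inequality you write down for $\wh s_t$ cannot hold as stated. The specific failure is the clipping step: once you clip candidate outputs to radius $R$, the Bernstein/localization analysis controls only $\text{clip}_R(\wh s_t)$, not $\wh s_t$. Your remark that ``clipping only improves the loss'' is about the \emph{empirical} loss; it does not let the population $L^2$ bound transfer back to the unclipped function, which can be arbitrarily large on rare $x$ and hence have unbounded $L^2$ error even when its clipped version is close to $s_t$.

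The paper's proof therefore never attempts an $L^2$ bound on $\wh s_t$. Instead it works directly with the quantile hypothesis: for each candidate $s$ with $\Pr[\|s(x)-s_t(x)\|>\eps/\sigma_t]>\delta_\score$, it shows the empirical excess loss $\wh\E[l(s,x,z)-l(s^*,x,z)]$ is strictly positive. The key device (Lemma~\ref{lem:funcconcentrate}) is a per-sample case split: when $\|s(x_i)-s_t(x_i)\|$ exceeds the truncation scale $O(B/\sigma_t)$, the excess loss is deterministically at least $B^2/\sigma_t^2$ regardless of $z_i$, so one clips the \emph{loss contribution} (not the function) at that level; on the remaining samples the excess loss is bounded with variance $\lesssim (B^2/\sigma_t^2)\|s(x_i)-s_t(x_i)\|^2$, and Bernstein applies. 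Because a $\delta_\score$-fraction of samples satisfy $\|s(x_i)-s_t(x_i)\|\ge\eps/\sigma_t$, the clipped sum is $\gtrsim \delta_\score\eps^2/\sigma_t^2$, and concentration makes the empirical excess positive. This argument exploits the quantile structure of the ``bad'' set directly and is not a consequence of any $L^2$ control on $\wh s_t$; your Markov-then-$L^2$ reduction goes in the wrong direction.
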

Combining \cref{thm:neural_net} with \cref{thm:sampling_guarantee} gives us \cref{cor:end_to_end_neural}. The proof is detailed in \cref{sec:end_to_end}.

\section{Related Work}
Score-based diffusion models were first introduced in \citep{first_diffusion_probabilistic_models} as a way to tractably sample from complex distributions using deep learning. Since then, many empirically validated techniques have been developed to improve the sample quality and performance of diffusion models \citep{first_ddpm, improved_denoising, yang_ermon_improved_techniques,song2021scorebased, song2021maximum}. More recently, diffusion models have found several exciting applications, including medical imaging and compressed sensing \citep{cs_mri,songmri}, and text-to-image models like DALL$\cdot$E 2 \citep{dalle2} and Stable Diffusion \citep{Rombach_2022_CVPR}.

Recently, a number of works have begun to develop a theoretical understanding of diffusion. Different aspects have been studied -- the sample complexity of training with the score-matching objective \citep{BMR20}, the number of steps needed to sample given accurate scores \citep{chen2022, Chen2022ImprovedAO, chen2022improved, Chenetal23flowode, benton2023linear, bortoli2021diffusion, lee2023convergence}, and the relationship to more traditional methods such as maximum likelihood \citep{moitra_2023, DBLP:conf/iclr/KoehlerHR23}.

On the \emph{training} side, \cite{BMR20} showed that for distributions bounded by $R$, the score-matching objective learns the score of $q_{\gamma}$ in $L^2$ using a number of samples that scales polynomially in $\frac{1}{\gamma}$. On the other hand, for \emph{sampling} using the reverse SDE in \eqref{eq:reverse_SDE}, \citep{Chen2022ImprovedAO, benton2023linear} showed that the number of steps to sample from $q_\gamma$ scales polylogarithmically in $\frac{1}{\gamma}$ given $L^2$ approximations to the scores. 

Our main contribution is to show that while learning the score in $L^2$ \emph{requires} a number of samples that scales polynomially in $\frac{1}{\gamma}$, the score-matching objective \emph{does learn} the score in a weaker sense with sample complexity depending only \emph{polylogarithmically} in $\frac{1}{\gamma}$. Moreover, this weaker guarantee is sufficient to maintain the polylogarithmic dependence on $\frac{1}{\gamma}$ on the number of steps to sample with $\gamma\cdot m_2$ 2-Wasserstein error.

Our work, as well as \cite{BMR20}, assumes the score can be accurately represented by a small function class such as neural networks. Another line of work examines what is possible for more general distributions~\cite{score_approximation,Oko2023DiffusionMA}. For example, \cite{score_approximation} shows that for general subgaussian distributions, the scores can be learned to $L^2$ error $\eps/\sigma_t$ with $(d/\eps)^{O(d)}$ samples.
Our approach avoids this exponential dependence, but assumes that neural networks can represent the score.



\section{Proof Overview}

In this section, we outline the proofs for two key lemmas: \cref{thm:neural_net} in \cref{sec:proof_overview_training} and \cref{thm:sampling_guarantee} in \cref{sec:proof_overview_sampling}. The complete proofs for these lemmas are provided in \cref{sec:sample_complexity_appendix} and \cref{appendix:sampling} respectively.


\subsection{Training}
\label{sec:proof_overview_training}
We show that the score-matching objective \eqref{eq:setting_score_matching_objective} concentrates
well enough that the ERM is close to the true minimizer.  Prior work on sampling~\cite{benton2023linear} shows that estimating the $\sigma$-smoothed score to
$L_2^2$ error of $\frac{\eps^2}{\sigma^2}$  suffices for sampling; our goal is to get something close to this with a sample complexity independent of $\sigma$.

\paragraph{Background: Minimizing the true expectation gives the true score.} In this section, we show that if we could compute the true expectation of the score matching objective, instead of just the empirical expectation, then the true score would be its minimizer. For
a fixed $t$, let $\sigma = \sigma_t$ and $p$ be the distribution of
$e^{-t} x$ for $x \sim q_0$.  We can think of a joint distribution of
$(y, x, z)$ where $y \sim p$ and $z \sim N(0, \sigma^2 I_d)$ are
independent, and $x = y + z$ is drawn according to $q_t$.  With this change of variables, the score
matching objective used in \eqref{eq:setting_score_matching_objective}
is
\[
  \E_{x, z}\left[ \norm{s(x) - \frac{-z}{\sigma^2}}_2^2 \right].
\]
\allowdisplaybreaks
Because $x = y + z$ for Gaussian $z$, Tweedie's formula states that
the true score $s^* = s_t$ is given by
\[
  s^*(x) = \E_{z \mid x}\left[\frac{-z}{\sigma^2}\right].
\]
Define $\Delta = s^*(x) - \frac{-z}{\sigma^2}$, so
$\E[\Delta \mid x] = 0$.  Therefore for any $x$,
\begin{align}
  l(s, x, z)
  &:= \norm{s(x) - \frac{-z}{\sigma^2}}_2^2\\
  &= \norm{s(x) - s^*(x) + \Delta}^2\notag\\
                                        &= \norm{s(x) - s^*(x)}^2 + 2 \inner{s(x) - s^*(x), \Delta} + \norm{\Delta}^2.\label{eq:Delta}
\end{align}
The third term does not depend on $s$, and so does not affect the minimizer of this loss function. Also, for every $x$, the second term is zero on average over $(z \mid x)$, so we have
\begin{align*}
  &\argmin_s \E_{x,z}[l(s,x,z)] = \argmin_s\E_{x,z}[\norm{s(x) - s^*(x)}^2]
\end{align*}
This shows that the score matching
objective is indeed minimized by the true score.  Moreover, an
$\eps$-approximate optimizer of $l(s)$ will be close in $L^2$, as
needed by prior samplers.

\paragraph{Understanding the ERM.}
The algorithm chooses the score function $s$ minimizing the empirical loss,
\begin{align*}
  \hat{\E_{x,z}}[l(s,x,z)]&\coloneqq \frac{1}{m}\sum_{i=1}^m l(s, x_i, z_i)\\
  &= \hat{\E_{x,z}}[\norm{s(x) - s^*(x)}^2 + 2 \inner{s(x) - s^*(x), \Delta} + \norm{\Delta}^2].
\end{align*}
Again, the $\hat{\E}[\norm{\Delta}^2]$ term is independent of $s$, so
it has no effect on the minimizer and we can drop it from the loss
function. We thus define
\begin{align}
  l'(s, x, z) \coloneqq \norm{s(x) - s^*(x)}^2 + 2 \inner{s(x) - s^*(x), \Delta}\label{eq:newl}
\end{align}
that satisfies $l'(s^*, x, z) = 0$ and $\E[l'(s, x, z)] = \E[\norm{s(x) - s^*(x)}^2]$. Our goal is now to to show that
\begin{align}\label{eq:lprimegoal}
  \hat{\E_{x,z}}[l'(s,x,z)] > 0
\end{align}
for all candidate score functions $s$ that are ``far'' from $s^*$.  This would ensure that the empirical minimizer of the score matching objective is not ``far'' from $s^*$. 
 To do this, we show that ~\eqref{eq:lprimegoal} is true with high probability for each
individual $s$, then take a union bound  over a net.

\paragraph{Boundedness of $\Delta$.} Now, $z \sim N(0, \sigma^2 I_d)$
is technically unbounded, but is exponentially close to being bounded:
$\norm{z} \lesssim \sigma \sqrt{d}$ with overwhelming probability.
So for the purpose of this proof overview, imagine that $z$ were drawn
from a distribution of bounded norm, i.e., $\norm{z} \leq B\sigma$
always; the full proof (given in Appendix \ref{sec:sample_complexity_appendix}) needs some exponentially small error terms to
handle the tiny mass the Gaussian places outside this ball.  Then,
since
$\Delta = \frac{z}{\sigma^2} - \E_{z \mid x}[\frac{z}{\sigma^2}]$, we get $\norm{\Delta} \leq 2B/\sigma$.

\paragraph{Warmup: $\poly(R/\sigma)$.} As a warmup, consider the
setting of prior work~\cite{BMR20}: (1) $\norm{x} \leq R$ always, so
$\norm{s^*(x)} \lesssim \frac{R}{\sigma^2}$; and (2) we only optimize
over candidate score functions $s$ with value clipped to within
$O(\frac{R}{\sigma^2})$, so
$\norm{s(x) - s^*(x)} \lesssim \frac{R}{\sigma^2}$.  With both these
restrictions, then,
$\abs{l'(s, x, z)} \leq \frac{R^2}{\sigma^4} + \frac{R B}{\sigma^3}$.
We can then apply a Chernoff bound to show concentration of $l'$: for
$\poly(\eps, \frac{R}{\sigma},B,\log \frac{1}{\delta_{\train}})$
samples, with $1-\delta_{\train}$ probability we have
\begin{align*}
  \hat{\E_{x,z}}[l'(s,x,z)] &\geq \E_{x,z}[l'(s,x,z)] - \frac{\eps}{\sigma^2} \\
    &= \E[\norm{s(x) - s^*(x)}^2] - \frac{\eps^2}{\sigma^2}
\end{align*}
which is greater than zero if
$\E[\norm{s(x) - s^*(x)}^2] > \frac{\eps^2}{\sigma^2}$.  Thus the ERM
would reject each score function that is far in $L^2$.  However, as we
show in Section~\ref{sec:hard-instances}, restrictions (1) and (2) are both
necessary: the score matching ERM needs a polynomial dependence on
both the distribution norm and the candidate score function values to
learn in $L^2$.

The main technical contribution of our paper is to avoid this polynomial dependence on $1/\sigma$.  To do so, we settle for rejecting score
functions $s$ that are far in our stronger distance measure
$D^{\delta_{\score}}_{p}$, i.e., for which
\begin{align}\label{eq:ddelta}
  \Pr[\norm{s(x) - s^*(x)} > \eps/\sigma] \geq \delta_{\score}.
\end{align}

\paragraph{Approach.}  We want to
show~\eqref{eq:lprimegoal}, which is a concentration over $x$ and $z$.  Now, $x$ is somewhat hard to control, because it depends on the unknown distribution, but $z \sim N(0, \sigma^2 I_d)$ is very well behaved. This motivates breaking up the expectation over $x, z$ into an expectation over $x$ and $z \mid x$. Following this approach, we could try to show that
\[
\hat{\E_{x,z}}[l'(s, x, z)] \geq \hat{\E_{x}}[ \E_{z \mid x}[l'(s, x, z)]] 
\]
However, this is not possible to show; the problem is that this could be unbounded, since $s(x)$ is an arbitrary neural network that could have extreme outliers.  So, we instead show this is true with high probability if we clip the internal value, making it
\begin{align*}
  A_x &:= \hat{\E_{x}}[\min(\E_{z \mid x}[l'(s,x,z)], \frac{10B^2}{\sigma^2})]= \hat{\E_{x}}[\min(\norm{s(x) - s^*(x)}^2, \frac{10B^2}{\sigma^2})].
\end{align*}
Note that $A_x$ is a function of the empirical samples $x$.
If $s$ is a score that we want to reject under~\eqref{eq:ddelta}, then we know that
$A_x$ is an empirical average of values that are at least $\frac{\eps^2}{\sigma^2}$ with probability $\delta_\score$.  It therefore holds that, for
$m > O(\frac{\log \frac{1}{\delta_{\train}}}{\delta_{\score}})$, we will
with $1-\delta_{\train}$ probability over the samples $x$ have 
\begin{align}
  A_x \gtrsim \frac{\eps^2 \delta_\score}{\sigma^2}.\label{eq:Alarge}
\end{align}

\paragraph{Concentration about the intermediate notion.}  Finally, we
show that for \emph{every} set of samples $x_i$ satisfying~\eqref{eq:Alarge}, we will have
\[
\hat \E_{z \mid x}[ l'(s, x, z)] \geq \frac{A_x}{2} > 0.
\]
This then implies
\[
\hat{\E_{x,z}}[l'(s, x, z)] \geq \hat{\E_{x}}\left[ \frac{A_x}{2}\right] \gtrsim \frac{\eps^2\delta_{score}}{\sigma^2},
\]
as needed.  For each sample $x$, we split
our analysis of $\hat{E}_{z \mid x}[l(s, x, z)]$ into two cases:

\textbf{Case 1: $\norm{s(x) - s^*(x)} > O(\frac{B}{\sigma})$. } In this case, by
Cauchy-Schwarz and the assumption that
$\norm{\Delta} \leq 2B/\sigma$,
\begin{align*}
  l'(s, x, z) &\geq \norm{s(x) - s^*(x)}^2 - O(\frac{B}{\sigma})\norm{s(x) - s^*(x)} \geq \frac{10 B^2}{\sigma^2}
\end{align*}
so these $x$ will contribute the maximum possible value to $A_x$,
regardless of $z$ (in its bounded range).

\textbf{Case 2: $\norm{s(x) - s^*(x)} < O(\frac{B}{\sigma})$. } In this case, $\abs{l'(s, x, z)} \lesssim B^2/\sigma^2$ and
\begin{align*}
  \Var_{z \mid x}(l'(s, x, z)) &= 4\E[\inner{s(x) - s^*(x), \Delta}^2]\lesssim \frac{B^2}{\sigma^2}\norm{s(x) - s^*(x)}^2
\end{align*}
so for these $x$, as a distribution over $z$, $l'$ is bounded with
bounded variance.

In either case, the contribution to $A_x$ is bounded with bounded
variance; this lets us apply Bernstein's inequality to show, if
$m > O(\frac{B^2 \log \frac{1}{\delta_{\train}}}{\sigma^2 A}) \approx O(\frac{B^2\log \frac{1}{\delta_{\train}}}{\eps^2 \delta_{score}})$, for every $x$ we will
have
\[
  \hat{\E_{z}}[l'(s, x, z)] \geq \frac{A}{2} > 0
\]
with $1 - \delta_{\train}$ probability.
\paragraph{Conclusion.}  Suppose
$m > O(\frac{B^2 \log \frac{1}{\delta_{\train}}}{\eps^2
  \delta_{\score}})$.  Then with $1 - \delta_{\train}$ probability we
will have~\eqref{eq:Alarge}; and conditioned on this, with
$1 - \delta_{\train}$ probability we will have
$\hat{\E}_{x,z}[l'(s,x,z)] > 0$. Hence this $m$ suffices to distinguish any
candidate score $s$ that is far from $s^*$.
For finite hypothesis classes we can take the union bound,
incurring a $\log \abs{\mathcal{H}}$ loss (this is given as Theorem~\ref{thm:scoreestimation}).  \cref{thm:neural_net} follows from applying this to a net over neural networks, which has size $\log H \approx P D \log \Theta$.

\subsection{Sampling}
\label{sec:proof_overview_sampling}
Now we overview the proof of \cref{thm:sampling_guarantee}, i.e., why having an $\eps / \sigma_{T-t_k}$ accuracy in the $D_q^\delta$ sense is sufficient for accurate sampling.

To practically implement the reverse SDE in \eqref{eq:reverse_SDE}, we discretize this process into $N$ steps and choose a sequence of times $0 = t_0 < t_1 < \dots < t_N < T$. At each discretization time $t_k$, we use our score estimates $\wh{s}_{t_k}$ and proceed with an \emph{approximate} reverse SDE using our score estimates, given by the following. For $t \in [t_k, t_{k+1}]$, 
\begin{equation}
\label{eq:discretized_SDE}
    \d x_{T - t} = \left(x_{T - t} + 2\wh{s}_{T-t_k}(x_{T - t_k})\right)\d t + \sqrt{2} \d{{B}_t}, \quad x_T \sim \mathcal N(0, I_d).
\end{equation}

This is \emph{almost} the reverse SDE that would give exactly correct samples, with two sources of error: it starts at $\mathcal N(0, I_d)$ rather than $q_T$, and it uses $\wh{s}_{T-t_k}$ rather than the $s_{T-t}$.  The first error is negligible, since $q_T$ is $e^{-T}$-close to $\mathcal{N}(0, I_d)$.  But how much error does the switch from $s$ to $\wh{s}$ introduce?

Let $Q$ be the law of the reverse SDE using $s$, and let $\wh{Q}$ be the law of~\eqref{eq:discretized_SDE}.  Then Girsanov's theorem states that the distance between $Q$ and $\wh{Q}$ is defined by the $L^2$ error of the score approximations:
\begin{align}
   \notag
    \KL(Q \parallel \wh Q) &=   \sum_{k=0}^{N-1} \E_{Q}\left[\int_{T-{t_{k+1}}}^{T-t_k} \| s_{T - t}(x_{T - t}) - \wh s_{T - t_k}(x_{T - t_k})\|^2\right] \\
    &\approx \sum_{k=0}^{N-1} \E_{x \sim q_{T-t_k}}\left[ \| s_{T - t_k}(x) - \wh s_{T - t_k}(x)\|^2\right] (t_{k+1}  - t_k) \label{eq:proof_ov_kl}
\end{align}
The first line is an \emph{equality}.
The second line comes from approximating $x_{T-t}$ by $x_{T - t_k}$, which for small time steps is quite accurate.  So
in previous work, good $L^2$ approximations to the score mean~\eqref{eq:proof_ov_kl} is small, and hence $\KL(Q \parallel \wh{Q})$ is small.  In our setting, where we \emph{cannot} guarantee a good $L^2$ approximation, Girsanov actually implies that we \emph{cannot} guarantee that $\KL(Q \parallel \wh{Q})$ is small.

However, since we finally hope to show closeness in $\TV$, we can circumvent the above as follows.
We define an event $E$ to be the event that the score is bounded well at all time steps $x_{T - t_k}$, i.e.,
\[
    E :=  \bigwedge_{k \in \{1, \dots, N\}} \left( \|\wh{s}_{T - t_k}(x_{T - t_k}) - s_{T - t_k}(x_{T - t_k})\| \le \frac{\eps}{\sigma_{T - t_k}}\right).
\]
If we have a $D^{\delta / N}_{q_{t_k}}$ accuracy for each score, we have $1 - \Prb{E} \le \delta$.
Therefore, if we look at the \TV{} error between $Q$ and $\wh Q$, instead of bounding $\KL (Q \parallel \wh Q)$. 
The \TV{} between $Q$ and $\wh Q$ can be then bounded by \[
    \TV(Q, \wh Q) \le  (1 - \Prb{E}) + \TV((Q \mid E), (\wh Q \mid E)).
\]

The second term $\TV((Q \mid E), (\wh Q \mid E))$ is bounded because after conditioning on $E$, the score error is always bounded, so now we can use \eqref{eq:proof_ov_kl} to bound the KL divergence between $Q$ and $\wh Q$, then use Pinsker's inequality to translate KL into TV distance.

\section{Hardness of Learning in $L^2$}
\begin{figure*}[h]
    \centering
    \begin{minipage}{.49\textwidth}
    \includegraphics[width=0.9\textwidth]{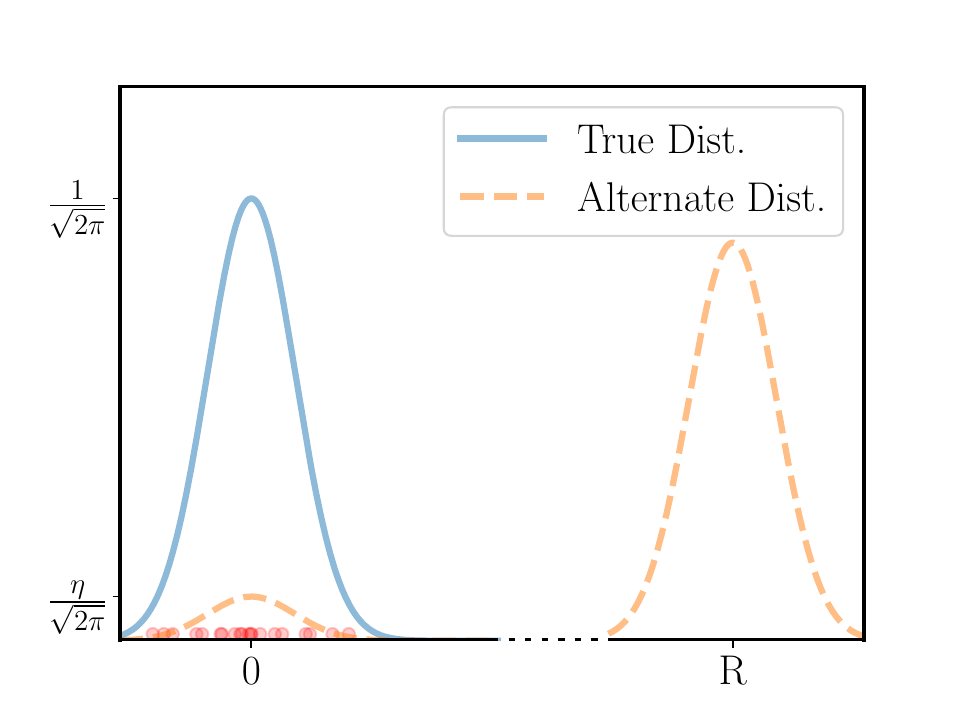}
    \end{minipage}
    \begin{minipage}{.49\textwidth}
       \includegraphics[width=0.9\textwidth]{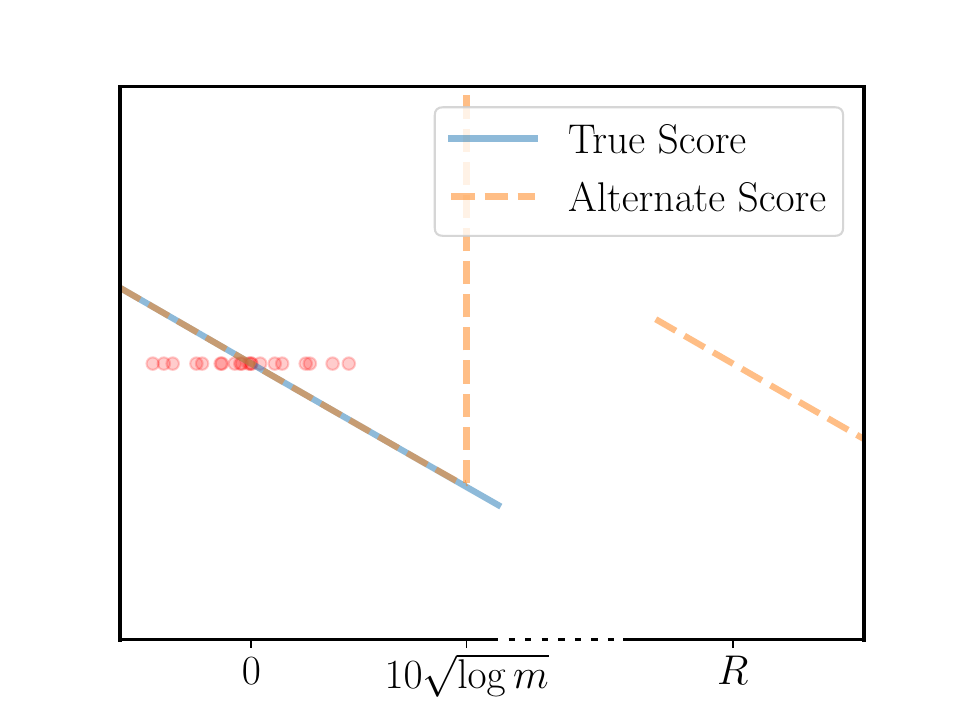} 
    \end{minipage}
    \caption{For $m$ samples from $\mathcal N(0, 1)$, consider the score $\wh s$ of the mixture $\eta \mathcal N(0, 1) + (1 - \eta) \mathcal N(R, 1)$ above with $\eta$ is chosen so that $\wh s(10 \sqrt{\log m}) = 0$. For this $\wh s$, the score-matching objective is close to $0$, while the squared $L^2$ error is $\Omega\left(\frac{R^2}{m} \right)$.}
    \label{fig:score_matching_lower_bound_fig}
\end{figure*}
\label{sec:hard-instances}
In this section, we give concrete examples where it is difficult to learn the score in $L^2$, even though learning to sufficient accuracy for sampling is possible. Previous works, such as \cite{benton2023linear}, require the $L^2$ error of the score estimate $s_t$ to be bounded by $\eps/\sigma_t$.  We demonstrate that achieving this guarantee is prohibitively expensive: sampling from a $\sigma_t$-smoothed distribution requires at least $\poly(1/\sigma_t)$ samples. Thus, sampling from a distribution $\gamma$-close in 2-Wasserstein to $q_0$ requires polynomially many samples in $\frac{1}{\gamma}$.


To show this, we demonstrate two lower bound instances. Both of these instances provide a pair of distributions that are hard to distinguish in $L^2$, and emphasize different aspects of this hardness:
\begin{enumerate}
    \item The first instance shows that even with a polynomially bounded set of distributions, it is \textit{information theoretically impossible} to learn a score with small $L^2$ error with high probability, with fewer than $\poly(1/\gamma)$ samples.
    \item In the second instance, we show that even with a simple true distribution, such as a single Gaussian, distinguishing the score matching loss of the true score function from one with high $L^2$ error can be challenging with fewer than $\poly(1/\gamma)$ samples if the hypothesis class is large, such as with neural networks.
\end{enumerate}
\paragraph{Information theoretically indistinguishable distributions:} For our first example, consider the two distributions $(1-\eta) \mathcal{N}(0, \sigma^2) + \eta \mathcal{N}(\pm R, \sigma^2)$, where $R$ is polynomially large. 
Even though these distributions are polynomially bounded, it is \emph{impossible} to distinguish these in $L^2$ given significantly fewer than $\frac{1}{\eta}$ samples. 
However, the $L^2$ error in score incurred from picking the score of the wrong distribution is large.
In Figure \ref{fig:info_theoretic_lower_bound}, the rightmost plot shows a simulation of this example, and demonstrates that the $L^2$ error remains large even after many samples are taken. Formally, we have:

\begin{restatable}{lemma}{infotheolowerbound}
    \label{lem:info_theoretic_lower_bound}
    Let $R$ be sufficiently larger than $\sigma$. Let $p_1$ be the distribution $(1 - \eta) \mathcal N(0, \sigma^2) + \eta \mathcal N(-R, \sigma^2)$ with corresponding score function $s_1$, and let $p_2$ be $(1-\eta)\mathcal N(0, \sigma^2) + \eta \mathcal N(R, \sigma^2)$ with score $s_2$, such that $\eta = \frac{\varepsilon^2\sigma^2}{R^2}$. Then, given $m < \frac{R^2}{\varepsilon^2\sigma^2}$ samples from either distribution, it is impossible to distinguish between $p_1$ and $p_2$ with probability larger than $1/2 + o_m(1)$. But, 
    \begin{align*}
        &\E_{x \sim p_1}\left[\| s_1(x) - s_2(x) \|^2 \right] \gtrsim \frac{\varepsilon^2}{\sigma^2} \qquad \text{and} 
        \qquad\E_{x \sim p_2}\left[\| s_1(x) - s_2(x) \|^2 \right] \gtrsim \frac{\varepsilon^2}{\sigma^2}.
    \end{align*}
\end{restatable}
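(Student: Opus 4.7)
The plan is to prove the two claims separately. For statistical indistinguishability I would use a coupling argument together with Le Cam's two-point method, and for the $L^2$ score separation I would compute $s_1,s_2$ explicitly from the mixture density and then isolate the dominant contribution to $\E_{p_1}[\|s_1(x)-s_2(x)\|^2]$, which will come from the rare $\eta$-fraction of samples drawn from the component at $-R$; $\E_{p_2}$ is handled symmetrically.

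For the indistinguishability claim I would use the mixture coupling: flip $m$ independent $\mathrm{Ber}(\eta)$ indicators $\chi_1,\dots,\chi_m$ and, conditional on $\chi_i=0$, let the $i$-th draws from $p_1^{\otimes m}$ and $p_2^{\otimes m}$ be the \emph{same} $\mathcal{N}(0,\sigma^2)$ sample; conditional on $\chi_i=1$, they are drawn from $\mathcal{N}(-R,\sigma^2)$ and $\mathcal{N}(+R,\sigma^2)$ respectively. Under this coupling the two sample tuples agree on the event $\{\forall i:\chi_i=0\}$, so
\[
\TV(p_1^{\otimes m},p_2^{\otimes m}) \;\le\; 1-(1-\eta)^m \;\le\; m\eta \;=\; m\varepsilon^2\sigma^2/R^2.
\]
Le Cam's inequality then bounds the success probability of any test by $\tfrac12+\tfrac12\TV(p_1^{\otimes m},p_2^{\otimes m})$, which is $\tfrac12 + o_m(1)$ in the regime $m = o(R^2/(\varepsilon^2\sigma^2))$ covered by the hypothesis.

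For the $L^2$ lower bound, let $\phi_\sigma$ denote the $\mathcal{N}(0,\sigma^2)$ density and set
\[
\alpha_1(x) := \frac{\eta\,\phi_\sigma(x+R)}{(1-\eta)\,\phi_\sigma(x)}, \qquad \alpha_2(x) := \frac{\eta\,\phi_\sigma(x-R)}{(1-\eta)\,\phi_\sigma(x)},
\]
so that $s_1(x)=-x/\sigma^2 - \frac{\alpha_1(x)}{1+\alpha_1(x)}\cdot R/\sigma^2$ and $s_2(x)=-x/\sigma^2+\frac{\alpha_2(x)}{1+\alpha_2(x)}\cdot R/\sigma^2$. I would split $\E_{p_1}[\|s_1-s_2\|^2]$ into its main-component ($\mathcal{N}(0,\sigma^2)$, weight $1-\eta$) and rare-component ($\mathcal{N}(-R,\sigma^2)$, weight $\eta$) contributions. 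For typical $x$ in the main component ($|x|=O(\sigma)$), explicit Gaussian arithmetic gives $\alpha_i(x) = \eta\, e^{-R^2/(2\sigma^2)+O(R/\sigma)}$, so both $s_1(x)$ and $s_2(x)$ agree with $-x/\sigma^2$ up to exponentially small error; this contribution is negligible. For typical $x$ in the rare component ($x=-R+y$, $|y|=O(\sigma)$), the ratios flip: $\alpha_1(-R+y)=\Theta(\eta\, e^{R^2/(2\sigma^2)-O(R/\sigma)})\gg 1$ whenever $R^2/(2\sigma^2)\gg\log(R/(\varepsilon\sigma))$, so $s_1(x)\approx -(x+R)/\sigma^2 = -y/\sigma^2$; meanwhile $\alpha_2(-R+y)=O(\eta\, e^{-3R^2/(2\sigma^2)})$, so $s_2(x)\approx -x/\sigma^2\approx R/\sigma^2$. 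Hence $\|s_1(x)-s_2(x)\|\approx R/\sigma^2$ on this event of $p_1$-probability $\Omega(\eta)$, contributing $\Omega(\eta\, R^2/\sigma^4)=\Omega(\varepsilon^2/\sigma^2)$.

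The main obstacle will be converting the heuristic ``$\approx$'' into rigorous constants. Concretely, for each component I need to restrict to a typical subset (e.g.\ $|y|\le C\sigma\sqrt{\log(1/\eta)}$) on which the dominant mixture weight is bounded away from $1$ by a definite constant factor, so that the score approximations incur only $o(R/\sigma^2)$ additive error and the squared difference stays $\Omega(R^2/\sigma^4)$; this forces the qualitative assumption ``$R$ sufficiently larger than $\sigma$'' to be quantified as $R/\sigma\gtrsim \sqrt{\log(R/(\varepsilon\sigma))}$. A minor secondary point is to confirm no cancellation occurs between $s_1$ and $s_2$ on the rare event---automatic here because $s_1-s_2$ has consistent sign $\approx -R/\sigma^2$, but worth spelling out. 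Everything else reduces to standard Gaussian tail estimates.
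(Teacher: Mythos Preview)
Your proposal is correct and takes essentially the same approach as the paper. The paper's own proof is extremely terse: it asserts a $\TV$ bound of order $\eta$ between $p_1$ and $p_2$ to conclude indistinguishability with fewer than $O(1/\eta)$ samples, and for the $L^2$ score bound writes only ``follows from calculation.'' Your coupling argument is precisely the standard way to establish $\TV(p_1^{\otimes m},p_2^{\otimes m})\le m\eta$, and your explicit computation of $s_1,s_2$ together with the rare-component analysis is exactly the calculation the paper leaves implicit; the identification that the $\eta$-fraction at $\pm R$ contributes $\Omega(\eta R^2/\sigma^4)=\Omega(\varepsilon^2/\sigma^2)$ is the right mechanism.
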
 

\paragraph{Simple true distribution:} Now, consider the true distribution being $N(0, \sigma^2)$, and, for large $S$, let $\wh s$ be the score of the mixture distribution $\eta \mathcal N(0,\sigma^2) + (1-\eta) \mathcal N(S, \sigma^2)$, as in Figure \ref{fig:score_matching_lower_bound_fig}. 
This score will have practically the same score matching objective as the true score for the given samples with high probability, as shown in Figure~\ref{fig:score_matching_lower_bound_fig}, since all $m$ samples will occur in the region where the two scores are nearly identical. 
However, the squared $L^2$ error incurred from picking the wrong score function $\wh s$ is large 
We formally show this result in the following lemma:

\begin{restatable}{lemma}{lowerbound}
\label{lem:score_matching_lower_bound}
    Let $S$ be sufficiently large. Consider the distribution $\wh p = \eta \mathcal N(0, \sigma^2) + (1-\eta) \mathcal N(S, \sigma^2)$ for $\eta = \frac{Se^{-\frac{S^2}{2} + 10 \sqrt{\log m} \cdot S}}{10 \sqrt{\log m}}$, and let $\wh s$ be its score function. Given $m$ samples from the standard Gaussian $p^* = \mathcal N(0, \sigma^2)$ with score function $s^*$, with probability at least $1 - \frac{1}{\text{poly}(m)}$,
    \begin{align*}
        \wh \E\left[\|\wh s(x) - s^*(x) \|^2 \right] \le \frac{1}{\sigma^2} e^{-O(S \sqrt{ \log m})} \quad \text{but}\quad
        \E_{x \sim p^*} \left[\|\wh s(x) - s^*(x) \|^2 \right] \gtrsim \frac{S^2}{m\sigma^4}.
    \end{align*}
\end{restatable}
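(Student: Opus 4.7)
The plan is to derive a closed-form expression for $\hat s - s^*$ and then analyze it on two disjoint regions: the ``bulk'' where the empirical samples lie, and the Gaussian tail where the two scores disagree substantially. Writing $\phi_0,\phi_S$ for the densities of $\mathcal N(0,\sigma^2)$ and $\mathcal N(S,\sigma^2)$, and using that the score of a mixture is the posterior-weighted average of the component scores, a short computation gives
\[
    \hat s(x) - s^*(x) \;=\; \frac{S/\sigma^2}{1 + 1/r(x)}, \qquad r(x) \;:=\; \frac{1-\eta}{\eta}\exp\!\Bigl(\tfrac{Sx - S^2/2}{\sigma^2}\Bigr).
\]
Direct algebra verifies that the $\eta$ specified in the lemma is exactly the one making $\hat s(10\sqrt{\log m}) = 0$, equivalently $r(10\sqrt{\log m}) = 10\sqrt{\log m}/(S - 10\sqrt{\log m})$. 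Since $r(x)$ is monotone and grows exponentially in $x$ with rate $S/\sigma^2$, this pivot sharply separates a bulk where $r \ll 1$ and $\hat s \approx s^*$ from a tail where $r \gg 1$ and $\hat s(x) - s^*(x) \approx S/\sigma^2$.

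For the empirical bound, standard Gaussian tail bounds plus a union bound over the $m$ samples show that with probability $1 - 1/\poly(m)$, every $x_i$ satisfies $|x_i| \le c\sqrt{\log m}$ for any fixed $c \in (\sqrt{2},10)$. On this event, monotonicity of $r$ gives $r(x_i) \le r(c\sqrt{\log m}) = O(\sqrt{\log m}/S)\cdot e^{-(10-c)S\sqrt{\log m}}$. Using $r/(1+r) \le r$ and squaring, each summand of $\wh\E[\|\hat s - s^*\|^2]$ is at most $\tfrac{1}{\sigma^2}\, e^{-\Omega(S\sqrt{\log m})}$, which gives the claimed exponentially small empirical loss.

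For the true $L^2$ lower bound, I would restrict the expectation to the tail $\{x \ge x_*\}$ with $x_*$ chosen just past the pivot so that $r(x) \ge 1$ and hence $(\hat s(x) - s^*(x))^2 \ge S^2/(4\sigma^4)$. Multiplying by the Gaussian mass $\Pr_{x \sim p^*}[x \ge x_*]$ then produces the polynomial-in-$1/m$ lower bound. The main technical obstacle is the constant bookkeeping here: the threshold $10\sqrt{\log m}$ must be large enough that the union bound in the previous step succeeds, yet the Gaussian tail mass past it must still combine with the factor $S^2/\sigma^4$ to yield the claimed $S^2/(m\sigma^4)$. The transition region around the pivot is exponentially thin (of width $\sim \sigma^2/S$), so this step reduces to a careful but standard estimate on $\Pr_{p^*}[X \ge 10\sqrt{\log m}]$ combined with the fact that $(\hat s - s^*)^2$ saturates at $\Theta(S^2/\sigma^4)$ immediately past the pivot.
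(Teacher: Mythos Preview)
Your proposal is correct and follows essentially the same approach as the paper: compute the explicit score difference (the paper writes out the full mixture score, you use the equivalent posterior-ratio form $r(x)$), confine all $m$ samples to the bulk via a Gaussian union bound to get the empirical upper bound, and extract the $L^2$ lower bound from the tail region where $\hat s - s^*$ saturates at $\Theta(S/\sigma^2)$. Your write-up is actually more detailed than the paper's, which simply asserts the final lower bound; your flagging of the ``constant bookkeeping'' around the threshold $10\sqrt{\log m}$ as the delicate step is apt, since the paper's proof is loose at exactly that point.
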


Together, these examples show that even with reasonably bounded or well-behaved distributions, it is difficult to learn the score in $L^2$ with fewer than $\poly(R/\gamma)$ samples, motivating our $(1 -\delta)$-quantile error measure. 

\section{Conclusion and Future Work}
In this work, we have addressed the sample complexity of training the scores in diffusion models.  
We showed that a neural networks, when trained using the standard score matching objective, can be used for DDPM sampling  after $\widetilde{O}(\frac{d^2 PD}{\eps^3} \log \Theta \log^3 \frac{1}{\gamma})$ training samples.  This is an exponentially better dependence on the neural network depth $D$ and Wasserstein error $\gamma$ than given by prior work.  To achieve this, we introduced a more robust measure, the $1 - \delta$ quantile error, which allows for efficient training with $\poly(\log \frac{1}{\gamma})$ samples using score matching. By using this measure, we showed that standard training (by score matching) and sampling (by the reverse SDE) algorithms achieve our new bound.


One caveat is that our results, as well as those of the prior work~\cite{BMR20} focus on understanding the \emph{statistical} performance of the score matching objective: we show that the \emph{empirical minimizer} of the score matching objective over the class of ReLU networks approximates the score accurately. We do not analyze the performance of Stochastic Gradient Descent (SGD), commonly used to \emph{approximate} this empirical minimizer in practice. Understanding why SGD over the class of neural networks performs well is perhaps the biggest problem in theoretical machine learning, and we do not address it here.

\section*{Acknowledgments}
We thank Syamantak Kumar and Dheeraj Nagaraj for pointing out an error in a previous version of this work, and for providing a fix (Lemma~\ref{lem:syamantak}).
We thank the authors of~\cite{GTKBA} for pointing out the missing $\varepsilon^{-2}$ factor in \cref{cor:end_to_end_neural} in a previous version of this paper.

This project was funded by NSF award CCF-1751040 (CAREER) and the NSF AI Institute for Foundations of Machine Learning (IFML).
\bibliographystyle{alpha}
\bibliography{ref}

\appendix

\section{Sample Complexity to Achieve $(1-\delta)$-Quantile Accuracy}\label{sec:sample_complexity_appendix}
The goal of this section is to prove our main lemma, the sample complexity bound to learn score at a single time. More specifically, we will give a quantitative version of \Cref{thm:neural_net}.

\begin{lemma}[Main Lemma, Quantitative Version]
 \label{lem:neural_net_quantitative}
      Let $q_0$ be a distribution with second moment $m_2^2$. Let $\phi_{\theta}(\cdot)$ be the fully connected neural network with ReLU activations parameterized by $\theta$, with $P$ total parameters and depth $D$. Let $\Theta > 1$. Suppose there exists some weight vector $\theta^*$ with $\|\theta^*\|_\infty \le \Theta$ such that
    \begin{align*}
        \E_{x \sim q_t}\left[ \|\phi_{\theta^*}(x) - s_t(x)\|^2 \right] \le         \frac{\delta_{\text{score}} \cdot \delta_\train \cdot \eps^2}{C \cdot \sigma_t^2}
    \end{align*}
    for a sufficiently large constant $C$. By taking $m$ i.i.d. samples from $q_0$ to train score $s_t$, when \[
        m > \wt O\left(\frac{(d + \log \frac{1}{\delta_{\train}}) \cdot P D}{\eps^2 \delta_\score}  \cdot \log \left(\frac{\max(m_2, 1)\cdot \Theta}{\delta_\train} \right)\right),
    \] the empirical minimizer $\phi_{\wh \theta}$ of the score matching objective used to estimate $s_t$ (over $\phi_\theta$ with $\|\theta\|_\infty \le \Theta$) satisfies
    \[
        D^{\delta_{\score}}_{q_t}(\phi_{\wh \theta}, s_t) \le \eps / \sigma_t.
    \]
    with probability $1 - \delta_\train$.
\end{lemma}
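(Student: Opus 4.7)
The plan is to follow the two-level concentration strategy laid out in \cref{sec:proof_overview_training}, and then make three modifications to reach the quantitative statement: (i) handle the unboundedness of the Gaussian noise $z$ via truncation, (ii) compare the empirical loss of a bad candidate against the approximate realizer $\phi_{\theta^*}$ rather than the exact score $s_t$, and (iii) lift the single-hypothesis concentration to a uniform bound over $\{\phi_\theta : \|\theta\|_\infty \le \Theta\}$ via a parameter net.

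First I would fix a time $t$, write $\sigma = \sigma_t$, and use Tweedie's formula to pass from the raw score matching objective to $l'(s,x,z) = \|s(x)-s^*(x)\|^2 + 2\langle s(x)-s^*(x), \Delta\rangle$ with $\Delta = s^*(x) + z/\sigma^2$ satisfying $\E[\Delta \mid x] = 0$. Standard Gaussian tail bounds give $\|z_i\| \le B\sigma$ for every training sample with $B = O(\sqrt{d + \log(m/\delta_{\train})})$, off an event of probability $\delta_{\train}/4$; conditioning on that event and absorbing the exponentially small tail contribution to $\E[\Delta\mid x]$ into a harmless additive term, one may treat $\|\Delta\| \le 2B/\sigma$ throughout.

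Next I would establish the per-hypothesis bound. For a fixed $\theta$ with $D^{\delta_{\score}}_{q_t}(\phi_\theta, s_t) > \eps/\sigma$, define the clipped proxy $A_x = \hat{\E}_x[\min(\|\phi_\theta(x)-s^*(x)\|^2, 10B^2/\sigma^2)]$. Each term is an i.i.d.\ bounded random variable that is $\ge \eps^2/\sigma^2$ with probability at least $\delta_{\score}$, so a Chernoff bound yields $A_x \gtrsim \delta_{\score}\eps^2/\sigma^2$ once $m \gtrsim \log(|\mathcal{N}|/\delta_{\train})/\delta_{\score}$. Conditional on $x$, I would split into two cases exactly as in the overview: when $\|\phi_\theta(x)-s^*(x)\| > 10B/\sigma$, Cauchy--Schwarz with $\|\Delta\|\le 2B/\sigma$ forces $l'(\phi_\theta,x,z) \ge 10B^2/\sigma^2$ for every truncated $z$, saturating the clip; when $\|\phi_\theta(x)-s^*(x)\| \le 10B/\sigma$, $l'$ has range $O(B^2/\sigma^2)$ and $\Var_{z\mid x}(l') \lesssim B^2\|\phi_\theta(x)-s^*(x)\|^2/\sigma^2$, so Bernstein on the $z_i$'s gives $\hat{\E}_z[l'(\phi_\theta,x,z)] \ge A_x/2$ with failure probability $\exp(-\Omega(m\eps^2\delta_{\score}/B^2))$. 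Together, for a bad $\theta$ we get $\hat{\E}[l'(\phi_\theta)] \gtrsim \delta_{\score}\eps^2/\sigma^2$ with high probability. By contrast, assumption \textbf{A2} and Markov's inequality (or a second Bernstein application to $l'(\phi_{\theta^*})$, whose range and variance are controlled by the same $B^2/\sigma^2$ factors) yield $\hat{\E}[l'(\phi_{\theta^*})] \ll \delta_{\score}\eps^2/\sigma^2$ once the constant $C$ in the statement is large enough. Thus the ERM $\hat\theta$ cannot land on any such bad $\theta$.

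The remaining step is the uniform bound. Fully connected ReLU networks are Lipschitz in their parameters: perturbing $\theta$ by $\alpha$ in $\ell_\infty$ changes $\phi_\theta(x)$ pointwise by at most $\alpha \cdot \poly(\Theta,D) \cdot (\|x\|+1)$. Because $q_0$ has second moment $m_2^2 \le \poly(d)$, every training sample satisfies $\|x_i\|+\|z_i\| \le \poly(d, m_2, B) \cdot \sqrt{\log(m/\delta_{\train})}$ with high probability, so a net of size $(\Theta \poly(d, m_2, D)/\alpha)^P$ covers the class to uniform precision $\alpha$ on the relevant inputs. Choosing $\alpha$ inverse-polynomially small so that replacing $\phi_\theta$ by the nearest net point changes $l'$ by at most $o(\delta_{\score}\eps^2/\sigma^2)$ gives $\log|\mathcal{N}| = O(PD \log(\max(m_2,1)\Theta/\delta_{\train}))$ up to log factors; plugging this into both the Chernoff step ($m \gtrsim \log(|\mathcal{N}|/\delta_{\train})/\delta_{\score}$) and the Bernstein step ($m \gtrsim B^2 \log(|\mathcal{N}|/\delta_{\train})/(\eps^2\delta_{\score})$) yields the stated $\tilde O((d+\log(1/\delta_{\train}))PD \log(\Theta/\delta_{\train})/(\eps^2\delta_{\score}))$ sample complexity after a union bound. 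I expect the main obstacle to be precisely this last step: calibrating the net precision $\alpha$ and the parameter-Lipschitz constant carefully enough that the discretization error stays below the separation gap $\delta_{\score}\eps^2/\sigma^2$ and the $\log|\mathcal{N}|$ incurred matches $PD\log\Theta$ up to the allowed polylogarithmic factors, without reintroducing any polynomial dependence on $1/\sigma$ or $1/\gamma$.
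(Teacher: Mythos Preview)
Your proposal is correct and follows essentially the same route as the paper, which packages the argument as a Frobenius-norm result (\cref{lem:neural_net_Frob}) built on the finite-class bound (\cref{thm:scoreestimation}, \cref{lem:funcconcentrate}) and then passes to $\ell_\infty$ via $\|\theta\|_F \le \sqrt{P}\,\|\theta\|_\infty$. One small caveat: for the cross term in $\hat{\E}[l'(\phi_{\theta^*})]$, Bernstein does not directly apply because $\|\phi_{\theta^*}(x)-s_t(x)\|$ is not pointwise bounded---the paper instead uses a second-moment/Chebyshev argument (\cref{lem:inner_product_term}), which is exactly what your ``Markov'' alternative amounts to; and the parameter-Lipschitz constant is $D\Theta^{D-1}$ (exponential, not $\poly(\Theta,D)$), though only its logarithm enters $\log|\mathcal{N}|$ so your final $PD\log\Theta$ count is unaffected.
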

In order to prove the lemma, we first consider the case when learning the score over a finite function class $\mathcal{H}$.

\subsection{Score Estimation for Finite Function Class}
The main result (\cref{thm:scoreestimation}) of this section shows that if there is a function in the function class $\mathcal H$ that approximates the score well in our $D_p^\delta$ sense (see \eqref{eq:D_p^delta_def}), then the score-matching objective can learn this function using a number of samples that is independent of the domain size or the maximum value of the score.

\paragraph{Notation.} Fix a time $t$. For the purposes of this section, let $q := q_t$ be the distribution at time $t$,  let $\sigma := \sigma_t$ be the smoothing level for time $t$, and let $s := s_t$ be the score function for time $t$. For $m$ samples $y_i \sim q_0$ and $z_i \sim \mathcal N(0, \sigma^2)$, let $x_i = e^{-t} y_i + z_i  \sim q_t$.

We use $\wh{\E}[f(x,y,z)]$ to denote the empirical expectation $\frac{1}{m} \sum_{i=1}^m f(x_i, y_i, z_i)$.

We now state the score matching algorithm.
\begin{algorithm}[H]
\caption{Empirical score estimation for $s$}\label{alg:cap}
\paragraph{Input:} Distribution $q_0$, $y_1, \dots, y_m \sim q_0$, set of hypothesis score function $\mathcal{H} = \{\tilde s_i\}$, smoothing level $\sigma$.
\vspace*{2mm}
    \begin{enumerate}
        \item Take $m$ independent samples $z_i \sim N(0, \sigma^2I_d)$, and let $x_i = e^{-t} y_i + z_i$.
        \item For each $\tilde s \in \mathcal{H}$, let \[l(\tilde s) = \frac{1}{m} \sum_{i = 1}^m \ltwo* {\tilde s(x_i) - \frac{-z_i}{\sigma^2}}\]
        \item Let $\hat{s} = \argmin_{\tilde s \in \mathcal{H}} l(\tilde s)$
    \item Return $\hat s$
    \end{enumerate}
\end{algorithm}

\begin{lemma}[Score Estimation for Finite Function Class]
 \label{thm:scoreestimation}
 For any distribution $q_0$ and time $t > 0$, consider the $\sigma_t$-smoothed version $q_t$ with associated score $s_t$. For any finite set $\mathcal{H}$ of candidate score functions.  If there
 exists some $s^* \in \mathcal{H}$ such that
\begin{equation}
    \label{eq:mini}
    \E_{x \sim q_t}\left[\ltwo*{{s^*}(x) - s_t(x)}\right] \le \frac{\delta_\score \cdot \delta_{\train} \cdot \varepsilon^2}{C \cdot  \sigma_t^2},
\end{equation}
for a sufficiently large constant $C$,
then using
$m > \widetilde{O}\left(\frac{1}{\eps^2 \delta_{\score}} (d + \log \frac{1}{\delta_\train})\log \frac{|\mathcal H|}{\delta_{\train}}\right)$
    samples, the empirical minimizer $\hat s$
of the score matching objective used to estimate $s_t$ satisfies
\[
    D^{\delta_\score}_{q_t} (\hat s,  s_t) \le \eps/\sigma_t
\]
with probability $1-\delta_\train$.
\end{lemma}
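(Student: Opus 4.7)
The plan is to follow the outline in Section 3.1 and turn the intuitive two-stage concentration argument into a rigorous one. Writing $\Delta := s_t(x) - (-z/\sigma^2)$, Tweedie's formula gives $\E[\Delta \mid x] = 0$, so expanding the score-matching loss and dropping the $s$-independent $\|\Delta\|^2$ term turns the ERM into a minimizer of $\hat\E[l'(s)]$ where
\[
l'(s, x, z) := \|s(x) - s_t(x)\|^2 + 2\langle s(x) - s_t(x), \Delta\rangle.
\]
Call $s \in \mathcal H$ \emph{bad} if $\Pr_{x \sim q_t}[\|s(x) - s_t(x)\| \ge \eps/\sigma] \ge \delta_\score$. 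The goal is to show that with probability $1 - \delta_\train$, every bad $s$ satisfies $\hat\E[l'(s)] > \hat\E[l'(s^*)]$, so the ERM cannot be bad. A preliminary Gaussian-tail step conditions on the good event $\{\|z_i\| \le B\sigma \text{ for all } i\}$ with $B = O(\sqrt{d + \log(m|\mathcal H|/\delta_\train)})$, which fails only with negligible probability and implies $\|\Delta_i\| \le 2B/\sigma$ throughout.

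I will handle a fixed bad $s$ in two stages. First, I introduce the clipped empirical quantity $A_x := \hat\E[\min(\|s(x) - s_t(x)\|^2, 10B^2/\sigma^2)]$, a bounded average whose population mean is at least $\delta_\score \eps^2/\sigma^2$ (assuming $\eps \le B$, which holds in the relevant regime); a multiplicative Chernoff bound yields $A_x \ge c\,\delta_\score \eps^2/\sigma^2$ with probability $1 - e^{-\Omega(m\delta_\score)}$. Second, conditional on such an $x$-sample, I lower bound $\hat\E_{z\mid x}[l'(s)]$ by $A_x/2$. Writing $v_i := \|s(x_i) - s_t(x_i)\|$, in the ``large $v_i$'' case $v_i \ge 10B/\sigma$ we get $l'(s, x_i, z_i) \ge v_i(v_i - 4B/\sigma) \ge 10B^2/\sigma^2$ deterministically, saturating the $A_x$ contribution from that sample. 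In the ``small $v_i$'' case, $|l'|$ is $O(B^2/\sigma^2)$-bounded with $z$-conditional mean $v_i^2$ and variance $O(v_i^2 B^2/\sigma^2)$ by Cauchy–Schwarz. Bernstein's inequality on the small-$v_i$ sum then gives $\hat\E[l'(s)] \ge A_x/2 \gtrsim \delta_\score \eps^2/\sigma^2$ whenever $m \gtrsim B^2 \log(1/\delta_\train)/(\delta_\score \eps^2)$.

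For the baseline $s^*$, the assumption $\E\|s^*(x) - s_t(x)\|^2 \le \delta_\score\delta_\train\eps^2/(C\sigma^2)$ combined with Markov's inequality controls $\hat\E\|s^* - s_t\|^2$ up to $\delta_\score\eps^2/(C\sigma^2)$ with probability $1 - \delta_\train$, and a Chebyshev bound on the mean-zero cross term $\hat\E[\langle s^* - s_t, \Delta\rangle]$ (whose per-sample variance is $\le 4\|s^* - s_t\|^2 B^2/\sigma^2$) shows $\hat\E[l'(s^*)]$ is of smaller order than $\delta_\score\eps^2/\sigma^2$ at the same sample size. A union bound over all $s \in \mathcal H$ replaces $\delta_\train$ by $\delta_\train/|\mathcal H|$ in each tail inequality, absorbing a factor of $\log|\mathcal H|$ and producing the claimed sample bound. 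The main technical obstacle is the two-stage concentration itself: a direct one-shot Hoeffding/Bernstein on $l'$ would fail because $s(x)$ is an arbitrary function in $\mathcal H$ so $\|s(x) - s_t(x)\|$ is a priori unbounded; the truncation defining $A_x$ is essential in order to make the Bernstein variance proxy depend only on the Gaussian-tail parameter $B$ rather than on any hypothesis-class norm.
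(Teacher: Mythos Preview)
Your plan is essentially the paper's own argument (the two-stage clip-then-Bernstein lower bound for bad $s$, plus Markov and a second-moment bound on the cross term for $s^*$, then a union bound over $\mathcal H$). Two technical slips to repair, neither of which changes the structure:

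\emph{(i)} Conditioning on $\{\|z_i\|\le B\sigma\}$ alone does \emph{not} give $\|\Delta_i\|\le 2B/\sigma$. Recall $\Delta = s_t(x)+z/\sigma^2$ and $s_t(x)=\E[-z/\sigma^2\mid x]$ is a function of $x$, not of the realized $z$. You must also condition on the separate high-probability event (over the $x_i$'s) that $\|s_t(x_i)\|\le B/\sigma$; the paper proves this via subgaussianity of the smoothed score (its Lemmas~\ref{lem:score_bound_whp} and~\ref{lem:norm_Z_R_concentration}). Relatedly, the $\log|\mathcal H|$ inside your $B$ is unnecessary: the boundedness events are global and do not require a union bound over hypotheses; the $\log|\mathcal H|$ enters only through the Bernstein step for bad $s$.

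\emph{(ii)} Once you condition on the good event, the cross term $\langle s^*(x)-s_t(x),\Delta\rangle$ no longer has conditional mean zero, so your Chebyshev argument needs an additional control on the mean shift. The paper handles this in its Lemma~\ref{lem:inner_product_term}, using that $\|\Delta\|^2$ has subexponential tails together with Lemma~\ref{lem:syamantak} to bound $\E[\|\Delta\|^2\mid \text{bad}]$ and then Cauchy--Schwarz; you should either do the same or argue directly that the mean shift is $O(\delta_\score\eps^2/\sigma^2)$ times a factor you can make small.
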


\begin{proof}
Per the notation discussion above, we set $s = s_t$ and $\sigma = \sigma_t$.

Denote 
\begin{align*}
l(s, x, z) :=  \ltwo* {s(x) - \frac{-z}{\sigma^2}}
\end{align*}
We will show that for all $\tilde s$ such that $D^{\delta_\score}_q(\tilde s, s) > \eps / \sigma$, with probability $1 - \delta_\train$,
\begin{align*}
    \hat \E\left[l(\tilde s, x, z) - l(s^*, x, z) \right] > 0,
\end{align*}
so that the empirical minimizer $\hat s$ is guaranteed to have \[
    D^{\delta_\score}_q(\hat s, s) \le \eps / \sigma.
\]
We have
\begin{align}
    \label{eq:loss_difference}
    \begin{split}
    l(\tilde s, x, z) - l(s^*, x, z)  &= \left\| \tilde s(x) - \frac{-z}{\sigma^2}\right\|^2 - \left\|s^*(x) - \frac{-z}{\sigma^2} \right\|^2\\
    &= \left(\left\|\tilde s(x) - \frac{-z}{\sigma^2}\right\|^2 - \left\|s(x) - \frac{-z}{\sigma^2} \right\|^2 \right) - \left\|s^*(x) - s(x) \right\|^2 - 2 \inner{s^*(x) - s(x), s(x) - \frac{-z}{\sigma^2}}.
    \end{split}
\end{align}
Note that by Markov's inequality, with probability $1 - \delta_\train/3$,
\begin{align*}
    \widehat \E_{x}\left[\|s^*(x) - s(x)\|^2 \right] \le \frac{\delta_\score \cdot \eps^2}{30  \cdot \sigma^2}.
\end{align*}
By Lemma~\ref{lem:inner_product_term}, with probability $1 - \delta_\train/3$,
\begin{align*}
    \wh \E\left[\inner{s^*(x) - s(x), s(x) - \frac{-z}{\sigma^2}} \right] \le \frac{\delta_\score \cdot \eps^2}{100 \sigma^2}
\end{align*}

Also, by Corollary~\ref{cor:score_error_difference}, with probability $1 - \delta_\train/3$, for all $\tilde s \in \mathcal{H}$ that satisfy $D^{\delta_\score}_q(\tilde s, s) > \eps / \sigma$ simultaneously,
\begin{align*}
    \hat \E\left[ \left\| \tilde s(x) - \frac{-z}{\sigma^2}\right\|^2 - \left\|s(x) - \frac{-z}{\sigma^2} \right\|^2 \right] \ge \frac{ \delta_\score\eps^2}{16\sigma^2}.
\end{align*}
Plugging in everything into equation~\eqref{eq:loss_difference}, we have, with probability $1 - \delta_\train$, for all $\tilde s \in \mathcal{H}$ with $D^{\delta_\score}_q(\tilde s, s) > \eps / \sigma$ simultaneously,
\begin{align*}
    \hat \E\left[l(\tilde s, x, z) - l(s^*, x, z) \right] \ge \frac{\delta_\score \eps^2}{16\sigma^2} - \frac{\delta_\score \eps^2}{30\sigma^2} - \frac{\delta_\score \eps^2}{50\sigma^2} > 0
\end{align*}
as required.
\end{proof}

\begin{lemma}
\label{lem:inner_product_term}
    For any distribution $q_0$ and time $t > 0$, consider the $\sigma_t$-smoothed version $q_t$ with associated score $s_t$. Suppose $s^*$ is such that
    \begin{align*}
        \E_{x \sim q_t}\left[ \|s^*(x) - s(x)\|^2\right] \le \frac{\delta_\score \cdot \delta_\train \cdot \eps^2}{C \sigma_t^2}
    \end{align*}
    for a sufficiently large constant $C$. Then, using $m > \wt O\left(\frac{1}{\eps^2 \delta_\score} \left(d + \log \frac{1}{\delta_\train} \right) \right)$ samples $(x_i, y_i, z_i)$ where $y_i \sim q_0$, $z_i \sim \mathcal N(0, \sigma^2 I_d)$ and $x_i = e^{-t} y_i + z_i \sim q_t$, we have with probability $1 - \delta_\train$,
    \begin{align*}
        \wh \E\left[\inner{s^*(x)-s(x), s(x) - \frac{-z}{\sigma^2} } \right] \le \frac{\delta_\score \eps^2}{1000 \sigma^2}
    \end{align*}
\end{lemma}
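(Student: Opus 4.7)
The plan is to bound $\wh\E[\inner{a(x), \Delta}]$, where $a(x) := s^*(x) - s(x)$ and $\Delta := s(x) + z/\sigma^2$, by a concentration argument around its true mean of zero. The starting observation is Tweedie's identity $s(x) = \E[-z/\sigma^2 \mid x]$, which gives $\E[\Delta \mid x] = 0$ and hence $\E[\inner{a, \Delta}] = 0$. The task therefore reduces to a one-sided concentration of $\wh\E[\inner{a, \Delta}]$ around $0$ at the scale $\delta_\score \eps^2/(1000 \sigma^2)$, which I will establish via a Bernstein-type inequality applied conditionally on the $x_i$'s.

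To apply Bernstein I will first set up a high-probability truncation event controlling the magnitudes of $\Delta_i$. Taking $B = O(\sqrt{d + \log(m/\delta_\train)})$, a standard Gaussian tail bound together with a union bound gives $\|z_i\|_2 \le B\sigma$ for all $i$ with probability at least $1 - \delta_\train/10$. For the true score, I will use $\E[\|s(x)\|^2] \le d/\sigma^2$ (which follows by applying Jensen's inequality to $s(x) = -\E[z \mid x]/\sigma^2$) together with Markov plus a union bound to obtain a pointwise bound $\|s(x_i)\|_2 \le K/\sigma$ uniformly in $i$ on another event of probability at least $1 - \delta_\train/10$. On the intersection (the ``good event'' $G$), the norms $\|\Delta_i\|_2 = \|s(x_i) + z_i/\sigma^2\|_2$ are all pointwise bounded.

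Conditional on $\{x_i\}$ inside $G$, the $W_i := \inner{a(x_i), \Delta_i}$ are independent with conditional mean zero (again by Tweedie), and their conditional variances satisfy $\Var(W_i \mid x_i) = a(x_i)^T \Cov(\Delta \mid x_i) a(x_i) \le \|a(x_i)\|^2 \cdot \|\Cov(\Delta \mid x_i)\|_{\mathrm{op}}$, which is controlled by the truncation bound on $\|\Delta_i\|$. Using the $L^2$ assumption $\E[\|a\|^2] \le \delta_\score \delta_\train \eps^2/(C\sigma^2)$ together with a Markov bound on the empirical average $\wh\E[\|a\|^2]$, the averaged conditional variance is small. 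Plugging into Bernstein's inequality will give $\wh\E[\inner{a, \Delta}] \le \delta_\score \eps^2/(1000\sigma^2)$ with probability at least $1 - \delta_\train$, at the stated sample complexity $m = \wt O((d + \log(1/\delta_\train))/(\eps^2 \delta_\score))$.

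The step I expect to be the main obstacle is the bookkeeping around $\|s(x_i)\|$: because only an $L^2$ bound is available on the true score, the pointwise bound obtained via Markov plus union bound carries a polynomial dependence on $m$ that then enters the Bernstein variance. The key verification is to check that this $m$-dependence cancels against the $m$ in the exponent of Bernstein, so that the final sample complexity still scales linearly in $d$ rather than polynomially. The exponentially small mass outside the Gaussian truncation ball is negligible and is folded into an additive error, exactly as in the analogous step of the main proof in Appendix~\ref{sec:sample_complexity_appendix}.
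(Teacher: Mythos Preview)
There is a genuine gap at exactly the spot you flagged as the ``main obstacle'': the $m$-dependence from the Markov-based truncation of $\|s(x_i)\|$ does \emph{not} cancel. If the only input on $s$ is $\E[\|s(x)\|^2]\le d/\sigma^2$, then Markov plus a union bound over $m$ samples forces $K^2\gtrsim md/\delta_\train$. Plugging this into your variance proxy gives
\[
\frac{1}{m}\sum_i \Var(W_i\mid x_i)\ \lesssim\ \wh\E[\|a\|^2]\cdot \frac{K^2}{\sigma^2}\ \lesssim\ \frac{\delta_\score\eps^2}{C\sigma^2}\cdot \frac{md}{\delta_\train\sigma^2},
\]
and the Bernstein (or Chebyshev) exponent for the sample mean at level $t=\delta_\score\eps^2/(1000\sigma^2)$ becomes
\[
\frac{m t^2}{\text{variance}}\ \asymp\ \frac{C\,\delta_\train\,\delta_\score\,\eps^2}{d},
\]
which is independent of $m$. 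So no number of samples suffices unless $C\gtrsim d/(\delta_\train\delta_\score\eps^2)$, contradicting that $C$ is a constant. The cancellation you hoped for happens, but it leaves a vacuous constraint rather than a useful one.

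The missing ingredient is that $s(x)$ has much better than $L^2$ control: since $s(x)=\E[-z/\sigma^2\mid x]$, Jensen applied to the moment generating function shows $v^\top s(x)$ is $O(1/\sigma^2)$-subgaussian for every unit $v$ (this is Lemma~\ref{lem:score_directionally_subgaussian} in the paper), hence $\|s(x)\|^2\lesssim (d+\log(1/\delta))/\sigma^2$ with probability $1-\delta$ (Lemma~\ref{lem:score_bound_whp}). With this subexponential tail, the truncation level for $\|\Delta\|^2$ is $O\bigl((d+\log\frac{1}{\delta})/\sigma^2\bigr)$, independent of $m$ up to logs, and now the variance/Chebyshev computation goes through at the stated sample complexity. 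Two smaller issues also need attention: (i) once you condition on $G$, which involves the event $\|z_i\|\le B\sigma$, the conditional mean $\E[\Delta_i\mid x_i, G]$ is no longer exactly zero, so you must bound the induced bias (the paper does this via Lemma~\ref{lem:syamantak}, using the subexponential tail of $\|\Delta\|^2$ to control $\E[\|\Delta\|^2\mid \bar E]$); (ii) Bernstein with bounded increments also requires a pointwise bound on $|W_i|\le \|a(x_i)\|\cdot\|\Delta_i\|$, but $\|a(x_i)\|$ is not pointwise bounded---this is why the paper uses Chebyshev, which only needs a second-moment bound.
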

\begin{proof}
Note that $s(x) = \E_{z|x}\left[\frac{-z}{\sigma^2}\right]$ so that
\begin{align*}
    \E\left[\inner{s^*(x) - s(x), s(x) - \frac{-z}{\sigma^2}} \right] = 0
\end{align*}
Also, for any $\delta$, with probability $1 - \delta$, by Lemmas~\ref{lem:score_bound_whp}~and~\ref{lem:gaussian_norm_concentration},
\begin{align*}
    \|s(x) - \frac{-z}{\sigma^2}\|^2 \le C \frac{d + \log \frac{1}{\delta}}{\sigma^2}.
\end{align*}
for some constant $C$.
Let $E$ be the event that $\|s(x) - \frac{-z}{\sigma^2}\|^2 \le C \frac{d + \log \frac{d}{\eps^2\delta_\score \delta_\train}}{\sigma^2}$. Since $\|s(x) - \frac{-z}{\sigma^2}\|^2$ is subexponential with parameter $\sigma^2$ and has mean $O\left(\frac{d}{\sigma^2} \right)$ by the above, by Lemma~\ref{lem:syamantak}, we have
\begin{align*}
    \E\left[\|s(x) - \frac{-z}{\sigma^2}\|^2 | \Bar E \right]  \lesssim \frac{d + \log \frac{d}{\eps^2 \delta_\score \delta_\train}}{\sigma^2}
\end{align*}
Then, since $\Pr[E] \ge 1 - \frac{\eps^2 \delta_\score\delta_\train}{100(d + \log \frac{d}{\eps^2 \delta_\score \delta_\train})} \geq \frac{1}{2}$ and $\E\left[\|s^*(x) - s(x)\|^2 | \Bar E \right] \le \E\left[\|s^*(x) - s(x)\|^2 \right]/\Pr[\Bar E]$, we have
\begin{align*}
    &\E\left[ \inner{s^*(x) - s(x), s(x) - \frac{-z}{\sigma^2}} | E\right]\\
    &= -\frac{\E\left[ \inner{s^*(x) - s(x), s(x) - \frac{-z}{\sigma^2}} | \Bar E\right] \Pr[\Bar E]}{\Pr[E]}\\
    &\le \frac{\sqrt{\E\left[\|s^*(x) - s(x)\|^2 | \Bar E\right] \E\left[\|s(x) - \frac{-z}{\sigma^2} \|^2 | \Bar E \right]} \Pr[\Bar E]}{\Pr[E]}\\
    &\leq 2\sqrt{\E[\norm{s^*(x) - s(x)}^2] \E\left[\|s(x) - \frac{-z}{\sigma^2}\|^2 | \Bar E \right] \Pr[\Bar E]}\\
    &\lesssim \frac{1}{\sigma^2} \sqrt{\frac{\eps^2 \delta_\score \delta_\train}{C}} \cdot \sqrt{d + \log \frac{d}{\eps^2 \delta_\score \delta_\train}} \cdot \sqrt{\frac{\eps^2\delta_\score \delta_\train}{100 (d + \log \frac{d}{\eps^2 \delta_\score \delta_\train})}} \\
    &\le \frac{\eps^2 \delta_\score \delta_\train}{\sqrt{C} \sigma^2}.
\end{align*}
Moreover,
\begin{align*}
    \E\left[\inner{s^*(x) - s(x), s(x) - \frac{-z}{\sigma^2}}^2 | E\right] &\le \E\left[\|s^*(x) - s(x)\|^2 \|s(x) - \frac{-z}{\sigma^2}\|^2| E\right]\\
    &\lesssim \E\left[\|s^*(x) - s(x)\|^2 (\frac{d + \log \frac{d}{\eps^2 \delta_\score \delta_\train}}{\sigma^2})| E\right]\\
    &\lesssim \frac{\delta_\score \cdot \delta_\train \cdot \eps^2}{C \sigma^2} \cdot \frac{d + \log \frac{d}{\eps^2 \delta_\score \delta_\train}}{\sigma^2}.
\end{align*}
So, by Chebyshev's inequality, with probability $1 - \delta_\train/2$,
\begin{align*}
    \wh \E\left[\inner{s^*(x) - s(x), s(x)-\frac{-z}{\sigma^2}} | E \right] \lesssim \frac{\eps^2 \delta_\score \delta_\train}{\sqrt{C} \sigma^2} +  \frac{1}{\sigma^2} \sqrt{\frac{\delta_\score \cdot \eps^2 \cdot (d + \log \frac{d}{\eps^2 \delta_\score \delta_\train})}{Cm}} \lesssim \frac{\delta_\score \cdot \eps^2}{\sqrt{C}\sigma^2}
\end{align*}
for our choice of $m$. Since $E$ holds except with probability $\frac{\eps^2 \delta_\score \delta_\train}{100 (d + \log \frac{d}{\eps^2 \delta_\score \delta_\train})} \ll \delta_\train$, we have with probability $1 - \delta_\train$ in total,
\begin{align*}
    \wh \E\left[\inner{s^*(x) - s(x), s(x) - 
    \frac{-z}{\sigma^2}} \right] \le \frac{\delta_\score \cdot \eps^2}{1000 \sigma^2}
\end{align*}
for sufficiently large constant $C$.
\end{proof}

\begin{lemma}\label{lem:funcconcentrate}
     Consider any set $\mathcal F$ of functions $f : \R^d \to \R^d$ such that for all $f \in \mathcal F$, \[\Pr_{x \sim p}\left[\|f(x)\| > \eps/\sigma \right] > \delta_\score.\]
     Then, with
     $m > \wt O \left(\frac{1}{\eps^2\delta_\score}   {(d + \log \frac{1}{\delta_\train}) \log \frac{|\mathcal{F}| }{\delta_\train}}\right)$
    samples drawn in Algorithm~\ref{alg:cap}, we have with probability $1 - \delta_\train$,
    \begin{align*}
        \frac{1}{m} \sum_{i=1}^m -2\left(\frac{-z_i}{\sigma^2} - \E\left[\frac{-z}{\sigma^2} | x_i \right]\right)^T f(x_i) + \frac{1}{2} \|f(x_i)\|^2 \ge \frac{\delta_\score \cdot \eps^2}{16 \sigma^2}
    \end{align*}
    holds for all $f \in \mathcal F$.
\end{lemma}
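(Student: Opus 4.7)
The plan is to fix a single $f\in\mathcal F$, establish the desired bound with failure probability $\delta_\train/|\mathcal F|$, and then union bound. Throughout set $w_i := -z_i/\sigma^2 - s(x_i)$ with $s(x_i)=\E[-z/\sigma^2\mid x_i]$ by Tweedie, so that $\E[w_i\mid x_i]=0$ and the $i$-th summand is $L_i := \tfrac12\|f(x_i)\|^2 - 2 w_i^\top f(x_i)$. As a preliminary typical event, choose $B=\Theta(\sqrt{d+\log(m|\mathcal F|/\delta_\train)})$ and condition on the event $E_0 := \{\|w_i\|\le 2B/\sigma\ \text{for all } i\}$; by Gaussian tails for $\|z_i\|$ together with the high-probability score bound of Lemma~\ref{lem:score_bound_whp}, $\Pr[E_0]\ge 1-\delta_\train/3$.

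To lower-bound the empirical mean of $\|f\|^2$, introduce the truncated variable $\ell_i := \min(\|f(x_i)\|^2, \eps^2/\sigma^2)\in[0,\eps^2/\sigma^2]$. The hypothesis on $\mathcal F$ gives $\E[\ell]\ge \delta_\score\eps^2/\sigma^2$, so a multiplicative Chernoff bound on the bounded $\ell_i$ yields, with failure probability at most $\delta_\train/(3|\mathcal F|)$,
\[
  U := \tfrac1m\sum_i \ell_i \;\ge\; \tfrac{\delta_\score\eps^2}{2\sigma^2},
\]
provided $m\gtrsim \log(|\mathcal F|/\delta_\train)/\delta_\score$. In particular $Q := \tfrac1m\sum_i\|f(x_i)\|^2 \ge U$.

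Now do a case split at $T_A := 16B/\sigma$: let $S_L = \{i:\|f(x_i)\|>T_A\}$ and $S_S = \{i:\|f(x_i)\|\le T_A\}$. On $E_0$ and for $i\in S_L$, Cauchy--Schwarz gives $L_i \ge \tfrac12\|f(x_i)\|^2 - \tfrac{4B}{\sigma}\|f(x_i)\| \ge \tfrac14\|f(x_i)\|^2$, hence
\[
  L := \tfrac1m\sum_i L_i \;\ge\; \tfrac{Q}{4} \;-\; \tfrac{2}{m}\sum_{i\in S_S} w_i^\top f(x_i).
\]
For $i\in S_S$ the summands $w_i^\top f(x_i)$ are, conditional on $(x_i)$ and on $E_0$, (essentially) zero-mean and independent, bounded by $2BT_A/\sigma = 32 B^2/\sigma^2$, with $\E[(w_i^\top f(x_i))^2\mid x_i]\le 4B^2\|f(x_i)\|^2/\sigma^2$. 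The total conditional variance is therefore at most $4 m B^2 S/\sigma^2$ where $S := \tfrac1m\sum_{i\in S_S}\|f(x_i)\|^2 \le Q$. Bernstein's inequality, combined with AM--GM to absorb the square-root term into a fraction of $Q$, yields with failure probability at most $\delta_\train/(3|\mathcal F|)$
\[
  \Bigl|\tfrac{2}{m}\sum_{i\in S_S} w_i^\top f(x_i)\Bigr| \;\le\; \tfrac{Q}{16} \;+\; O\!\Bigl(\tfrac{B^2\log(|\mathcal F|/\delta_\train)}{m\sigma^2}\Bigr).
\]
Combining gives $L \ge \tfrac{3Q}{16} - O(B^2\log(|\mathcal F|/\delta_\train)/(m\sigma^2))$, which, using $Q\ge \delta_\score\eps^2/(2\sigma^2)$, exceeds $\delta_\score\eps^2/(16\sigma^2)$ once $m \ge \wt O((d+\log(1/\delta_\train))\log(|\mathcal F|/\delta_\train)/(\eps^2\delta_\score))$; a final union bound over $f\in\mathcal F$ and over the three sub-events completes the proof.

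The main obstacle is the cross term: since $f\in\mathcal F$ is not a priori bounded, $\|f(x_i)\|$ can be arbitrarily large on the observed sample, and a uniform bound on $|w_i^\top f(x_i)|$ would force a $\sim 1/\delta_\score^2$ dependence in $m$ (rather than $1/\delta_\score$). The case split at $T_A=16B/\sigma$ is exactly what avoids this: in the large regime the cancellation $\tfrac12\|f\|^2 - 2\|w\|\|f\| \ge \tfrac14\|f\|^2$ makes the quadratic term dominate, with a $1/4$ factor to spare for absorbing the Bernstein slack; while in the small regime the conditional variance $\Var[w^\top f\mid x]\lesssim B^2\|f(x)\|^2/\sigma^2$ scales with $\|f(x)\|^2$ rather than $T_A^2$, which is precisely what delivers the correct $1/\delta_\score$ scaling. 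A secondary technical point is that conditioning on $E_0$ introduces a bias in the $w_i$'s, but it is exponentially small in $B^2$ and absorbed into the $O(\cdot)$ slack.
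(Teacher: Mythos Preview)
Your proposal is correct and follows essentially the same approach as the paper: both arguments hinge on (i) a high-probability bound $\|w_i\|\lesssim B/\sigma$ via Gaussian and score tail bounds, (ii) a case split on whether $\|f(x_i)\|$ exceeds a threshold $\asymp B/\sigma$, with the quadratic term dominating deterministically in the large case, (iii) Bernstein on the bounded case with conditional variance scaling like $B^2\|f(x_i)\|^2/\sigma^2$, (iv) a Chernoff lower bound on the empirical mass of $\|f(x_i)\|^2$ from the $\delta_\score$ hypothesis, and (v) a union bound over $\mathcal F$. The only cosmetic difference is that the paper packages the case split into a clipped surrogate $g_f$ and applies Bernstein to the whole sum, whereas you split the sum explicitly into $S_L$ and $S_S$; also, your inclusion of $|\mathcal F|$ inside $B$ is unnecessary (the event $E_0$ is $f$-independent) but harmless up to the stated $\wt O$.
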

\begin{proof}
     Define
    \begin{align*}
        h_f(x, z) := -2 \left(\frac{-z}{\sigma^2} - \E\left[\frac{-z}{\sigma^2} | x\right]\right)^T f(x) + \frac{1}{2} \|f(x)\|^2
    \end{align*}
    We want to show that $h_f$ has     \begin{align}
    \label{eq:h_f_non_negative_empirical}
        \wh \E\left[h_f(x, z) \right] := \frac{1}{m} \sum_{i=1}^m h_f(x_i, z_i) \ge \frac{\delta_\score \eps^2}{16 \sigma^2}
    \end{align}
    for all $f \in \mathcal F$ with probability $1 - \delta_\train$.
    
    Let $B = O\left(\frac{\sqrt{d + \log \frac{m}{\eps \delta_\score\delta_\train}}}{\sigma} \right)$.
    For $f \in \mathcal F$, let 
    \begin{align*}
        g_f(x, z) = \begin{cases}
            B^2 &\text{if } \norm{f(x)} \geq 10B\\
          h_f(x, z) &\text{otherwise}
        \end{cases}
    \end{align*} 
    be a clipped version of $h_f(x, z)$. 
    We will show that for our chosen number of samples $m$, the following hold with probability $1 - \delta_\train$ simultaneously:
    \begin{enumerate}
        \item For all $i$, $\left\|\frac{-z_i}{\sigma^2}\right\| \le B$.
            \item For all $i$, $\left\|\E[\frac{-z}{\sigma^2} | x_i]\right\| \le B$
        \item $\hat \E[g_f(x, z)] \ge \frac{\delta_\score \eps^2}{16}$ for all $f \in \mathcal F$
    \end{enumerate}
    To show that these together imply \eqref{eq:h_f_non_negative_empirical}, note that whenever $g_f(x_i, z_i) \neq h_f(x_i, z_i)$, $\|f(x_i)\| \ge 10 B$. So, since $\|\frac{-z_i}{\sigma^2}\| \le B$ and $\|\E[\frac{-z}{\sigma^2} | x_i]\| \le B$,
    \begin{align*}
        h_{f}(x_i, z_i) = -2 \left(\frac{-z_i}{\sigma^2} - \E\left[\frac{-z}{\sigma^2} | x_i\right]\right)^T f(x_i) + \frac{1}{2} \|f(x_i)\|^2 \ge -4 B \|f(x_i)\| + \frac{1}{2} \|f(x_i)\|^2 \ge B^2 \ge g_f(x_i, z_i).
    \end{align*}
    So under conditions $1,2,3$, for all $f \in \mathcal F$,
    \begin{align*}
        \hat \E[h_f(x, z)] \ge \hat \E[g_f(x, z)] \ge \frac{\delta_\score \eps^2}{16 \sigma^2}
    \end{align*}
    So it just remains to show that conditions $1, 2, 3$ hold with probability $1 - \delta_\train$ simultaneously.
    \begin{enumerate}
        \item \textbf{For all i, $\left\|\frac{-z_i}{\sigma^2}\right\| \le B$.}  Holds with probability $1 - \delta_\train/3$ by Lemma~\ref{lem:gaussian_norm_concentration} and the union bound.
        \item \textbf{For all i, $\left\|\E\left[\frac{-z}{\sigma^2} \mid x_i\right]\right\| \le B$.} Holds with probability $1 - \delta_\train/3$ by Lemma~\ref{lem:norm_Z_R_concentration} and the union bound.
        \item \textbf{$\hat \E[g_f(x, z)] \ge \frac{\delta_\score \eps^2}{16\sigma^2}$ for all $f \in \mathcal F$.} 
        
        Let $E$ be the event that $1.$ and $2.$ hold. Let $a_i = \min(\|f(x_i)\|, 10B)$. We proceed in multiple steps.
        \begin{itemize}
            \item Conditioned on $E$, $|g_f(x_i, z_i)| \lesssim B^2$.
        
        If $\|f(x_i)\| \ge 10 B$, $|g_f(x_i, z_i)| = B^2$ by definition. On the other hand, when $\|f(x_i)\| < 10 B$, since we condition on $E$,
        \begin{align*}
            |g_f(x_i, z_i)| = |h_f(x_i, z_i)| &= \left|-2 \left(\frac{-z_i}{\sigma^2} - \E\left[\frac{-z}{\sigma^2}|x_i \right] \right)^T f(x_i) + \frac{1}{2} \|f(x_i)\|^2\right| \lesssim B^2
        \end{align*}
        \item
        $\E\left[ g_f(x_i, z_i) | E, a_i\right] \gtrsim a_i^2 - O(\delta_\train B^2)$.
        
        First, note that by definition of $g_f(x, z)$, for $a_i = 10 B$,
        \begin{align*}
            \E\left[g_f(x_i, z_i) | a_i = 10 B \right] = B^2
        \end{align*}
        Now, for $a_i < 10 B$,
        \begin{align*}
            \E\left[g_f(x_i, z_i) | a_i \right] &= \E\left[h_f(x_i, z_i) | a_i \right]\\
            &= \E_{x_i  | \|f(x_i)\| = a_i} \left[ \E_{-z | x_i}\left[h_f(x_i, z) \right] \right]\\
        \end{align*}
        Now, note that
        \begin{align*}
            \E_{z | x}\left[ h_f(x, z)\right] = \frac{1}{2} \|f(x)\|^2
        \end{align*}
        So, for $a < 10 B$
        \begin{align*}
            \E\left[g_f(x_i, z_i) | a_i \right] = \frac{1}{2} a_i^2
        \end{align*} 
        Now let $g_f^{\text{clip}}(x_i, z_i)$ be a clipped version of $g_f(x_i, z_i)$, clipped to $\pm CB^2$ for sufficiently large constant $C$. We have, by above,
        \begin{align*}
            \E[g_f(x_i, z_i) | a_i, E]  = \E[g_f^{\text{clip}}(x_i, z_i) | a_i, E]
        \end{align*}
        But,
        \begin{align*}
            \E[g_f^\text{clip}(x_i, z_i)| a_i, E] &\ge  \E[g_f^\text{clip}(x_i, z_i) | a_i] - O(\delta_\train B^2)\\
            &\gtrsim a_i^2 - O(\delta_\train B^2)
        \end{align*}
        \item $\Var(g_f(x_i, z_i) | a_i, E) \lesssim a_i^2 B^2$.
        
        For $a_i = 10B$, we have, by definition of $g_f(x, z)$,
        \begin{align*}
            \Var(g_f(x_i, z_i) | a_i, E) \lesssim B^4 \lesssim a_i^2 B^2
        \end{align*}
        On the other hand, for $a_i < 10 B$,
        \begin{align*}
            \Var(g_f(x_i, z_i) | a_i, E) &\le \E\left[g_f(x_i, z_i)^2 | a_i, E \right]\\
            &= \E\left[\left(-2 \left(\frac{-z_i}{\sigma^2} - \E\left[\frac{-z}{\sigma^2} | x_i \right] \right)^T f(x_i) + \frac{1}{2} \|f(x_i)\|^2 \right)^2 \right]\\
            &\lesssim a_i^2 B^2
        \end{align*}
        by Cauchy-Schwarz.
        \item With probability $1 - \delta_\train/3$, for all $f \in \mathcal F$, $\wh \E[g_f(x_i, z_i)] \gtrsim \Omega\left( \frac{\eps^2 \delta_\score}{\sigma^2}\right)$ 

        Using the above, by Bernstein's inequality, with probability $1 - \delta_\train/6$,
        \begin{align*}
            \wh \E\left[ g_f(x_i, z_i) |a_i, E\right] \gtrsim \frac{1}{n} \sum_{i=1}^n a_i^2 - O(\delta_\train B^2) - \frac{1}{n} B \sqrt{\sum_{i=1}^n a_i^2 \log \frac{1}{\delta_\train}} - \frac{1}{n} B^2 \log \frac{1}{\delta_\train}
        \end{align*}
        Now, note that since $\Pr_{x \sim p_\sigma}\left[\|f(x)\| > \eps/\sigma \right] \ge \delta_\score$, we have with probability $1 - \delta_\train/6$, for $n > O\left(\frac{\log \frac{1}{\delta_\train}}{\delta_\score}\right)$
        \begin{align*}
            \frac{1}{n} \sum_{i=1}^n a_i^2 \ge \Omega\left( \frac{\eps^2 \delta_\score}{\sigma^2}\right) 
        \end{align*}
        So, for $n > O\left(\frac{B^2 \cdot \sigma^2 \log \frac{1}{\delta_\train} }{\eps^2 \delta_\score} \right)$, we have, with probability $1 - \delta_\train/3$,
        \begin{align*}
             \wh \E\left[ g_f(x_i, z_i) | E\right] \gtrsim \Omega\left(\frac{\eps^2\delta_\score}{\sigma^2} \right) - O(\delta_\train B^2)
        \end{align*}
        Rescaling so that $\delta_\train \le O\left(\frac{\eps^2 \delta_\score}{\sigma^2 \cdot B^2} \right)$, for $n > O\left(\frac{B^2 \cdot \sigma^2 \log \frac{B^2 \cdot \sigma^2}{\eps^2 \delta_\score \delta_\train}}{\eps^2 \cdot \delta\score} \right)$, we have, with probability $1 - \delta_\train/3$,
        \begin{align*}
            \wh \E\left[ g_f(x_i, z_i) | E\right] \gtrsim \Omega\left(\frac{\eps^2\delta_\score}{\sigma^2} \right)
        \end{align*}
        Combining with 1. and 2. gives the claim for a single $f \in \mathcal F$. Union bounding over the size of $\mathcal F$ gives the claim.
        \end{itemize}
    \end{enumerate}
\end{proof}

%
%
%

\begin{corollary}
\label{cor:score_error_difference}
     Let $\mathcal{H}_{\text{bad}}$ be a set of score functions such that for all $\tilde s \in \mathcal{H}_{\text{bad}}$, \[
        D^{\delta_{\score}}_{q}(\tilde s, s) > {\eps}/{\sigma}.
     \] Then, for
     $m > \wt O\left( \frac{1}{\eps^2\delta_\score} {(d + \log \frac{1}{\delta_\train}) \log \frac{|\mathcal{H}_\text{bad}|}{\delta_\train}}\right)$
     samples drawn by Algorithm~\ref{alg:cap}, we have with probability $1- \delta_\train$, 
    \begin{align*}
        \widehat \E\left[\|\tilde s(x) - \frac{-z}{\sigma^2}\|^2 - \|s(x) - \frac{-z}{\sigma^2}\|^2\right] \ge \frac{\delta_\score \eps^2}{16 \sigma^2}
    \end{align*}
    for all $\tilde s \in \mathcal{H}_{\text{bad}}$.
\end{corollary}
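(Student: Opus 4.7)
The plan is to reduce Corollary~\ref{cor:score_error_difference} directly to Lemma~\ref{lem:funcconcentrate} by taking the error family $\mathcal{F} = \{\tilde s - s : \tilde s \in \mathcal{H}_{\text{bad}}\}$ and then matching the two expressions via the Tweedie/score-matching algebra that already appears in the proof overview (Section~\ref{sec:proof_overview_training}).

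First I would verify the hypothesis of Lemma~\ref{lem:funcconcentrate} for this family. Fix $\tilde s \in \mathcal{H}_{\text{bad}}$ and set $f := \tilde s - s$. By the assumption $D^{\delta_{\score}}_{q}(\tilde s, s) > \eps/\sigma$ and the definition~\eqref{eq:D_p^delta_def}, we have $\Pr_{x \sim q}[\|f(x)\| > \eps/\sigma] > \delta_{\score}$, so $f$ lies in the family to which the lemma applies. Since $\tilde s \mapsto \tilde s - s$ is injective, $|\mathcal F| = |\mathcal{H}_{\text{bad}}|$, and so the sample bound $m > \wt O\!\left(\frac{1}{\eps^2\delta_\score}(d+\log\tfrac{1}{\delta_\train})\log\tfrac{|\mathcal H_{\text{bad}}|}{\delta_\train}\right)$ in the corollary matches the hypothesis of the lemma exactly.

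Next I would rewrite the quantity inside the corollary's empirical expectation in a form that lines up with the lemma. Writing $\tilde s(x) = s(x) + f(x)$ and expanding the square gives
\[
\Bigl\|\tilde s(x) - \tfrac{-z}{\sigma^2}\Bigr\|^2 - \Bigl\|s(x) - \tfrac{-z}{\sigma^2}\Bigr\|^2 = 2\Bigl(s(x) - \tfrac{-z}{\sigma^2}\Bigr)^T f(x) + \|f(x)\|^2.
\]
Applying Tweedie's formula $s(x) = \E[\tfrac{-z}{\sigma^2}\mid x]$ to the inner product, the right-hand side equals
\[
-2\Bigl(\tfrac{-z}{\sigma^2} - \E[\tfrac{-z}{\sigma^2}\mid x]\Bigr)^T f(x) + \|f(x)\|^2.
\]
This is exactly the integrand that Lemma~\ref{lem:funcconcentrate} controls, except that the lemma carries the coefficient $\tfrac12\|f(x)\|^2$ rather than $\|f(x)\|^2$.

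To finish, I would invoke Lemma~\ref{lem:funcconcentrate} with a union bound over $\mathcal F$ (of size $|\mathcal H_{\text{bad}}|$) so that with probability $1-\delta_{\train}$ the lemma's bound holds for all $\tilde s \in \mathcal H_{\text{bad}}$ simultaneously. Then the corollary's empirical expectation exceeds the lemma's by the nonnegative quantity $\tfrac12\,\wh{\E}[\|f(x)\|^2]$, so it is at least $\tfrac{\delta_{\score}\eps^2}{16\sigma^2}$. There is no real obstacle here beyond bookkeeping: the content is the identification of the error family and the algebraic identity above; the quantitative work is entirely contained in Lemma~\ref{lem:funcconcentrate}.
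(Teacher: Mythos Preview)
Your proposal is correct and follows essentially the same argument as the paper: set $f=\tilde s - s$, expand the difference of squares, use Tweedie's identity $s(x)=\E[\tfrac{-z}{\sigma^2}\mid x]$, and invoke Lemma~\ref{lem:funcconcentrate}. You are actually slightly more explicit than the paper in noting that the full $\|f(x)\|^2$ dominates the $\tfrac12\|f(x)\|^2$ in the lemma's integrand; the paper leaves this implicit. One small redundancy: Lemma~\ref{lem:funcconcentrate} already delivers the bound uniformly over all $f\in\mathcal F$, so no additional union bound is needed when you invoke it.
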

\begin{proof}
    We have, for $f(x) := \tilde{s}(x) - s(x)$,
    \begin{align*}
        \|\tilde s(x) - \frac{-z}{\sigma^2}\|^2 - \|s(x) - \frac{-z}{\sigma^2}\|^2 &= \|f(x) + (s(x) - \frac{-z}{\sigma^2})\|^2 - \|s(x) - \frac{-z}{\sigma^2}\|^2\\
        &= \|f(x)\|^2 + 2 (s(x) - \frac{-z}{\sigma^2})^T f(x)\\
        &= \|f(x)\|^2 - 2 (\frac{-z}{\sigma^2} - \E[\frac{-z}{\sigma^2} | x])^T f(x)
    \end{align*}
    since $s(x) = \E\left[\frac{-z}{\sigma^2} | x \right]$ by Lemma~\ref{lem:smoothed_score}. Then, by definition, for $s \in \mathcal{H}_{\text{bad}}$, for the associated $f$, $\Pr[\|f(x)\| > \eps/\sigma] > \delta_\score$. So, by Lemma~\ref{lem:funcconcentrate}, the claim follows.
\end{proof}

\subsection{Score Training for Neural Networks}
Now we are ready to apply the finite function class to neural networks by a net argument. In particular, we first prove a version that uses a Frobenious norm to bound the weight vector. 

\begin{lemma}
 \label{lem:neural_net_Frob}
      Let $q_0$ be a distribution with second moment $m_2^2$. Let $\phi_{\theta}(\cdot)$ be the fully connected neural network with ReLU activations parameterized by $\theta$, with $P$ total parameters and depth $D$. Let $\Theta > 1$. Suppose there exists some weight vector $\theta^*$ with $\|\theta^*\|_F \le \Theta$ such that
    \begin{align*}
        \E_{x \sim q_t}\left[ \|\phi_{\theta^*}(x) - s_t(x)\|^2 \right] \le         \frac{\delta_{\text{score}} \cdot \delta_\train \cdot \eps^2}{C \cdot \sigma_t^2}
    \end{align*}
    for a sufficiently large constant $C$. By taking $m$ i.i.d. samples from $q_0$ to train score $s_t$, when \[
        m > \wt O\left(\frac{(d + \log \frac{1}{\delta_{\train}}) \cdot P D}{\eps^2 \delta_\score}  \cdot \log \left(\frac{\max(m_2, 1)\cdot \Theta}{\delta_\train} \right)\right),
    \] the empirical minimizer $\phi_{\wh \theta}$ of the score matching objective used to estimate $s_t$ (over $\phi_\theta$ with $\|\theta\|_F \le \Theta$) satisfies
    \[
        D^{\delta_{\score}}_{q_t}(\phi_{\wh \theta}, s_t) \le \eps / \sigma_t.
    \]
    with probability $1 - \delta_\train$.
\end{lemma}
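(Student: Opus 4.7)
The plan is to reduce the lemma to \cref{thm:scoreestimation} by a standard $\eta$-net / covering argument over the parameter space. Concretely, I would cover the Frobenius ball $\{\theta \in \R^P : \|\theta\|_F \le \Theta\}$ at resolution $\eta$, yielding a net $\mathcal{N}_\eta$ of size at most $(3\Theta/\eta)^P$, and apply \cref{thm:scoreestimation} to the finite hypothesis class $\mathcal{H}_\eta := \{\phi_\theta : \theta \in \mathcal{N}_\eta\}$, whose log-cardinality is $P\log(3\Theta/\eta)$. If $\log(1/\eta)$ can be bounded by $O(D\log \Theta) + \text{polylog}$, then $\log|\mathcal{H}_\eta| = \wt O(PD\log\Theta)$, which combined with \cref{thm:scoreestimation} gives precisely the sample complexity claimed.

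The core technical ingredient is a parameter-Lipschitz bound for depth-$D$ ReLU networks with Frobenius-bounded weights: for all $\theta, \theta'$ in the ball and every input $x$,
\[
  \|\phi_\theta(x) - \phi_{\theta'}(x)\| \;\le\; L(\Theta, D)\cdot(1+\|x\|)\cdot \|\theta - \theta'\|_F, \qquad L(\Theta, D) = \Theta^{O(D)}.
\]
This follows by a straightforward layer-by-layer induction, using that each layer's operator norm is bounded by its Frobenius norm, which in turn is at most $\Theta$. Since $q_t$ has $\E_{x \sim q_t}\|x\|^2 \le m_2^2 + d\sigma_t^2 = \poly(d)$ under assumption \textbf{A1}, Markov's inequality gives $\Pr_{x \sim q_t}[\|x\| > K] \le \delta_\score/4$ for some $K = \poly(d, m_2, 1/\delta_\score)$. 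Taking $\eta = \eps\,\delta_\score\,\delta_\train/(\sigma_t \cdot L(\Theta, D)\cdot K \cdot C')$ for a sufficiently large constant $C'$ guarantees that, outside a $\delta_\score/4$-probability event, any $\theta$ in the full ball and its nearest net point produce outputs within $\eps/(4\sigma_t)$. With this choice $\log(1/\eta) = O(D\log\Theta) + \text{polylog}(d, m_2, 1/\eps, 1/\delta_\score, 1/\delta_\train, 1/\sigma_t)$, matching the target.

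It then remains to (i) transfer the $L^2$ approximation hypothesis from $\theta^*$ to its nearest net point $\theta^*_{\text{net}} \in \mathcal{N}_\eta$; the Lipschitz bound together with $\E_{q_t}[(1+\|x\|)^2] = \poly(d)$ shows that $\phi_{\theta^*_{\text{net}}}$ still satisfies the $L^2$ approximation hypothesis with a constant factor absorbed by the slack in $C$; and (ii) relate the empirical minimizer $\wh\theta$ over the full parameter class to its nearest net point $\wh\theta_{\text{net}}$, so that the $D^{\delta_\score/2}_{q_t}$ guarantee for $\phi_{\wh\theta_{\text{net}}}$ at accuracy $\eps/(2\sigma_t)$ given by \cref{thm:scoreestimation} transfers to $\phi_{\wh\theta}$ at accuracy $\eps/\sigma_t$ with quantile $\delta_\score$, which is exactly what the lemma asserts.

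The main obstacle is ensuring that the unavoidable $\Theta^{O(D)}$ Lipschitz factor enters only logarithmically (through $\log(1/\eta)$) into the final sample complexity, not polynomially as in \cite{BMR20}; this is exactly the source of the exponential improvement in $D$ and $\Theta$. A secondary subtlety is that the parameter-Lipschitz constant depends on $\|x\|$, so the net-discretization error cannot be made uniformly small; we must settle for smallness on a high-probability subset of $q_t$, which dovetails perfectly with the $D^{\delta}_{q_t}$ framework and is in fact why the weaker quantile notion is both natural and essentially necessary for a $\polylog(1/\sigma_t)$-style bound.
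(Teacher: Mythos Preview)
Your architecture matches the paper's: net over the Frobenius ball, layer-by-layer Lipschitz bound giving $\|\phi_\theta(x)-\phi_{\theta'}(x)\|\le D\Theta^{D-1}\|x\|\,\|\theta-\theta'\|_F$, and the observation that $\Theta^{O(D)}$ enters only through $\log(1/\eta)$. But step~(ii) has a real gap. Applying \cref{thm:scoreestimation} to $\mathcal{H}_\eta$ as a black box yields a $D^\delta$ guarantee for $\argmin_{h\in\mathcal{H}_\eta}\wh\E[l(h)]$, \emph{not} for $\wh\theta_{\text{net}}$ (the net point nearest the full-class minimizer $\wh\theta$). These are different points of the net, and the theorem as stated says nothing about the latter.

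To close this you need the stronger \emph{uniform} statement inside the proof of \cref{thm:scoreestimation}---namely \cref{cor:score_error_difference}, which says that \emph{every} bad net point simultaneously has empirical loss exceeding $l(s^*)$ by a quantitative margin $\Omega(\delta_\score\eps^2/\sigma^2)$. You must then show that the \emph{empirical} losses $\wh\E[l(\phi_{\wt\theta},x,z)]$ and $\wh\E[l(\wt h,x,z)]$ differ by less than this margin, which amounts to bounding the cross term $\wh\E\bigl[\langle\phi_{\wt\theta}(x)-\wt h(x),\,\wt h(x)+z/\sigma^2\rangle\bigr]$ on the realized samples $(x_i,z_i)$. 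This requires controlling $\frac{1}{m}\sum\|x_i\|^2$ and $\|z_i\|/\sigma$---a sample-level bound, not the population Markov bound you invoke---and is exactly where the $\max(m_2,1)$ factor in the final statement arises. The paper does precisely this: it never invokes \cref{thm:scoreestimation} as a black box, but expands $l(\phi_{\wt\theta})-l(\phi_{\theta^*})$ via the net point $\wt h$ of $\wt\theta$, applies \cref{cor:score_error_difference} to the $\wt h$-versus-$s$ term, and separately bounds the empirical cross term. It also does \emph{not} round $\theta^*$ to the net (your step~(i)); keeping $\phi_{\theta^*}$ and handling its terms via Markov and \cref{lem:inner_product_term} avoids a second empirical cross term.

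Once patched this way, your argument becomes the paper's.
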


\begin{proof}
Per the notation discussion above, we set $s = s_t$ and $\sigma = \sigma_t$. Note that $\sigma < 1$ since $\sigma_t^2 = 1 - e^{-2t}$.

For any function $f$ denote
\[
    l(f, x, z) := \left\|f(x) - \frac{-z}{\sigma^2}\right\|^2,
\]
we will show that for every $\wt \theta$ with $\|\wt \theta\|_F \le \Theta$ such that $D_q^{\delta_\score}(\phi_{\wt \theta}, s) > \eps/\sigma$, with probability $1 - \delta_\train$,
\[
    \wh \E\left[l(\phi_{\wt \theta}, x, z) - l(\phi_{\theta^*}, x, z) \right] > 0
\]
so that the empirical minimizer $\phi_{\wh \theta}$ is guaranteed to have
\begin{align*}
    D_q^{\delta_\score}(\phi_{\wh \theta}, s) \le \eps/\sigma.
\end{align*}

First, note that since the ReLU activation is contractive, the total Lipschitzness of $\phi_\theta$ is at most the product of the spectral norm of the weight matrices at each layer.
For any $\theta$, consider $\wt \theta$ such that
\begin{align*}
    \|\wt \theta - \theta\|_{F} \le \frac{\tau}{\sigma D \Theta^{D-1}}
\end{align*}
Let $M_1, \dots, M_D$ be the weight matrices at each layer of the neural net $\phi_\theta$, and let $\wt M_1, \dots, \wt M_D$ be the corresponding matrices of $\phi_{\wt \theta}$. 

We now show that $\norm{\phi_{\wt \theta}(x) - \phi_\theta(x)}$ is small, using a hybrid argument.  Define $y_i$ to be the output of a neural network with weight matrices $M_1, \dotsc, M_i, \wt{M}_{i+1}, \dotsc, \wt{M}_D$ on input $x$, so $y_0 = \phi_{\wt \theta}(x)$ and $y_D = \phi_\theta(x)$.  Then we have

\begin{align*}
      \norm{y_{i} - y_{i+1}} &\leq  \norm{x} \cdot \left(\prod_{j \leq i} \norm{M_j}\right) \cdot\norm{M_{i+1} - \wt{M}_{i+1}}\cdot\left(\prod_{j > i+1} \norm{\wt{M}_j}\right)\\
      &\leq \norm{x} \Theta^{D-1} \norm{\wt{\theta} - \theta}_F
\end{align*}
and so
\[
\norm{\phi_{\wt \theta}(x) - \phi_\theta(x)}_2 = \norm{y_0 - y_D} \leq \sum_{i=0}^{D-1} \norm{y_{i} - y_{i+1}} \leq \norm{x} D \Theta^{D-1} \norm{\wt{\theta} - \theta}_F \leq \norm{x} \cdot \tau / \sigma.
\]

Note that the dimensionality of $\theta$ is $P$. So, we can construct a $\frac{\tau}{\sigma D \Theta^{D-1} }$-net $N$ over the set $\{\theta : \|\theta\|_F \le \Theta\}$ of size $O\left(\frac{\sigma D\Theta^{D-1}}{\tau}\right)^{P}$, so that for any $\theta$ with $\|\theta\|_F \le \Theta$, there exists $\wt \theta \in N$ with
\begin{align*}
    \|\phi_{\wt \theta}(x) - \phi_{\theta}(x)\|_2 \leq (\tau/\sigma) \cdot \|x\|
\end{align*}
Let $\mathcal H = \{\phi_{\wt \theta} : \wt \theta \in N\}$. Then, we have that for every $\theta$ with $\|\theta\|_F \le \Theta$, there exists $h \in \mathcal H$ such that
\begin{align}
    \label{eq:net_inequality}
    \wh \E\left[\|h(x) - \phi_\theta(x)\|^2  \right] \leq (\tau/\sigma)^2 \cdot \frac{1}{m} \sum_{i=1}^m \|x_i\|^2
\end{align}

Now, choose any $\wt \theta$ with $\|\wt \theta\|_F \le \Theta$ and $D_q^{\delta_\score}(\phi_{\wt \theta}, s) > \eps/\sigma$, and let $\wt h \in \mathcal H$ satisfy the above for $\wt \theta$.

We will set
\begin{align}
\label{eq:tau_setting}
    \tau = \frac{C' \eps^2 \delta_\score }{\Theta^D\left(m \cdot \max(m_2^2,1) + (d + \log \frac{m}{\delta_\train})\right)}
\end{align}
for small enough constant $C'$. So, $|\mathcal H| = O\left(\frac{\sigma D \Theta^{D-1}}{\tau}  \right)^P < O\left(\frac{D \Theta^{2D-1} (m \cdot m_2^2 + (d + \log \frac{m}{\delta_\train})}{\eps^2 \delta_\score} \right)^P$, since $\sigma < 1$.

So, our final choice of $m$ satisfies $m > \wt O\left( \frac{1}{\eps^2 \delta_\score} \left(d + \log \frac{1}{\delta_\train} \right) \log \frac{|\mathcal H|}{\delta_\train}\right)$. 

We have
\begin{align}
    \label{eq:neural_net_big_bound}
    \begin{split}
    &l(\phi_{\wt \theta}, x, z) - l(\phi_{\theta^*}, x, z)\\
    &= \|\phi_{\wt \theta}(x) - \frac{-z}{\sigma^2} \|^2 - \|\phi_{\theta^*}(x) - \frac{-z}{\sigma^2}\|^2\\
    &=\| \phi_{\wt \theta}(x) - \wt h(x)\|^2 + 2\inner{\phi_{\wt \theta}(x) - \wt h(x), \wt h(x) - \frac{-z}{\sigma^2}} + \|\wt h(x) - \frac{-z}{\sigma^2}\|^2\\
    &- \|s(x) - \frac{-z}{\sigma^2}\|^2 - \|\phi_{\theta^*}(x) - s(x)\|^2 - 2\inner{\phi_{\theta^*}(x) - s(x), s(x) - \frac{-z}{\sigma^2}}
    \end{split}
\end{align}
Now, by Corollary~\ref{cor:score_error_difference}, for our choice of $m$, with probability $1 - \delta_\train/4$ for every $h \in \mathcal H$ with $D_q^{\delta_\score/2}(h, s) > \eps/(2\sigma)$ simultaneously,
\begin{align*}
    \wh \E\left[\|h(x) - \frac{-z}{\sigma^2}\|^2 - \|s(x) - \frac{-z}{\sigma^2}\|^2 \right] \ge \frac{\delta_\score \eps^2}{128 \sigma^2}
\end{align*}
By Markov's inequality, with probability $1 - \delta_\train/4$,
\begin{align*}
    \wh \E\left[\|\phi_{\theta^*}(x) - s(x)]\|^2 \right]\le \frac{\delta_\score \cdot \eps^2}{250 \sigma^2}
\end{align*}
By Lemma~\ref{lem:inner_product_term}, with probability $1 - \delta_\train/4$,
\begin{align*}
    \wh \E\left[\inner{\phi_{\theta^*}(x) - s(x), s(x) - \frac{-z}{\sigma^2}} \right] \le \frac{\delta_\score \eps^2}{1000 \sigma^2}
\end{align*}
Plugging into  \eqref{eq:neural_net_big_bound} we have 
that with probability $1 - 3\delta_\train/4$, for every $\wt h$ satisfying $D_q^{\delta_\score/2}(\wt h, s) > \eps/(2 \sigma)$,
\begin{align}
    \wh \E\left[l(\phi_{\wt \theta}, x, z) - l(\phi_{\theta^*}, x, z) \right] &\ge \frac{\delta_\score \eps^2}{128 \sigma^2} 
- \frac{\delta_\score \cdot \eps^2}{250 \sigma^2} - 2 \cdot  \frac{\delta_\score \eps^2}{1000 \sigma^2} + 2 \inner{\phi_{\wt \theta}(x) - \wt h(x), \wt h(x) - \frac{-z}{\sigma^2}}\notag\\
&\geq \frac{\delta_\score \cdot \eps^2}{600 \cdot \sigma^2} +  2 \inner{\phi_{\wt \theta}(x) - \wt h(x), \wt h(x) - \frac{-z}{\sigma^2}}
    \label{eq:simplified_diff_neural_net}
\end{align}
Now, we will show that $D_q^{\delta_\score/2}(\wt h, s) > \eps/(2 \sigma)$, as well as bound the last term above, for every $\wt{\theta}$ satisfying $D_q^{\delta_\score}(\phi_{\wt \theta}, s) > \eps/\sigma$ and $\wt{h} \in \mathcal{H}$ the rounding of $\phi_{\wt{\theta}}$.

By the fact that $q_0$ has second moment $m_2^2$, we have that with probability $1 - \delta$ over $x$,
\begin{align*}
    \|x\| \le \frac{m_2}{\sqrt{\delta}} + \sigma \left( \sqrt{d} + \sqrt{2 \log \frac{1}{\delta}}\right)
\end{align*}

Now since $D_q^{\delta_\score}(\phi_{\wt \theta}, s) > \eps/\sigma$, and $\|\wt h(x) - \phi_{\wt \theta}(x)\| \le (\tau/\sigma)\cdot \|x\|$, we have, with probability at least $ \delta_\score/2$,
\begin{align*}
    \|\wt h(x) - s(x)\| &\ge  \|\phi_{\wt \theta}(x) - s(x)\|- \|\wt h(x) - \phi_{\wt \theta}(x)\|\\
    &\ge \eps/\sigma - (\tau/\sigma) \cdot \left(\frac{m_2}{\sqrt{\delta_\score/2}} + \sigma \left( \sqrt{d} + \sqrt{2 \log \frac{2}{\delta_\score}}\right)\right)\\
    &\ge \eps/(2 \sigma)
\end{align*}
for our choice of $\tau$ as in \eqref{eq:tau_setting}. So, we have shown that $D_q^{ \delta_\score/2}(\wt h, s) > \eps/(2 \sigma)$.

Finally, we bound the last term in \eqref{eq:simplified_diff_neural_net} above. We have by \eqref{eq:net_inequality} and a union bound, with probability $1 - \delta_\train/8$,
\begin{align*}
    \wh \E\left[\|\wt h(x) - \phi_{\wt \theta}(x)\|^2 \right] &\lesssim (\tau/\sigma)^2 \cdot \left(\frac{m \cdot m_2^2}{\delta_\train} + \sigma^2 \left(d + \log \frac{m}{\delta_\train} \right)\right)\\
    &\lesssim (\tau/\sigma)^2 \cdot \left(\frac{m \cdot m_2^2}{\delta_\train} + \left(d + \log \frac{m}{\delta_\train} \right) \right) \quad \text{since $\sigma < 1$}\\
    &\lesssim \frac{C'^2 \eps^4 \delta_\score^2}{\sigma^2 \Theta^{2D}} \cdot \frac{1}{\frac{m \cdot \max(m_2^2, 1)}{\delta_\train} + \left(d + \log \frac{m}{\delta_\train} \right)} \quad \text{by \eqref{eq:tau_setting}}
\end{align*}
Similarly, with probability $1 - \delta_\train/8$,
\begin{align*}
    \wh \E\left[\|\wt h(x) - \frac{-z}{\sigma^2}\|^2 \right] &\lesssim \Theta^D \cdot \left(\frac{m \cdot m_2^2}{\delta_\train} + \sigma^2 \left(d + \log \frac{m}{\delta_\train} \right)  \right) + \frac{1}{\sigma^2} \left(d + \log \frac{m}{\delta_\train}\right)\\
\end{align*}
So, putting the above together, with probability $1 - \delta_\train/4$,
\begin{align*}
    \wh \E\left[\|\wt h(x) - \phi_{\wt \theta}(x)\|^2 \right] \cdot \wh \E\left[\|\wt h(x) - \frac{-z}{\sigma^2}\|^2 \right] &\lesssim \frac{C'^2\eps^4 \delta_\score^2}{\sigma^2} \left(1 + \frac{1}{\sigma^2} \right)\\
    &\lesssim \frac{C'^2\eps^4 \delta_\score^2}{\sigma^4} \quad \text{since $\sigma < 1$}
\end{align*}
So, with probability $1 - \delta_\train/4$, for some small enough constant $C'$,
\begin{align*}
    \wh \E\left[\inner{\phi_{\wt \theta}(x) - \wt h(x), \wt h(x) - \frac{-z}{\sigma^2}}\right] &\ge - \sqrt{\wh \E\left[\|\wt h(x) - \phi_{\wt \theta}(x)\|^2 \right] \cdot \wh \E\left[\|\wt h(x) - \frac{-z}{\sigma^2} \|^2 \right]}\\
    &\ge - \frac{\delta_\score \eps^2}{2000 \cdot \sigma^2}
\end{align*}
So finally, combining with \eqref{eq:simplified_diff_neural_net}, we have with probability $1 - \delta_\train$
\begin{align*}
    \wh \E\left[l(\phi_{\wt \theta}, x, z) - l(\phi_{\theta^*}, x, z) \right] \ge \frac{\delta_{\score} \cdot \eps^2}{1000 \cdot \sigma^2} > 0
\end{align*}
So, we have shown that with probability $1 - \delta_\train$, for every $\wt \theta$ with $\|\wt \theta \|_F \le \Theta$ and $D_q^{\delta_\score}(\phi_{\wt \theta}, s) > \eps/\sigma$,
\begin{align*}
    \wh E\left[l(\phi_{\wt \theta}, x, z) - l(\phi_{\theta^*}, x, z) \right] > 0
\end{align*}
so that the empirical minimizer $\phi_{\wh \theta}$ is guaranteed to have
\begin{align*}
    D_q^{\delta_\score}(\phi_{\wh \theta}, s) \le \eps/\sigma.
\end{align*}
\end{proof}

Then we have our main lemma as a direct corollary:
\begin{proof}[Proof of \Cref{lem:neural_net_quantitative}]
    For every $\wt \theta$ with $\|\wt \theta\|_\infty \le \Theta$, we have \[
        \|\wt \theta\|_F \le \sqrt{P} \|\wt \theta\|_\infty \le \sqrt{P} \Theta.
    \]
    Then, by applying \cref{lem:neural_net_Frob} directly, we prove the lemma.
\end{proof}

\section{Sampling with our score estimation guarantee}
\label{appendix:sampling}
In this section, we show that diffusion models can converge to the true distribution without necessarily adhering to an $L^2$ bound on the score estimation error. A high probability accuracy of the score is sufficient.





In order to simulate the reverse process of \eqref{eq:forward_SDE} in an actual algorithm, the time was discretized into $N$ steps. The $k$-th step ends at time $t_k$, satisfying $0 \le t_0 < t_1 < \dots < t_N = T - {\gamma}$. The algorithm stops at $t_N$ and outputs the final state $x_{T - t_N}$.

To analyze the reverse process run under different levels of idealness, we consider these four specific path measures over the path space $\mathcal{C}([0, T - {\gamma}]; \R^d)$:
\begin{itemize}
    \item Let $Q$ be the measure for the process that  \begin{equation*}
        \d x_{T-t} = \left(x_{T-t} + 2{s}_{T-t}(x_{T - t})\right)\d t + \sqrt{2} \d{{B}_t}, \quad x_T \sim q_T.
    \end{equation*}
        \item Let $Q_{\dis}$ be the measure for the process that for $t \in [t_k, t_{k+1}]$, \begin{equation*}
        \d x_{T-t} = \left(x_{T-t} + 2{s}_{T-t_k}(x_{T - t_k})\right)\d t + \sqrt{2} \d{{B}_t}, \quad x_T \sim q_T.
    \end{equation*}
    \item Let $\overline{Q}$ be the measure for the process that for $t \in [t_k, t_{k+1}]$, \begin{equation*}
        \d x_{T-t} = \left(x_{T-t} + 2\wh{s}_{T-t_k}(x_{T - t_k})\right)\d t + \sqrt{2} \d{{B}_t}, \quad x_T \sim q_T.
    \end{equation*}
        \item Let $\wh{Q}$ be the measure for the process that for $t \in [t_k, t_{k+1}]$, \begin{equation*}
        \d x_{T-t} = \left(x_{T-t} + 2\wh{s}_{T-t_k}(x_{T - t_k})\right)\d t + \sqrt{2} \d{{B}_t}, \quad x_T \sim \mathcal{N}(0, I_d).
    \end{equation*}
\end{itemize}
To summarize, $Q$ represents the perfect reverse process of \eqref{eq:forward_SDE}, $Q_{\dis}$ is the discretized version of $Q$, $\overline{Q}$ runs $Q_{\dis}$ with an estimated score, and $\wh{Q}$ starts $\overline{Q}$ at $\mathcal{N}(0, I_d)$ --- effectively the actual implementable reverse process.

Recent works have shown that under the assumption that the estimated score function is close to the real score function in $L^2$, then the output of $\wh{Q}$ will approximate the true distribution closely. Our next theorem shows that this assumption is in fact not required, and it shows that our score assumption can be easily integrated in a black-box way to achieve similar results.

\begin{lemma}[Score Estimation guarantee]
\label{lem:sampling_score_estimation_guarantee}
    Consider an arbitrary sequence of discretization times $0 = t_0 < t_1 < \dots < t_N = T - \gamma$,
    and let $\sigma_t := \sqrt{1 - e^{-2t}}$.
    Assume that for each $k \in \{0, \dots, N-1\}$, the following holds: 
    \[
    D_{q_{T - t_k}}^{\delta/N}(\wh{s}_{T - t_k}, s_{T - t_k}) \le \frac{\eps}{\sigma_{T - t_k}}.
    \]
    Then, we have \[
        \Prb[Q]{\sum_{k = 0}^{N-1} \ltwo*{\wh{s}_{T-t_k}(x_{T - t_k}) - s_{T-t_k}(x_{T - t_k})}(t_{k+1} - t_k) \le {\eps^2}\left(T + \log \frac{1}{\gamma}\right)} \ge 1 - \delta.
    \]
\end{lemma}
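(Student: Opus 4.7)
The key observation is that under the measure $Q$, which is the \emph{exact} (non-discretized, true-score) reverse SDE started at $x_T \sim q_T$, the marginal law of $x_{T - t_k}$ is exactly $q_{T - t_k}$. This is the defining property of Anderson's time reversal applied to the forward SDE \eqref{eq:forward_SDE}. Consequently, the hypothesis
\[
D_{q_{T-t_k}}^{\delta/N}(\wh s_{T-t_k}, s_{T-t_k}) \le \eps/\sigma_{T-t_k}
\]
translates directly into the statement that, under $Q$,
\[
\Pr_Q\!\left[\,\norm{\wh s_{T-t_k}(x_{T-t_k}) - s_{T-t_k}(x_{T-t_k})} > \eps/\sigma_{T-t_k}\,\right] \le \delta/N.
\]

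The plan is then to union bound this tail event over the $N$ discretization times. On the complement $G$, which has $\Pr_Q[G] \ge 1-\delta$, the squared score error at every time $t_k$ is at most $\eps^2/\sigma_{T-t_k}^2$, so the quantity in the lemma is deterministically bounded by
\[
\sum_{k=0}^{N-1}\frac{\eps^2}{\sigma_{T-t_k}^2}(t_{k+1}-t_k).
\]
It then remains to show this sum is at most $\eps^2\bigl(T + \log(1/\gamma)\bigr)$ (up to a constant absorbed into the lemma's $O$-notation if present, but as stated it should be exact enough via the integral bound).

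The analytic step is a Riemann-sum-to-integral comparison. Substitute $u = T - t$: since $u \mapsto 1/\sigma_u^2 = 1/(1-e^{-2u})$ is decreasing in $u$, the map $t \mapsto 1/\sigma_{T-t}^2$ is increasing in $t$ on each subinterval $[t_k, t_{k+1}]$. Therefore the left-endpoint value underestimates the integral:
\[
\frac{t_{k+1}-t_k}{\sigma_{T-t_k}^2} \le \int_{t_k}^{t_{k+1}} \frac{\d t}{\sigma_{T-t}^2}.
\]
Summing and changing variables back to $u = T - t$ gives
\[
\sum_{k=0}^{N-1}\frac{t_{k+1}-t_k}{\sigma_{T-t_k}^2} \le \int_{\gamma}^{T}\frac{\d u}{1-e^{-2u}} = \tfrac12\log(e^{2T}-1) - \tfrac12\log(e^{2\gamma}-1) \le T + \tfrac12\log\tfrac{1}{\gamma} + O(1),
\]
using $e^{2\gamma}-1 \ge 2\gamma$ for the second term. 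This is bounded by $T + \log(1/\gamma)$ up to constants, as required.

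The only potentially delicate point is justifying that $x_{T-t_k}\sim q_{T-t_k}$ under $Q$: this is the standard Anderson reversal fact, and is essentially what makes the ``perfect'' process $Q$ useful as a reference measure. Neither the discretized law $Q_{\dis}$ nor $\overline{Q}$ nor $\wh Q$ would give this clean marginal, which is why the lemma is phrased as a probability under $Q$. Everything else (union bound, monotonicity of $1/\sigma_u^2$, integral evaluation) is routine.
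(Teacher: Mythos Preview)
Your proposal is correct and follows essentially the same argument as the paper: use that under $Q$ the marginal of $x_{T-t_k}$ is $q_{T-t_k}$, union bound over the $N$ steps, and on the good event bound the resulting Riemann sum by the integral $\int_\gamma^T \frac{\d u}{1-e^{-2u}}$ via the monotonicity of $u\mapsto 1/\sigma_u^2$. One minor remark: your hedge ``up to constants'' is unnecessary---the integral evaluates to at most $T + \tfrac12\log\tfrac{1}{\gamma} - \tfrac12\log 2$, which is genuinely $\le T + \log\tfrac{1}{\gamma}$ for $\gamma<1$, matching the stated bound exactly.
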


\begin{proof}
    Since random variable $x_{T - t_k}$ follows distribution $q_{T - t_k}$ under $Q$, for each $k \in \{0, \dots, N-1\}$, we have \[
        \Prb[Q]{\norm{\wh{s}_{T - t_k}(x_{T - t_k}) - s_{T - t_k}(x_{T - t_k})} \le \frac{\eps}{\sqrt{1 - e^{-2(T - t_k)}}} }\ge 1 - \frac{\delta}{N}.
    \]  
    Using a union bound over all $N$ different $\sigma$ values, it follows that with probability at least $1 - \delta$ over $Q$, the inequality \[
        \ltwo*{\wh{s}_{T-t_k}(x_{T - t_k}) - s_{T-t_k}(x_{T - t_k})} \le \frac{\eps^2}{1 - e^{-2(T - t_k)}}.
    \]
     is satisfied for every $k \in \{0, \dots, N-1\}$.
     Under this condition, we have
    \begin{align*}
        &\sum_{k = 0}^{N-1} \ltwo*{\wh{s}_{T-t_k}(x_{T - t_k}) - s_{T-t_k}(x_{T - t_k})}(t_{k+1} - t_k) \\
        \le &\sum_{k = 0}^{N-1} \frac{\eps^2}{1 - e^{-2(T - t_k)}}(t_{k+1} - t_k) \\
        \le & \sum_{k = 0}^{N-1} \int_{t_k}^{t_{k+1}} \frac{\eps^2}{1 - e^{-2(T - t_k)}} \d t \\
        \le & \int_{0}^{T - {\gamma}} \frac{\eps^2}{1 - e^{-2(T - t_k)}} \d t \\
        \le & {\eps^2}\left(T + \log \frac{1}{ \gamma}\right).
    \end{align*}
    Hence, we find that\[
        \Prb[Q]{\sum_{k = 0}^{N-1} \ltwo*{\wh{s}_{T-t_k}(x_{T - t_k}) - s_{T-t_k}(x_{T - t_k})}(t_{k+1} - t_k) \le {\eps^2}\left(T + \log \frac{1}{\gamma}\right)} \ge 1 - \delta.
    \]
\end{proof}

\begin{lemma}[Score estimation error to TV]
  \label{lem:main-TV-bound}
  Let $q$ be an arbitrary distribution. 
If the score estimation satisfies that  \begin{equation}
\label{eq:score_error_assumption}
    \Prb[Q]{\sum_{k = 0}^{N-1} \ltwo*{\wh{s}_{T-t_k}(x_{T - t_k}) - s_{T-t_k}(x_{T - t_k})}(t_{k+1} - t_k) \le \eps^2} \ge 1 - \delta,
  \end{equation}
  then the output distribution $p_{T - t_N}$ of $\wh{Q}$ satisfies \[
      \textup{\TV}(q_{\gamma}, p_{T - t_N}) \lesssim \delta + \eps + \TV(Q, Q_{\dis}) + \TV(q_T, \mathcal{N}(0, I_d)).
  \]
\end{lemma}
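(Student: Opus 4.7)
By the data processing inequality, since $q_\gamma$ and $p_{T-t_N}$ are the time-$(T - t_N)$ marginals of the path measures $Q$ and $\wh Q$ respectively, we have $\TV(q_\gamma, p_{T-t_N}) \le \TV(Q, \wh Q)$. Applying the triangle inequality through the two intermediate path measures gives
\[
\TV(Q, \wh Q) \le \TV(Q, Q_{\dis}) + \TV(Q_{\dis}, \overline Q) + \TV(\overline Q, \wh Q).
\]
The discretization term $\TV(Q, Q_{\dis})$ is passed verbatim to the final bound. The initialization term is handled by another data-processing argument: $\overline Q$ and $\wh Q$ follow identical dynamics and differ only in their starting distribution, so coupling the initial conditions optimally under a common Brownian motion yields $\TV(\overline Q, \wh Q) \le \TV(q_T, \mathcal N(0, I_d))$. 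The entire remaining task is therefore to show $\TV(Q_{\dis}, \overline Q) \lesssim \delta + \eps$.

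\paragraph{Score-estimation TV via a good event.} Let $E$ denote the path-space event of bounded cumulative score error,
\[
E = \Bigl\{ \textstyle\sum_{k=0}^{N-1} \|\wh s_{T-t_k}(x_{T-t_k}) - s_{T-t_k}(x_{T-t_k})\|^2(t_{k+1}-t_k) \le \eps^2 \Bigr\},
\]
which satisfies $\Pr_Q(E^c) \le \delta$ by hypothesis. Using the standard decomposition $\TV(\mu,\nu) \le \mu(E^c) + \nu(E^c) + \TV(\mu|_E, \nu|_E)$, it suffices to (i) transfer the hypothesis to a bound on $\Pr_{Q_{\dis}}(E^c)$ and $\Pr_{\overline Q}(E^c)$, and (ii) bound the conditional TV. For (ii), note that $Q_{\dis}$ and $\overline Q$ share diffusion coefficient $\sqrt 2$ and differ only in drift by $2(s - \wh s)$ at the piecewise-constant times. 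Girsanov's theorem then gives
\[
\KL(Q_{\dis} \,\|\, \overline Q) \;=\; \E_{Q_{\dis}}\Bigl[ \textstyle\sum_k \|\wh s_{T-t_k}(x_{T-t_k}) - s_{T-t_k}(x_{T-t_k})\|^2 (t_{k+1}-t_k) \Bigr],
\]
which is bounded by $\eps^2$ on the conditioning event $E$, and Pinsker's inequality converts this to $\TV \lesssim \eps$. Collecting pieces yields $\TV(Q_{\dis}, \overline Q) \lesssim \delta + \eps$.

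\paragraph{Main obstacle.} The delicate step is making the ``conditional Girsanov'' argument rigorous. Girsanov naturally controls the \emph{unconditional} KL, while the hypothesis only controls the cumulative squared error on $E$; rare extreme paths in $E^c$ could in principle blow up the unconditional expectation. The standard workaround is a stopping-time truncation: define a modified estimate $\wh s'$ that reverts to the true score $s$ the first time the running cumulative squared error would exceed $\eps^2$. Girsanov then applies \emph{unconditionally} to the SDE driven by $\wh s'$, yielding the required $\eps^2$ KL bound, and since the modified and original estimated-score SDEs coincide on $E$, exchanging one for the other costs only $\Pr(E^c) \lesssim \delta$ in TV. A secondary subtlety is that the hypothesis is phrased under $Q$, whereas the TV decomposition naturally involves $\Pr_{Q_{\dis}}(E^c)$ and $\Pr_{\overline Q}(E^c)$; the transfer from $Q$ to $Q_{\dis}$ costs at most the already-present $\TV(Q, Q_{\dis})$, and the transfer from $Q_{\dis}$ to $\overline Q$ is avoided by using the truncation construction, which sidesteps the apparent circularity.
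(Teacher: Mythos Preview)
Your proposal is correct and follows essentially the same approach as the paper. Both argue via the triangle inequality through $Q_{\dis}$ and $\overline Q$, handle the initialization error by data processing, and---crucially---make the Girsanov step rigorous by introducing a truncated score that switches once the running cumulative squared error crosses $\eps^2$, so that Novikov's condition and the KL bound hold deterministically. The only cosmetic difference is the direction of the switch: the paper's auxiliary process $\widetilde Q$ uses the \emph{true} score until the threshold is crossed and then the estimated score (so $\widetilde Q$ coincides with $Q_{\dis}$ on $E$ and Girsanov compares $\widetilde Q$ with $\overline Q$), whereas your $\wh s'$ does the mirror image (coinciding with $\overline Q$ on $E$ and comparing to $Q_{\dis}$ via Girsanov). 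Both variants yield the same bound, and your remark that the transfer of $\Pr(E^c)$ from $Q$ to $Q_{\dis}$ costs only the already-present $\TV(Q,Q_{\dis})$ matches the paper's handling exactly.
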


\begin{proof}
    We will start by bounding the $\TV$ distance between $Q_{\dis}$ and $\overline{Q}$.
    We will proceed by defining $\widetilde{Q}$ and arguing that both $\TV(Q_{\dis}, \widetilde{Q})$  and $\TV(\widetilde{Q}, \overline{Q})$ are small. By the triangle inequality, this will imply that $Q$ and $\overline{Q}$ are close in $\TV$ distance.

    \paragraph{Defining $\widetilde{Q}$.} For $k \in \{0, \dots, N-1\}$, consider event \[
  E_k := \left( \sum_{i=0}^{k} \ltwo*{\wh{s}_{T-t_i}(x_{T - t_i}) - s_{T-t_i}(x_{T - t_i})}(t_{i+1} -t_i) \le \eps^2 \right),
  \]
  which represents that the accumulated score estimation error along the path is at most $\eps^2$ for a discretized diffusion process.
  
  Given $E_k$, we define a version of $Q_{\dis}$ that is forced to have a bounded score estimation error. Let $\widetilde{Q}$ over $\mathcal{C}((0, T], \mathbb{R}^d)$ be the law of a modified reverse process initialized at $x_T \sim q_T$, and for each $t \in [t_k, t_{k+1})$,
  \begin{equation}
\label{eq:define_P_E}
      \d x_{T-t} = -\left(x_{T-t} + 2\widetilde{s}_{T-t_k}(x_{T - t_k})\right)\d t + \sqrt{2} \d B_t,
  \end{equation}
  where \[
      \widetilde{s}_{T-t_k}(x_{T - t_k}) := \left \{
        \begin{array}{ll}
            {s}_{T-t_k}(x_{T - t_k}) & E_k \text{ holds}, \\
            \wh{s}_{T-t_k}(x_{T - t_k}) & E_k \text{ doesn't hold}.
        \end{array}
      \right.
  \]
 
  This SDE guarantees that once the accumulated score error exceeds $\eps_{score}^2$ ($E_k$ fails to hold), we switch from the true score to the estimated score. Therefore, we have that the following inequality always holds: \begin{equation}
\label{eq:bounded_score_difference}
    \sum_{k = 0} ^{N-1} \ltwo{\widetilde{s}_{T-t_k}(x_{T - t_k}) - \wh{s}_{T-t_k}(x_{T - t_k})} (t_{k+1}-t_{k}) \le \eps^2.
\end{equation}

  \paragraph{$Q_{\dis}$ and  $\widetilde{Q}$ are close.}
    By (\ref{eq:score_error_assumption}), we have \[
    \Prb[Q_{\dis}]{E_0 \wedge \dots \wedge E_{N-1}} = \Prb[Q_{\dis}]{E_{N-1}}\ge \Prb[Q]{E_{N-1}} - \TV(Q, Q_{\dis}) \ge 1 - \delta - \TV(Q, Q_{\dis}),
  \] 
  Note that when a path $(x_{T-t})_{t \in [0, t_N]}$ satisfies $E_0 \wedge \dots \wedge E_{N-1}$, its probability under $\widetilde{Q}$ is at least its probability under $Q_{\dis}$. Therefore, we have \[
        \TV(Q_{\dis}, \widetilde{Q}) \lesssim \delta + \TV(Q, Q_{\dis}).
    \]
    
  \paragraph{$\widetilde{Q}$ and $\overline{Q}$ are close.}
Inspired by \cite{chen2022}, we utilize Girsanov's theorem (see \cref{thm:girsanov}) to help bound this distance.
Define  \[
     b_r := \sqrt{2}(\widetilde{s}_{T-t_k}(x_{T - t_k}) - \wh{s}_{T-t_k}(x_{T - t_k})),
  \]
  where $k$ is index such that $r \in [t_{k}, t_{k + 1})$.
  We apply the Girsanov's theorem to $(\widetilde{Q}, (b_r))$.
  By \cref{eq:bounded_score_difference}, we have \[
    \int_{0}^{t_N} \ltwo{b_r} \d r \le \sum_{k = 0} ^{N-1} \ltwo{\sqrt{2}(\widetilde{s}_{T-t_k}(x_{T - t_k}) - \wh{s}_{T-t_k}(x_{T - t_k}))} (t_{k+1} - t_k) \le 2\eps^2 < \infty.
  \]
  This satisfies Novikov's condition and tells us that for \[
      \mathcal{E(L)}_t = \exp \left(\int_0^t b_r \d B_r - \frac{1}{2}\int_0^t\ltwo*{b_r} \d r\right),
  \] under measure $\widetilde{Q}' := \mathcal{E(L)}_{t_N}\widetilde{Q}$, there exists a Brownian motion $(\widetilde{B}_t)_{t \in [0, t_N]}$ such that \[
    \widetilde{B}_t =  B_t  - \int_0^{t} b_r \d r,
  \] and thus for $t \in [t_{k}, t_{k+1})$,\[
    \mathrm{d}\widetilde{B}_t = \d B_t + \sqrt{2}(\widetilde{s}_{T-t_k}(x_{T - t_k}) - \wh{s}_{T-t_k}(x_{T - t_k})) \d t.
  \]
  Plug this into $(\ref{eq:define_P_E})$ and we have that for $t \in [t_k, t_{k+1})$ \[
    \d x_{T-t} = -\left(x_{T-t} + 2\wh{s}_{T-t_k}(x_{T - t_k})\right)\d t + \sqrt{2} \d{\widetilde{B}_t}, \quad x_T \sim q_T.
  \]
  This equation depicts the distribution of $x$, and this exactly matches the definition of $\overline{Q}$. Therefore, $\overline{Q} = \widetilde{Q}' = \mathcal{E(L)}_{t_N}\widetilde{Q}$, and we have \[
   \KL \left(\widetilde{Q} \middle \| \overline{Q} \right) = \Ex[\widetilde{Q}]{\ln \frac{\d{\widetilde{Q}}}{\d{\overline{Q}}}} = \Ex[\widetilde{Q}]{\ln \mathcal{E(L)}_{t_N}}.
  \] 

  Then by using (\ref{eq:bounded_score_difference}), we have
  \begin{align*}
      \Ex[\widetilde{Q}]{\ln \mathcal{E(L)}_{t_N}} \lesssim \Ex[\widetilde{Q}]{\sum_{k = 0}^{N-1}\ltwo*{\widetilde s_{T-t_k}(x_{T - t_k}) - \wh{s}_{T-t_k}(x_{T - t_k})} (t_{k+1} - t_k)} \lesssim \eps^2. 
  \end{align*}
  Therefore, we can apply Pinsker's inequality and get \[
      \TV (\widetilde{Q}, \overline{Q}) \le \sqrt{\KL \left(\widetilde{Q} \middle \| \overline{Q} \right)} \lesssim \eps.
  \]
  \paragraph{Putting things together.}
  Using the data processing inequality, we have \[
  \TV(\overline{Q}, \wh{Q}) \le \TV(q_T, \mathcal{N}(0, I_d)).
  \]
  Combining these results, we have 
  \begin{align*}
      \TV (Q, \wh{Q}) &\le \TV(Q, Q_{\dis}) + \TV(Q_{\dis}, \widetilde{Q}) + \TV(\widetilde{Q}, \overline{Q}) + \TV (\overline{Q}, \wh{Q}) \\
      &\lesssim \delta + \eps + \TV(Q, Q_{\dis}) + \TV(q_T, \mathcal{N}(0, I_d)).
  \end{align*}
  Since $q_{\gamma}$ is the distribution for $x_{T - t_N}$ under $Q$ and $p_{T - t_N}$ is the distribution for $x_{T - t_N}$ under $\wh{Q}$, we have \[
      \TV(q_{\gamma}, p_{T - t_N}) \le \TV(Q, \wh{Q}) \lesssim \delta + \eps + \TV(Q, Q_{\dis}) + \TV(q_T, \mathcal{N}(0, I_d)).
  \]
  \end{proof}


\begin{lemma}
    \label{lem:main_sampling_lem_body}
    \label{lem:main_sampling_lem}
    Consider an arbitrary sequence of discretization times $0 = t_0 < t_1 < \dots < t_N = T - \gamma$.
    Assume that for each $k \in \{0, \dots, N-1\}$, the following holds: 
    \[
    D_{q_{T - t_k}}^{\delta/N}(\wh{s}_{T - t_k}, s_{T - t_k}) \le \frac{\eps}{\sigma_{T - t_k}} \cdot \frac{1}{\sqrt{T + \log \frac{1}{\gamma}}}
    \]
    Then, the output distribution $\wh q_{T - t_N}$ satisfies
    \begin{align*}
        \TV(\wh q_{T - t_N}, q_{T - t_N}) \lesssim \delta + \eps + \TV(Q, Q_{\dis}) + \TV(q_T, \mathcal N(0, I_d)).
    \end{align*}
\end{lemma}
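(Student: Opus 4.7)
The plan is to derive this lemma as an almost immediate consequence of the two preceding lemmas (\cref{lem:sampling_score_estimation_guarantee} and \cref{lem:main-TV-bound}), chaining them with a careful rescaling of $\eps$ so that the $T + \log(1/\gamma)$ factor from the integral bound is absorbed into the hypothesis.

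First I would invoke \cref{lem:sampling_score_estimation_guarantee} with the parameter $\eps' := \eps/\sqrt{T + \log(1/\gamma)}$ in place of $\eps$. The hypothesis of the present lemma is exactly $D_{q_{T-t_k}}^{\delta/N}(\wh s_{T-t_k}, s_{T-t_k}) \le \eps'/\sigma_{T-t_k}$ for every $k$, which is what \cref{lem:sampling_score_estimation_guarantee} requires. Its conclusion then gives, with probability at least $1 - \delta$ under the true reverse process measure $Q$,
\[
\sum_{k=0}^{N-1} \ltwo*{\wh s_{T-t_k}(x_{T-t_k}) - s_{T-t_k}(x_{T-t_k})} (t_{k+1} - t_k) \le (\eps')^2 \left(T + \log\tfrac{1}{\gamma}\right) = \eps^2.
\]
Thus the event hypothesized in \cref{lem:main-TV-bound}, namely \eqref{eq:score_error_assumption}, holds with the same $\eps$ and $\delta$ that appear in the present statement.

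Second, I would feed this into \cref{lem:main-TV-bound}, which directly yields that the output distribution $p_{T-t_N}$ of $\wh Q$ satisfies
\[
\TV(q_{\gamma}, p_{T-t_N}) \lesssim \delta + \eps + \TV(Q, Q_{\dis}) + \TV(q_T, \mathcal N(0, I_d)).
\]
Since $\wh q_{T-t_N} = p_{T-t_N}$ by definition of $\wh Q$ and $q_{\gamma} = q_{T-t_N}$ (recall $t_N = T - \gamma$), this is exactly the desired conclusion.

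There is essentially no obstacle here beyond making sure the rescaling $\eps \mapsto \eps'$ is applied consistently and that the indexing $t_N = T - \gamma$ matches the $q_\gamma$ notation used in \cref{lem:main-TV-bound}. The only thing to double-check is that \cref{lem:sampling_score_estimation_guarantee} is stated with the exact same $D_{q_t}^{\delta/N}$ quantile-accuracy hypothesis (it is), so no additional conversion between error notions is needed. The proof is therefore just a two-line composition, and the real content has been pushed into \cref{lem:sampling_score_estimation_guarantee} and \cref{lem:main-TV-bound}.
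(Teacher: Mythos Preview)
Your proposal is correct and matches the paper's proof exactly: the paper simply writes ``Follows by Lemma~\ref{lem:sampling_score_estimation_guarantee} and Lemma~\ref{lem:main-TV-bound},'' and you have spelled out precisely how the two compose, including the rescaling $\eps' = \eps/\sqrt{T + \log(1/\gamma)}$ that makes the factors cancel.
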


\begin{proof}
    Follows by Lemma~\ref{lem:sampling_score_estimation_guarantee} and Lemma~\ref{lem:main-TV-bound}.
\end{proof}



The next two lemmas from existing works show that 
the discretization error, $\TV(Q, Q_{\dis})$, is relatively small. Furthermore, as $T$ increases, $q_T$ converges exponentially towards $\mathcal{N}(0, I_d)$.  

\begin{lemma}[Discretization Error, Corollary 1 and eq. (17) in \cite{benton2023linear}]
\label{lem:benton}
For any $ T \geq 1 $, $\gamma < 1$ and $N \ge \log (1 / \gamma)$, there exists a sequence of $N$ discretization times such that
\[ \TV(Q, Q_\text{\dis}) \lesssim  \sqrt{\frac{d}{N}} \left(T + \log \frac{1}{\gamma} \right). \]
\end{lemma}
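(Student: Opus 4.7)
The plan is to bound $\TV(Q, Q_{\dis})$ by first controlling $\KL(Q \| Q_{\dis})$ via Girsanov's theorem and then applying Pinsker's inequality. Under $Q$ the drift at time $t \in [t_k, t_{k+1}]$ is $x_{T-t} + 2 s_{T-t}(x_{T-t})$, while under $Q_{\dis}$ it is $x_{T-t} + 2 s_{T-t_k}(x_{T-t_k})$; the diffusion coefficient is identical, so Girsanov yields
\[
\KL(Q \| Q_{\dis}) = \sum_{k=0}^{N-1} \E_Q\left[\int_{t_k}^{t_{k+1}} \norm{s_{T-t}(x_{T-t}) - s_{T-t_k}(x_{T-t_k})}^2 \d t\right].
\]
The entire task reduces to making this sum small by a careful choice of discretization times.

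For each interval I would apply It\^o's formula to the process $t \mapsto s_{T-t}(x_{T-t})$ driven by the reverse SDE. This produces a time-derivative term $\partial_t s_t$ (which the Fokker--Planck equation rewrites in terms of $s_t$ and its derivatives) together with a martingale term whose quadratic variation is controlled by $\nabla s_t$. The fundamental moment identity for smoothed scores, $\E_{q_t}[\norm{s_t(x)}^2] \le d/\sigma_t^2$, which follows from Tweedie's formula applied to $q_t = q_0 \ast \mathcal{N}(0, \sigma_t^2 I)$, combined with an analogous control on $\E_{q_t}[\norm{\nabla s_t}_F^2]$, yields a per-interval contribution on the order of $d (t_{k+1} - t_k)^2 / \sigma_{T-t_k}^4$ in expectation (up to lower-order terms coming from the drift part of the reverse SDE).

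The main obstacle is that these per-step bounds blow up as $\sigma_t \to 0$, so a uniform discretization would produce an error that diverges as $\gamma \to 0$. The resolution is to choose non-uniform discretization times matched to this blow-up: pick $\{t_k\}$ so that $\sigma_{T-t_k}$ decays geometrically as $t_k$ approaches $T - \gamma$, while the steps remain roughly uniform in the bulk where $\sigma_t = \Omega(1)$. With this schedule the sum telescopes to
\[
\KL(Q \| Q_{\dis}) \lesssim \frac{d \left(T + \log(1/\gamma)\right)^2}{N},
\]
and Pinsker's inequality gives $\TV(Q, Q_{\dis}) \lesssim \sqrt{d/N}\,(T + \log(1/\gamma))$. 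The hypothesis $N \ge \log(1/\gamma)$ is exactly what allows the geometric schedule to resolve every scale from $\sigma = 1$ down to $\sigma = \gamma$ with at least one step per dyadic bin, which is needed to justify the per-interval It\^o linearization used above.
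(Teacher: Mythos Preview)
The paper does not prove this lemma at all; it is quoted verbatim as Corollary~1 and eq.~(17) of \cite{benton2023linear}, so there is no in-paper argument to compare against. Your sketch is essentially the approach of that reference: Girsanov to express $\KL(Q\|Q_{\dis})$ as the integrated squared score increment, an It\^o expansion of $t\mapsto s_{T-t}(x_{T-t})$ along the reverse SDE, and an exponentially spaced schedule near $t_N$ so that the per-step contribution is uniform across scales. One point of caution: the step where you invoke ``an analogous control on $\E_{q_t}[\norm{\nabla s_t}_F^2]$'' is exactly the delicate part of \cite{benton2023linear}. They do \emph{not} assume any Lipschitz or Hessian bound on the score; instead they exploit the martingale structure of the It\^o expansion and an integration-by-parts identity so that only the second moment $\E_{q_t}[\norm{s_t}^2]\le d/\sigma_t^2$ is needed. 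If you try to bound the Jacobian term directly you will end up needing a smoothness assumption that the lemma as stated does not impose. With that refinement your strategy goes through and the final Pinsker step is correct.
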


\begin{lemma}[TV between true Gaussian and $q_T$ for large $T$, Proposition 4 in \cite{benton2023linear}]
\label{lem:m2_convergence}
Let $q$ be a distribution with a finite second moment of $m_2^2.$ Then, for $T \ge 1$ we have
    \[
    \TV(q_T, \mathcal{N}(0, I_d)) \lesssim (\sqrt{d} + m_2)e^{-T}.
    \]
\end{lemma}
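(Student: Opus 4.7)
The plan is to bound $\TV(q_T, \mathcal{N}(0, I_d))$ via Pinsker's inequality after first upper-bounding the KL divergence, exploiting that the OU transition kernel is Gaussian with mean and covariance exponentially close to those of the stationary distribution. Concretely, conditional on $x_0$, the distribution of $x_T$ under the forward SDE is $p_{T \mid x_0} = \mathcal{N}(e^{-T} x_0, \sigma_T^2 I_d)$ with $\sigma_T^2 = 1 - e^{-2T}$, so $q_T = \E_{x_0 \sim q_0}[p_{T \mid x_0}]$ is a mixture of such Gaussians.

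First I would apply joint convexity of KL divergence (Jensen's inequality) to reduce to the conditional statement:
\[
\KL(q_T \,\|\, \mathcal{N}(0, I_d)) \le \E_{x_0 \sim q_0}\bigl[\KL(p_{T \mid x_0} \,\|\, \mathcal{N}(0, I_d))\bigr].
\]
Then the standard closed-form KL between two Gaussians gives
\[
\KL(p_{T \mid x_0} \,\|\, \mathcal{N}(0, I_d)) = \tfrac{1}{2}\bigl(e^{-2T}\|x_0\|^2 + d\sigma_T^2 - d - d\log\sigma_T^2\bigr).
\]
Writing $u := e^{-2T}$, the variance/log-det term becomes $\sigma_T^2 - 1 - \log\sigma_T^2 = -u - \log(1-u) = \sum_{k \ge 2} u^k / k = O(u^2)$, uniformly in $u \le 1/e^2$ (which holds because $T \ge 1$). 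Taking expectation over $x_0$ and using $\E\|x_0\|^2 = m_2^2$, I would obtain
\[
\KL(q_T \,\|\, \mathcal{N}(0, I_d)) \lesssim e^{-2T} m_2^2 + d\, e^{-4T}.
\]

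Finally, Pinsker's inequality yields
\[
\TV(q_T, \mathcal{N}(0, I_d)) \le \sqrt{\tfrac{1}{2}\KL(q_T \,\|\, \mathcal{N}(0, I_d))} \lesssim e^{-T} m_2 + e^{-2T} \sqrt{d} \lesssim (m_2 + \sqrt{d}) e^{-T},
\]
where the last step again uses $T \ge 1$. There is no real obstacle here; the only subtlety is that the Taylor estimate $-u - \log(1-u) = \Theta(u^2)$ requires $u$ bounded away from $1$, which is exactly the role of the hypothesis $T \ge 1$---without it the $d$-dependent log-det term would not collapse into the claimed $\sqrt{d}\, e^{-T}$ scaling but would instead contribute an $O(\sqrt{d})$ term that does not decay with $T$. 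The $m_2$-dependent term and the Gaussian-mixture reduction via convexity of KL are standard in this literature.
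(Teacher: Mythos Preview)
Your proof is correct. The paper does not actually prove this lemma itself but simply imports it as Proposition~4 of \cite{benton2023linear}; your argument (convexity of KL to reduce to the conditional Gaussian, closed-form Gaussian KL, Taylor expansion of $-u-\log(1-u)$ with $u=e^{-2T}$, then Pinsker) is the standard route and matches what is done in that reference.
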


Combining \cref{lem:benton} and \cref{lem:m2_convergence} with \cref{lem:main_sampling_lem_body}, we have the following result:

\begin{corollary}
    \label{cor:tv-bound}
     Let $q$ be a distribution with finite second moment $m_2^2$. For any $ T \geq 1 $, $\gamma < 1$ and $N \ge \log (1 / \gamma)$, there exists a sequence of discretization times $0 = t_0 < t_1 < \dots < t_N = T - \gamma$ such that if the following holds for each $k \in \{0, \dots, N-1\}$:
    \[
    D_{q_{T - t_k}}^{\delta/N}(\wh{s}_{T - t_k}, s_{T - t_k}) \le \frac{\eps}{\sigma_{T - t_k}},
    \]
then there exists a sequence of $N$ discretization times such that 
\[ 
\TV(q_{\gamma}, p_{T - t_N}) \lesssim \delta +  \eps\sqrt{T + \log \frac{1}{\gamma}} + \sqrt{\frac{d}{N}} \left(T + \log \frac{1}{\gamma} \right) + (\sqrt{d} + m_2)e^{-T}.
\]
\end{corollary}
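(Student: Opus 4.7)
The plan is to chain together the three preceding lemmas (\cref{lem:main_sampling_lem_body}, \cref{lem:benton}, and \cref{lem:m2_convergence}) with only a rescaling of the accuracy parameter. The hypothesis in the corollary is $D^{\delta/N}_{q_{T-t_k}}(\wh s_{T-t_k}, s_{T-t_k}) \le \eps/\sigma_{T-t_k}$, which differs from the hypothesis of \cref{lem:main_sampling_lem_body} by a factor of $\sqrt{T + \log(1/\gamma)}$. So I would first apply \cref{lem:main_sampling_lem_body} with the substitution $\eps \mapsto \eps' := \eps \sqrt{T + \log(1/\gamma)}$; under that substitution, its required score-accuracy condition becomes exactly the one assumed in the corollary, and its output bound becomes
\[
    \TV(\wh q_{T - t_N}, q_\gamma) \lesssim \delta + \eps\sqrt{T + \log(1/\gamma)} + \TV(Q, Q_{\dis}) + \TV(q_T, \mathcal N(0, I_d)).
\]

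Next I would choose the discretization times to be those guaranteed by \cref{lem:benton}, which gives $\TV(Q, Q_\dis) \lesssim \sqrt{d/N}(T + \log(1/\gamma))$, valid because $N \ge \log(1/\gamma)$ as assumed. Since the hypothesis on score accuracy in the corollary is stated for ``an arbitrary sequence of discretization times,'' we are free to pick the particular sequence from \cref{lem:benton}; this is the only place where we use that $N \ge \log(1/\gamma)$, and this is the step I would flag as the most likely source of a subtle issue, because the sequence needs to be the same one used both in the score-accuracy hypothesis and in the discretization bound.

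Finally, I would invoke \cref{lem:m2_convergence} to bound $\TV(q_T, \mathcal N(0, I_d)) \lesssim (\sqrt d + m_2) e^{-T}$, which is justified by the finite second moment and $T \ge 1$. Substituting both bounds into the displayed inequality above yields
\[
    \TV(q_\gamma, p_{T-t_N}) \lesssim \delta + \eps \sqrt{T + \log(1/\gamma)} + \sqrt{d/N}\,(T + \log(1/\gamma)) + (\sqrt d + m_2) e^{-T},
\]
which is exactly the bound claimed. There is no heavy lifting beyond bookkeeping, since all the analytic work (Girsanov, Pinsker, tail control over paths) has already been done in the earlier lemmas; the only thing to verify carefully is that the same discretization sequence is compatible with both the score-accuracy assumption and \cref{lem:benton}, which is fine because the hypothesis is universal over choices of times.
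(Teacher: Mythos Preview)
Your proposal is correct and follows exactly the route the paper takes: the paper's entire justification is the sentence ``Combining \cref{lem:benton} and \cref{lem:m2_convergence} with \cref{lem:main_sampling_lem_body}, we have the following result,'' and you have spelled out precisely those three steps, including the rescaling $\eps \mapsto \eps\sqrt{T+\log(1/\gamma)}$ needed to match hypotheses. Your flagged concern about the discretization sequence is also handled correctly, since \cref{lem:main_sampling_lem_body} applies to an arbitrary sequence and so in particular to the one furnished by \cref{lem:benton}.
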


This implies our main theorem of this section as a corollary. 

\begin{theorem}
\label{thm:sampling_guarantee_m2}
        Let $q$ be a distribution with finite second moment $m_2^2$. For any $\gamma > 0$, there exist $N = \widetilde{O}(\frac{d}{\eps^2+ \delta^2}\log^2\frac{d+m_2}{\gamma})$ discretization times $0 = t_0 < t_1 < \dots < t_N < T$ such that if the following holds for every $k \in \{0, \dots, N - 1\}$,
         \begin{align*}
             D_{q_{T - t_k}}^{\delta/N}(\wh{s}_{T - t_k}, s_{T - t_k}) \le \frac{\eps}{\sigma_{T - t_k}}
         \end{align*}
         then DDPM can produce a sample from a distribution that is within  $\widetilde{O}(\delta + \eps\sqrt{\log\left((d + m_2)/{\gamma}\right)})$ in $\TV$ distance of $q_{\gamma}$ in $N$ steps.
\end{theorem}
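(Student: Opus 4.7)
The plan is to obtain Theorem \ref{thm:sampling_guarantee_m2} directly from Corollary \ref{cor:tv-bound} by choosing the terminal time $T$ and the number of discretization steps $N$ so that the two ``structural'' error terms of Corollary \ref{cor:tv-bound} --- the forward-SDE mixing term $(\sqrt{d}+m_2)e^{-T}$ and the discretization term $\sqrt{d/N}(T+\log\frac{1}{\gamma})$ --- are dominated by the target bound $\wt O(\delta + \eps \sqrt{\log((d+m_2)/\gamma)})$. No new analysis is required; this is essentially a parameter-tuning argument.

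First I would set $T = \Theta(\log\frac{\sqrt{d}+m_2}{\eps})$. This choice makes $(\sqrt{d}+m_2)e^{-T} = O(\eps)$, and it also ensures
\[
T + \log \tfrac{1}{\gamma} \;=\; O\!\left(\log \tfrac{\sqrt{d}+m_2}{\eps} + \log \tfrac{1}{\gamma}\right) \;=\; \wt O\!\left(\log \tfrac{d+m_2}{\gamma}\right),
\]
since the $\log \frac{1}{\eps}$ contribution is absorbed by the $\wt O$ notation. Consequently the score-error term of Corollary \ref{cor:tv-bound} becomes $\eps \sqrt{T+\log\frac{1}{\gamma}} = \wt O(\eps \sqrt{\log((d+m_2)/\gamma)})$, matching the second summand of the claimed bound.

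Next I would choose $N$ so that $\sqrt{d/N}\,(T+\log\frac{1}{\gamma}) \lesssim \delta + \eps$, i.e.
\[
N \;\gtrsim\; \frac{d\,(T+\log\frac{1}{\gamma})^2}{(\delta+\eps)^2} \;=\; \wt O\!\left(\frac{d}{\eps^2 + \delta^2}\,\log^2 \tfrac{d+m_2}{\gamma}\right),
\]
using $(\delta+\eps)^2 \asymp \delta^2+\eps^2$. This is exactly the step count stated in the theorem, and it also satisfies the mild precondition $N \ge \log(1/\gamma)$ required by Corollary \ref{cor:tv-bound} (inherited from Lemma \ref{lem:benton}), since $N$ already carries a $\log^2(1/\gamma)$ factor. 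Plugging $T$ and $N$ into Corollary \ref{cor:tv-bound} and summing the four error contributions yields the claimed TV bound $\wt O(\delta + \eps \sqrt{\log((d+m_2)/\gamma)})$.

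\textbf{Main obstacle.} There is no substantial technical obstacle once Corollary \ref{cor:tv-bound} is in hand --- the argument is just bookkeeping. The only subtle point is verifying that the $\wt O$ notation cleanly absorbs the $\log \frac{1}{\eps}$ and $\log d$ factors that appear when $T$ is substituted into $T+\log\frac{1}{\gamma}$, so that the stated form $\log^2\frac{d+m_2}{\gamma}$ (rather than a more explicit expression involving $\eps$) is justified. This is standard and requires only checking that $\log\frac{1}{\eps}$ and $\log d$ are at most polylogarithmic in $\frac{d+m_2}{\gamma}$ in the parameter regime of interest (otherwise $\eps$ or $\delta$ would already be trivial to achieve).
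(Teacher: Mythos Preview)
Your proposal is correct and essentially identical to the paper's proof: the paper also just plugs specific choices of $T$ and $N$ into Corollary~\ref{cor:tv-bound}, taking $T = \log\bigl(\frac{\sqrt{d}+m_2}{\eps+\delta}\bigr)$ and $N = \frac{d(T+\log(1/\gamma))^2}{\eps^2+\delta^2}$. The only cosmetic difference is that the paper uses $\eps+\delta$ rather than $\eps$ in the denominator of $T$, which makes no difference under the $\widetilde{O}$ accounting.
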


\begin{proof}
By setting $T = \log(\frac{\sqrt{d} + m_2}{\eps + \delta})$ and $N = \frac{d(T + \log(1 / \gamma))^2}{\eps^2 + \delta^2}$ in \cref{cor:tv-bound}, we have \[
    \TV(q_{\gamma}, p_{T - t_N}) = \widetilde{O} \left(\delta + \eps\sqrt{\log\frac{d + m_2}{\gamma}}\right).
\]
\end{proof}


Furthermore, we present our theorem under the case when $m_2$ lies between $1 / \poly(d)$ and $poly(d)$ to provide a clearer illustration.

\samplingGuarantee*

\section{Sample Complexity of Training Diffusion Model}
\label{sec:end_to_end}
In this section, we present our main theorem that combines our score estimation result in the new \eqref{eq:D_p^delta_def} sense with prior sampling results (from \cite{benton2023linear}) to show that the score can be learned using a number of samples scaling polylogarithmically in $\frac{1}{\gamma}$, where $\gamma$ is the desired sampling accuracy.

\begin{lemma}
 \label{lem:neural_net_overall_time}
  Let $q_0$ be a distribution with second moment $m_2^2$. Let $\phi_{\theta}(\cdot)$ be the fully connected neural network with ReLU activations parameterized by $\theta$, with $P$ total parameters and depth $D$.  Let $\Theta > 1$. For discretization times $0 = t_0 < t_1 < \dots < t_N$, let $s_{T - t_k}$ be the true score function of $q_{T - t_k}$. If for each $t_k$, there exists some weight vector $\theta^*$ with $\|\theta^*\|_\infty \le \Theta$ such that
\begin{equation}
    \label{eq:mini_2}
    \E_{x \sim q_{T - t_k}}\left[\ltwo*{\phi_{\theta^*}(x) - s_{T - t_k}(x)}\right] \le \frac{\delta_\score \cdot \delta_{\train} \cdot \varepsilon^2}{C\sigma_{T - t_k}^2 \cdot N^2}
\end{equation}
Then, if we take  
 $m > \wt O\left(\frac{N (d + \log \frac{1}{\delta_{\train}}) \cdot P D}{\eps^2 \delta_\score}  \cdot \log \left(\frac{\max(m_2, 1)\cdot \Theta}{\delta_\train} \right)\right)$
samples, then with probability $1-\delta_\train$, each score $\hat s_{T - t_k}$ learned by score matching satisfies
\[
    D^{\delta_\score / N}_{q_{T-t_k}}(\hat s_{T - t_k}, s_{T - t_k}) \le \eps / \sigma_{T-t_k}.
\]
\end{lemma}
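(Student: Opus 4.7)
The plan is to reduce this multi-time guarantee to $N$ applications of the single-time Main Lemma (\cref{lem:neural_net_quantitative}), glued together by a union bound. The key observation is that the stated assumption on the $L^2$ representability in \eqref{eq:mini_2} carries exactly the two extra factors of $N$ needed to match the rescaled hypotheses of \cref{lem:neural_net_quantitative} when we tighten both the score accuracy parameter and the training failure probability by a factor of $N$.

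First I would fix an index $k \in \{0, \dots, N-1\}$ and view the score-matching sub-problem at time $t_k$ in isolation. I instantiate \cref{lem:neural_net_quantitative} with the parameter substitutions $\delta_\score \leftarrow \delta_\score / N$ and $\delta_\train \leftarrow \delta_\train / N$, and with the same $\eps$, $\sigma_{T-t_k}$, $\Theta$, $P$, $D$, $m_2$ as given. The hypothesis of \cref{lem:neural_net_quantitative} under this substitution demands a $\theta^*$ with $\|\theta^*\|_\infty \le \Theta$ such that
\[
  \E_{x \sim q_{T-t_k}}\!\left[\|\phi_{\theta^*}(x) - s_{T-t_k}(x)\|^2\right] \le \frac{(\delta_\score/N)\cdot(\delta_\train/N)\cdot\eps^2}{C\,\sigma_{T-t_k}^2} = \frac{\delta_\score \cdot \delta_\train \cdot \eps^2}{C\,\sigma_{T-t_k}^2\,N^2},
\]
which is precisely the representability hypothesis \eqref{eq:mini_2} we are assuming. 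The conclusion of \cref{lem:neural_net_quantitative} then says that, with probability at least $1 - \delta_\train/N$, the empirical minimizer $\wh s_{T-t_k}$ of the score-matching objective over $\{\phi_\theta : \|\theta\|_\infty \le \Theta\}$ satisfies $D^{\delta_\score/N}_{q_{T-t_k}}(\wh s_{T-t_k}, s_{T-t_k}) \le \eps/\sigma_{T-t_k}$, which is the per-time conclusion we need.

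Next I would track the sample count this substitution produces. Plugging $\delta_\score/N$ and $\delta_\train/N$ into the quantitative sample bound of \cref{lem:neural_net_quantitative} yields
\[
  m \ge \wt O\!\left(\frac{N\,(d + \log(N/\delta_\train))\cdot P D}{\eps^2\,\delta_\score}\cdot \log\!\left(\frac{\max(m_2,1)\cdot \Theta\cdot N}{\delta_\train}\right)\right),
\]
which, after absorbing all $\log N$ and $\log\log$ factors into the $\wt O(\cdot)$, matches the sample complexity claimed in the statement. Since the $m$ samples of $q_0$ are used jointly across all $N$ times (each $\wh s_{T-t_k}$ is trained on the \emph{same} $m$ samples, together with fresh independent Gaussians $z_i$ at that noise level per \cref{set:1}), the per-time event at time $t_k$ holds with probability at least $1 - \delta_\train/N$.

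Finally I would close via a union bound over the $N$ discretization times: the probability that some $k$ fails to satisfy $D^{\delta_\score/N}_{q_{T-t_k}}(\wh s_{T-t_k}, s_{T-t_k}) \le \eps/\sigma_{T-t_k}$ is at most $N\cdot(\delta_\train/N) = \delta_\train$, so with probability $\ge 1 - \delta_\train$ all $N$ score estimates meet the required bound simultaneously. There is no real obstacle here since the work was already done in \cref{lem:neural_net_quantitative}; the only thing to be careful about is that the factors of $N^2$ in the denominator of \eqref{eq:mini_2} and of $N$ in the sample complexity arise from the two separate substitutions ($\delta_\score \to \delta_\score/N$ and $\delta_\train \to \delta_\train/N$), and that the Gaussian noise draws $z_i$ at the different times are independent so the union bound applies cleanly.
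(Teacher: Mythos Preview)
Your proposal is correct and follows essentially the same approach as the paper: apply \cref{lem:neural_net_quantitative} at each time $t_k$ with the substitutions $\delta_\score \leftarrow \delta_\score/N$ and $\delta_\train \leftarrow \delta_\train/N$, verify that \eqref{eq:mini_2} supplies exactly the required representability hypothesis, and conclude by a union bound over the $N$ times. The only minor remark is that your closing sentence about independence of the Gaussian draws is unnecessary, since the union bound holds regardless of dependence.
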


\begin{proof}
    Note that for each $t_k$, $q_{T - t_k}$ is a $\sigma_{T- t_k}$-smoothed distribution. Therefore, we can use \Cref{lem:neural_net_quantitative} by taking $\delta_\train / N$ into $\delta_\train$ and taking $\delta_\score / N$ into $\delta_\score$. We have that for each $t_k$, with probability $1 - \delta_{\train} / N$ the following holds: \[
        \Prb[x \sim q_{T - t_k}]{\|\hat s_{T - t_k}(x) - s_{T - t_k}(x)\| \le \eps/\sigma_{T - t_k}} \ge 1 - \delta_{\score} / N.
    \]
    By a union bound over all the steps, we conclude the proposed statement.
\end{proof}

Now we present our main theorems.

\begin{theorem}[Main Theorem, Full Version]
\label{thm:sample_complexity_full}
    Let $q_0$ be a distribution of $\R^d$ with second moment $m_2^2$. 
    Let $\phi_{\theta}(\cdot)$ be the fully connected ReLU network parameterized by $\theta$, with $P$ total parameters and depth $D$, and let $\Theta > 1$.
For any $\varepsilon\in(0,1)$ and $\gamma\in(0,1)$,
there exists  a schedule 
$0=t_0<\cdots<t_N<T$ with
\[
N=\widetilde O\!\left(\frac{d}{\varepsilon^2}\,\log^2 \frac{m_2+1/m_2}{\gamma}\right)
\]
such that, if for every $k$ there exists some $\theta^*$ with $\|\theta^*\|_\infty\le \Theta$ satisfying
       \begin{align*}
            \E_{x \sim q_{T - t_k}}\left[\ltwo*{\phi_{\theta^*}(x) - s_{T - t_k}(x)}\right]  \le \frac{\delta  \eps^3}{C N^2 \sigma_{T-t_k}^2} \cdot \frac{1}{\log \frac{m_2 + 1/m_2}{\gamma} }
      \end{align*}
for a sufficiently large absolute constant $C$ and some $\delta\in(0,1)$, 
then the scores trained from
      \[m \ge \wt O\left(\frac{N (d + \log \frac{1}{\delta   })  P D}{\eps^3}  \cdot \log \left(\frac{\max(m_2, 1) \cdot  \Theta}{\delta} \right) \cdot {\log\frac{m_2 + 1 / m_2}{\gamma}}\right)\]
 i.i.d. samples of $q_0$
yield, with probability at least $1-\delta$ over the training draw, 
a DDPM sampler whose output distribution is $\varepsilon$-close in $\TV$ 
to a distribution that is $\gamma m_2$-close to $q_0$ in 2-Wasserstein, 
using $N$ steps.
    
\end{theorem}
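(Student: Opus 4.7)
The proof combines the per-time training guarantee of Lemma~\ref{lem:neural_net_overall_time} with the TV sampling bound of Corollary~\ref{cor:tv-bound} as black boxes; the entire argument is parameter bookkeeping.

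First I would fix the discretization as in the proof of Theorem~\ref{thm:sampling_guarantee_m2}: set $T := \log\frac{\sqrt d + m_2 + 1/m_2}{\eps + \delta}$ and use the sequence of $N := \wt O(\frac{d}{\eps^2 + \delta^2}\log^2\frac{m_2+1/m_2}{\gamma})$ discretization times supplied by Lemma~\ref{lem:benton}, terminating at $t_N = T - \gamma$. This choice makes the discretization-error term $\sqrt{d/N}(T+\log 1/\gamma)$ and the Gaussian-initialization term $(\sqrt d+m_2)e^{-T}$ inside Corollary~\ref{cor:tv-bound} both $O(\eps+\delta)$, so only the score-estimation contribution is left to control. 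Moreover, stopping at $t_N = T - \gamma$ ensures that $q_\gamma$, the terminal true-score distribution, is $\gamma m_2$-close to $q$ in 2-Wasserstein by a standard property of the OU forward process; this delivers the Wasserstein half of the conclusion.

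Second, write the two parameters that Corollary~\ref{cor:tv-bound} exposes as $\delta_0$ (the quantile parameter appearing in $D^{\delta_0/N}$) and $\eps_0$ (the score-error scale), so its TV conclusion is $\wt O\bigl(\delta_0 + \eps_0\sqrt{\log((d+m_2+1/m_2)/\gamma)}\bigr)$. To land at final TV error $\eps$, I would set $\delta_0 := \eps$ and $\eps_0 := \eps/\sqrt{\log((d+m_2+1/m_2)/\gamma)}$, and then invoke Lemma~\ref{lem:neural_net_overall_time} with $\delta_\score := \delta_0 = \eps$, $\delta_\train := \delta$, and its $\eps$-parameter equal to $\eps_0$. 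Under this substitution, the realizability requirement $\E[\|\phi_{\theta^*}-s_{T-t_k}\|^2] \le \delta_\score\delta_\train\eps_0^2/(CN^2\sigma_{T-t_k}^2)$ collapses to $\delta\eps^3/\bigl(CN^2\sigma_{T-t_k}^2\log((d+m_2+1/m_2)/\gamma)\bigr)$, which is \emph{exactly} the theorem's hypothesis. Lemma~\ref{lem:neural_net_overall_time} then certifies, with probability $1-\delta$ over the samples, that all $N$ learned scores $\hat s_{T-t_k}$ simultaneously satisfy $D^{\eps/N}_{q_{T-t_k}}(\hat s_{T-t_k},s_{T-t_k}) \le \eps_0/\sigma_{T-t_k}$, which is precisely what Corollary~\ref{cor:tv-bound} needs.

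Third, substituting these same parameters into the sample-complexity bound of Lemma~\ref{lem:neural_net_overall_time} gives the advertised count: its $1/(\eps^2\delta_\score)$ factor becomes $1/(\eps_0^2\cdot\eps) = \log((d+m_2+1/m_2)/\gamma)/\eps^3$, producing both the $\eps^{-3}$ dependence and the extra $\log\frac{m_2+1/m_2}{\gamma}$ factor, while the $N$ prefactor carries the $\wt O(d\log^2(1/\gamma))$ dependence through. I do not expect a genuine technical obstacle; the only subtlety is tracking the two distinct meanings of $\delta$ (training-failure probability in the theorem statement versus sampling quantile in Corollary~\ref{cor:tv-bound}) and balancing $\delta_0,\eps_0$ so that both terms of Corollary~\ref{cor:tv-bound}'s TV bound end up on the same order as $\eps$.
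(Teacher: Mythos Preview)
Your overall strategy matches the paper's: invoke the sampling TV bound (Corollary~\ref{cor:tv-bound} / Theorem~\ref{thm:sampling_guarantee_m2}) and feed it score estimates certified by Lemma~\ref{lem:neural_net_overall_time}, with $\delta_\score := \eps$, $\delta_\train := \delta$, and the $\eps$-parameter scaled down by $\sqrt{\log((d+m_2+1/m_2)/\gamma)}$. The parameter bookkeeping you do is essentially what the paper does.

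However, there is one genuine gap. You write that stopping at $t_N = T - \gamma$ ``ensures that $q_\gamma$ is $\gamma m_2$-close to $q$ in 2-Wasserstein by a standard property of the OU forward process.'' That is not true in general. The standard coupling gives $W_2(q, q_t) \lesssim t\, m_2 + \sqrt{t\, d}$: besides the shrinkage term $t m_2$ there is a $\sqrt{t d}$ contribution from the injected Gaussian noise. When $m_2$ is small (the theorem explicitly allows this---note the $1/m_2$ inside the logs in $N$), the term $\sqrt{\gamma d}$ can dominate $\gamma m_2$, so $q_\gamma$ is \emph{not} $\gamma m_2$-close in Wasserstein.

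The paper repairs this by terminating not at $\gamma$ but at $\min(\gamma, \gamma^2 m_2^2/d)$. A quick check shows that for either branch of the minimum, $t m_2 + \sqrt{t d} \lesssim \gamma m_2$ at that stopping time. This smaller stopping time is exactly what forces the $1/m_2$ into $\log\frac{m_2+1/m_2}{\gamma}$ in the statement of $N$ (since $\log\frac{1}{\gamma^2 m_2^2/d}$ contributes a $\log(1/m_2)$ term), and without it you cannot justify that particular form of $N$ either. Once you make this one change, the rest of your argument goes through unchanged.
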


\begin{proof}
For any $t > 0$, the $2$-Wasserstein distance between $q_0$ and $q_t$ is bounded by 
\[
W_2(q_t, q_0) = O(t m_2 + \sqrt{t d}).
\]
We therefore set 
\[
t_N = T - \min\!\left(\gamma,\, \frac{\gamma^2 m_2^2}{d}\right),
\]
which ensures that 
\[
W_2(q_{T - t_N}, q_0) \le \gamma.
\]
The goals is to ensure that the DDPM sampler outputs $\hat p$ satisfying 
$\operatorname{TV}(\hat p, q_{T - t_N}) \le \varepsilon$.
\cref{thm:sampling_guarantee_m2} guarantees that, if for every discretization time
\[
D_{\alpha/N}^{\,q_{T-t_k}}\!\left(\hat s_{T-t_k},\,s_{T-t_k}\right)
\ \le\ \frac{\eta}{\sigma_{T-t_k}},
\]
then with a suitable choice of $T$ there exists a schedule with
\[
N\;=\;\widetilde O\!\left(\frac{d}{\eta^2+\alpha^2}\;\log^2\!\frac{m_2+1/m_2}{T - t_N}\right) = \widetilde O\!\left(\frac{d}{\eta^2+\alpha^2}\;\log^2\!\frac{m_2+1/m_2}{\gamma}\right).
\]
and the DDPM sampler achieves
\[
\operatorname{TV}(\hat p,\,q_{T - t_N})\;\lesssim\;\alpha+\eta\,\sqrt{ \log\!\frac{m_2+1/m_2}{T - t_N} }.
\]

Choose

\begin{equation}
    \alpha:= c\cdot \eps,\qquad 
\eta:=c\cdot \frac{\varepsilon}{\sqrt{\log\frac{m_2+1/m_2}{T - t_N}}},
\label{eq:eta_inst}
\end{equation}
for a small enough constant $c>0$. 
Then $\operatorname{TV}(\hat p,\,q_{T - t_N}) \le \varepsilon$, 
and $\eta^2+\alpha^2=\Theta(\varepsilon^2)$.
Hence
\[
N = \widetilde O\!\left(\frac{d}{\varepsilon^2}\;\log^2\!\frac{m_2+1/m_2}{\gamma}\right).
\]

Next, \Cref{lem:neural_net_overall_time} shows that if for each $k$, there exists some $\theta^*$ with $\|\theta^*\|_\infty\le \Theta$ such that
\begin{equation*}
    \E_{x \sim q_{T - t_k}}\left[\ltwo*{\phi_{\theta^*}(x) - s_{T - t_k}(x)}\right] \le \frac{\alpha\delta\eta^2}{C\sigma_{T - t_k}^2 \cdot N^2},
\end{equation*}
then with
\[
m\ \ge\ \widetilde O\!\left(
\frac{N\,(d+\log \tfrac1\delta)\,P D}{\eta^{2}\,\alpha}\cdot 
\log\!\frac{\max(m_2,1)\,\Theta}{\delta}
\right)
\]
the ERM (over $m$ i.i.d.\ samples of $q$) satisfies
\[
D_{\alpha/N}^{\,q_{T-t_k}}\!\left(\hat s_{T-t_k},\,s_{T-t_k}\right)
\ \le\ \frac{\eta}{\sigma_{T-t_k}}
\quad\text{for all }k,
\]
with probability at least $1-\delta$, which satisfies the sampling requirement.  

Finally, substituting the parameter values from~\eqref{eq:eta_inst} yields the final bound for $m$.
\end{proof}

For cleaner statement, we present this theorem under the case when $m_2$ lies between $1 / \poly(d)$ and $\poly(d)$ and achieving training success probability of $99\%$. We also incorporate the bound $N$ into the final result. This gives the quantitative version of \cref{cor:end_to_end_neural}.

\begin{theorem}[Main Theorem, Quantitative Version]
\label{thm:sample_complexity_quantitative}
   In \cref{set:1}, suppose assumption \textbf{A1} holds.
For any $\varepsilon\in(0,1)$ and $\gamma\in(0,1)$,
there exists a discretization schedule
such that if, for every $k$ there exists some $\theta^*$ with $\|\theta^*\|_\infty\le \Theta$ satisfying
       \begin{align*}
            \E_{x \sim q_{T - t_k}}\left[\ltwo*{\phi_{\theta^*}(x) - s_{T - t_k}(x)}\right]  \le \frac{ \eps^7}{C d^2 \sigma_{T-t_k}^2 \log^5 \frac{d}{\gamma}} 
      \end{align*}
for a sufficiently large absolute constant $C$, 
then the scores trained from
      \[m \ge \wt O\left(\frac{d^2  P D}{\eps^5}  \cdot  \log \Theta \cdot {\log^3\frac{1}{\gamma}}\right)\]
 i.i.d.\ samples of $q_0$
yield, with probability at least $0.99$ over the training draw,
a DDPM sampler whose output distribution is $\varepsilon$-close in $\TV$ 
to a distribution that is $\gamma m_2$-close  to $q_0$ in 2-Wasserstein.
\end{theorem}

\section{Hardness of Learning in $L^2$}
In this section, we give proofs of the hardness of the examples we mention in Section \ref{sec:hard-instances}.
\infotheolowerbound*
\begin{proof}
   \begin{align*}
       \TV(p_1, p_2) \gtrsim \eta        
   \end{align*}
   So, it is impossible to distinguish between $p_1$ and $p_2$ with fewer than $O\left(\frac{1}{\eta} \right)$ samples with probability $1/2 + o_m(1)$.

   The score $L^2$ bound follows from calculation.
\end{proof}
\lowerbound*
\begin{proof}
   Let $X_1, \dots, X_m \sim p^*$ be the $m$ samples from $\mathcal N(0, \sigma^2)$. With probability at least $1 - \frac{1}{\text{poly}(m)}$, every $X_i \le 2 \sigma\sqrt{\log m}$. Now, the score function of the mixture $\wh p$ is given by
   \begin{align*}
       \wh s(x) = \frac{-\frac{x}{\sigma^2} - \frac{x - S}{\sigma^2} \left(\frac{1 - \eta}{\eta} \right) e^{-\frac{S^2}{2 }+ S x}}{1 + \left(\frac{1 - \eta}{\eta} \right) e^{-\frac{S^2}{2} + S x}}
   \end{align*}
   For $x \le 2 \sqrt{\log m}$,
   \begin{align*}
       \wh s(x) = -\frac{x}{\sigma^2} \left(1 + \frac{e^{-O(S \sqrt{\log m})}}{S} \right) + \frac{1}{\sigma^2}e^{-O(S \sqrt{\log m})}
   \end{align*}
   So,
   \begin{align*}
       \wh \E\left[\| \wh s(x) - s^*(x)\|^2 \right] \le \frac{1}{\sigma^2} e^{-O(S \sqrt{\log m})}
   \end{align*}
   On the other hand,
   \begin{align*}
       \E_{x \sim p^*}\left[\| \wh s(x) - s^*(x)\|^2 \right] \gtrsim \frac{S^2}{m\sigma^4}
   \end{align*}
\end{proof}
\section{Discussion of \cite{BMR20}}
\label{appendix:discussion_of_block}
Here, we present a brief, self-contained discussion of the prior work \cite{BMR20}. The main result of that work on score estimation is as follows.

\begin{theorem}[Proposition $9$ of \cite{BMR20}, Restated]
Let $\mathcal F$ be a class of $\R^d$-valued functions, all of with are $\frac{L}{2}$-Lipschitz, with values supported on the Euclidean ball with radius $R$, and containing uniformly good approximations of the true score $s_t$ on this ball. Given $n$ samples from $q_t$, if we let $\wh s_t$ be the empirical minimizer of the score-matching loss, then with probability at least $1 - \delta$,
\begin{align*}
    \mathbb{E}_{x \sim q_t}\left[\|\wh s_t(x) - s_t(x)\|^2 \right] \lesssim \left(L R + B\right)^2 \left(\log^2 n \cdot \mathcal{R}_n^2(\mathcal{F}) + \frac{d \left(\log \log n + \log \frac{1}{\delta}\right)}{n} \right)
\end{align*}
Here, $B$ is a bound on $\|s_t(0)\|$, and $\mathcal R_n(\mathcal F)$ is the \emph{Rademacher complexity} of $\mathcal F$ over $n$ samples.
\end{theorem}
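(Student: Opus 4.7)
The plan is to follow a classical Rademacher complexity–based empirical process analysis, combining a Tweedie decomposition, Gaussian truncation, and a localized Rademacher complexity plus Bernstein argument to obtain the fast $\mathcal{R}_n^2$ rate stated.

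First, I would reduce to a uniform deviation bound. Define the excess loss $\ell_f(x,z) := \|f(x) + z/\sigma_t^2\|^2 - \|s_t(x) + z/\sigma_t^2\|^2$. By the Tweedie identity already used in the paper's proof overview (Section 3.1), $\E[\ell_f] = \E_{x \sim q_t}\|f(x) - s_t(x)\|^2$, since the cross term $\langle f(x) - s_t(x), s_t(x) + z/\sigma_t^2\rangle$ vanishes in expectation. Letting $s_t^\star \in \mathcal{F}$ be a best-in-class approximation of $s_t$ (guaranteed by the ``uniformly good approximations'' assumption), the ERM property $\hat{\E}[\ell_{\wh s_t}] \le \hat{\E}[\ell_{s_t^\star}]$ yields $\E\|\wh s_t - s_t\|^2 \le \E[\|s_t^\star - s_t\|^2] + 2\sup_{f \in \mathcal{F}}|\hat{\E}[\ell_f] - \E[\ell_f]|$. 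So the core task becomes uniform control of the empirical process $\{\hat{\E}\ell_f - \E\ell_f : f \in \mathcal{F}\}$.

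Second, I would handle the unbounded Gaussian target by truncation. With probability at least $1 - \delta/2$, every sample satisfies $\|z_i\| \lesssim \sigma_t \sqrt{d \log(n/\delta)}$. On this good event, and using $\|f(x)\| \le (L/2)R + B$ on the $R$-ball (from the Lipschitz assumption plus $\|s_t(0)\| \le B$), the per-sample loss $\ell_f$ is bounded by $O((LR+B)^2 \log(n/\delta))$, and crucially its variance is bounded by $O((LR+B)^2 \log(n/\delta)) \cdot \E[\ell_f]$ --- the Bernstein-type variance condition that enables fast rates. Outside the good event, contributions to the empirical loss are absorbed as $o(1/n)$ terms.

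Third, I would invoke a local Rademacher complexity plus Talagrand--Bernstein argument (in the Bartlett--Bousquet--Mendelson style). Standard Ledoux--Talagrand contraction yields Rademacher complexity of the loss class at most $O((LR+B)\sqrt{\log n}) \cdot \mathcal{R}_n(\mathcal{F})$, since the loss is $O((LR+B)\sqrt{\log n})$-Lipschitz in $f(x)$ on the truncated event. Combined with the Bernstein variance bound and a peeling/chaining argument over dyadic excess-risk levels, the localized analysis produces the fast rate $\E\|\wh s_t - s_t\|^2 \lesssim (LR+B)^2 \log^2 n \cdot \mathcal{R}_n^2(\mathcal{F}) + (LR+B)^2 \cdot d(\log\log n + \log(1/\delta))/n$. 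The $\log^2 n$ factor is the truncation penalty appearing both in the Lipschitz constant and in the uniform range; the $\log\log n$ is the standard cost of peeling. The main obstacle is managing the Gaussian truncation without losing polynomial factors: a naive Hoeffding argument with range $(LR+B)\sqrt{d\log n}/\sigma_t$ would inject spurious $d/\sigma_t^2$ dependence, whereas the Bernstein variance bound is what keeps the fast-rate term free of $\sigma_t$ and reserves $d$ for only the slow $1/n$ correction.
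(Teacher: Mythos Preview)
The paper does not prove this statement at all: it is explicitly labeled ``Proposition 9 of \cite{BMR20}, Restated'' and appears in the appendix solely as a quotation of prior work, followed immediately by an application (plugging in the neural-network Rademacher bound from \cite{SELLKE2024106482}). There is no proof in the present paper to compare your proposal against.

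That said, your sketch is a reasonable outline of how such a result is typically obtained. One point worth tightening: your first step writes the crude bound $\E\|\wh s_t - s_t\|^2 \le \E\|s_t^\star - s_t\|^2 + 2\sup_{f}|\hat{\E}[\ell_f] - \E[\ell_f]|$, which by itself only yields the slow rate $\mathcal{R}_n(\mathcal{F})$, not $\mathcal{R}_n^2(\mathcal{F})$. You correctly invoke localization and the Bernstein variance condition later, but the fast-rate argument does not pass through the global supremum; rather, one bounds the deviation only over the localized class $\{f : \E[\ell_f] \le r\}$ and solves a fixed-point equation in $r$. Your third paragraph has the right ingredients, but the logical flow from step one to step three skips over this distinction.
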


In the setting we consider, for the class of neural networks with weight vector $\theta$ such that $\|\theta\|_\infty \le \Theta$ with $P$ parameters and depth $D$, it was shown in \cite{SELLKE2024106482} that
\begin{align*}
    \mathcal R_n(\mathcal F) \lesssim \frac{R \sqrt{d}}{\sqrt{n}} (\Theta \sqrt{P})^D \cdot \sqrt{D}
\end{align*}

Moreover, the true score is $\frac{R}{\sigma_t}$ Lipschitz, so that if we further restrict ourselves to neural networks have the same Lipschitzness, $L \eqsim \frac{R}{\sigma_t}$. $B$ can be bounded by $\frac{\sqrt{d}}{\sigma_t}$. 

Thus, an application of the theorem to our setting gives a sample complexity bound of $\wt O\left(\frac{R^6 d + R^2d^2}{\eps^2 } \cdot (\Theta^{2} P)^D\sqrt{D} \log \frac{1}{\delta} \right)$ for squared $L^2$ score error $O\left(\frac{\eps^2}{\sigma_t^2}  \right)$. In order to support $\sigma_t$-smoothed scores, we need to set $R \gtrsim \frac{\sqrt{d}}{\sigma_t}$, for a sample complexity of $\wt O\left(\frac{d^{5/2}R^3}{\sigma_t^3 \eps^2} (\Theta^{2}P)^D \sqrt{D} \log \frac{1}{\delta} \right)$ to learn an approximation to $s_t$ with squared $L^2$ error $O\left(\frac{\eps^2}{\sigma_t^2}\right)$.

So, we obtain the following corollary.
\begin{corollary}
    Let $q$ be a distribution over $\R^d$ supported over the Euclidean ball with radius $R \le O\left(\sqrt{d}/\sigma_t \right)$. Let $\mathcal F$ be the set of neural networks $\phi_\theta(\cdot)$, with weight vector $\theta$, $P$ parameters and depth $D$, with $\|\theta\|_\infty \le \Theta$ and values supported on the ball of radius $R$, and that are $O(R/\sigma_t)$-Lipschitz. Suppose $\mathcal F$ contains a network with weights $\theta^*$ such that $\phi_{\theta^*}$ provides uniformly good approximation of the true score $s_t$ over the ball of radius $R$. Given $n$ samples from $q_t$, if we let $\wh s_t$ be the empirical minimizer of the score-matching loss, then with probability $1 - \delta$,
    \begin{align*}
        \E_{x \sim q_t}\left[\|\wh s_t(x) - \phi_{\theta^*}(x)\|^2 \right] \le \wt O\left( \frac{\eps^2}{\sigma^2}\right)
    \end{align*}
    for $n > \wt O\left( \frac{d^{5/2} R^3}{\sigma_t^3 \eps^2} \left( \Theta^2 P\right)^D \sqrt{D} \log \frac{1}{\delta}\right)$.
\end{corollary}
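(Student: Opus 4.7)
The plan is to apply Proposition 9 of \cite{BMR20} directly to the neural-network class $\mathcal{F}$, substitute the Rademacher complexity estimate from \cite{SELLKE2024106482} along with the problem-specific values of $L$ and $B$, solve for the sample size $n$, and finally convert the resulting bound on $\|\wh s_t - s_t\|^2$ into the stated bound on $\|\wh s_t - \phi_{\theta^*}\|^2$ by a triangle inequality. The corollary is really a direct instantiation of the generic sample complexity result — the work is in tracking the polynomial factors and verifying the claimed regime.

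First I would invoke Proposition 9 on $\mathcal{F}$ to get, with probability $1-\delta$,
\begin{align*}
\E_{x \sim q_t}\left[\|\wh s_t(x) - s_t(x)\|^2\right] \lesssim (LR+B)^2\left(\log^2 n \cdot \mathcal R_n^2(\mathcal F) + \frac{d(\log\log n + \log\tfrac{1}{\delta})}{n}\right).
\end{align*}
Next, I would substitute the class-specific quantities: the imposed Lipschitz bound gives $L \lesssim R/\sigma_t$; the score evaluated at the origin satisfies $\|s_t(0)\| \lesssim \sqrt{d}/\sigma_t$ (a standard Gaussian-smoothing fact), so $B \lesssim \sqrt{d}/\sigma_t$; and the Rademacher bound of \cite{SELLKE2024106482} yields $\mathcal R_n(\mathcal F) \lesssim \tfrac{R\sqrt{d}}{\sqrt{n}}(\Theta\sqrt{P})^D \sqrt{D}$. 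Plugging these in gives an upper bound roughly of the form
\begin{align*}
\E[\|\wh s_t - s_t\|^2] \lesssim \frac{R^4 + d}{\sigma_t^2}\left(\frac{\log^2 n \cdot R^2 d (\Theta^2 P)^D D}{n} + \frac{d \log\tfrac{1}{\delta}}{n}\right).
\end{align*}
In the regime $R \gtrsim \sqrt{d}/\sigma_t$ we have $R^4 \gtrsim d$, so $R^4 + d \lesssim R^4$, which after setting this expression to $O(\eps^2/\sigma_t^2)$ and solving for $n$ recovers the advertised $n \gtrsim \wt O\!\left(\tfrac{d^{5/2} R^3}{\sigma_t^3 \eps^2}(\Theta^2 P)^D \sqrt{D}\log\tfrac{1}{\delta}\right)$ rate, where the $D$-vs-$\sqrt{D}$ improvement comes from the standard refinement in the Sellke bound for Lipschitz-constrained networks.

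Finally, since the corollary is phrased in terms of $\phi_{\theta^*}$ rather than $s_t$, I would apply
\begin{align*}
\|\wh s_t(x) - \phi_{\theta^*}(x)\|^2 \le 2\|\wh s_t(x) - s_t(x)\|^2 + 2\|s_t(x) - \phi_{\theta^*}(x)\|^2,
\end{align*}
take expectation over $x \sim q_t$, and absorb the second term into the $\wt O(\eps^2/\sigma^2)$ target using the hypothesis that $\phi_{\theta^*}$ is a uniformly good approximation of $s_t$ on the support (interpreted as $\|\phi_{\theta^*}(x) - s_t(x)\| \lesssim \eps/\sigma_t$ pointwise on the radius-$R$ ball, hence also in $L^2(q_t)$).

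There is no real obstacle here; the proof is essentially bookkeeping. The only points that require a little care are (i) plugging $L \lesssim R/\sigma_t$ and $B \lesssim \sqrt{d}/\sigma_t$ into the $(LR+B)^2$ prefactor and simplifying under $R \gtrsim \sqrt{d}/\sigma_t$, (ii) using the Sellke Rademacher bound with the correct $\sqrt{D}$ factor rather than a cruder $D$, and (iii) the triangle inequality to switch the benchmark from $s_t$ to $\phi_{\theta^*}$ without paying an additional $\eps$-factor.
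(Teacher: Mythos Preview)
Your proposal is correct and matches the paper's approach: the corollary is stated immediately after a discussion that instantiates Proposition 9 of \cite{BMR20} with $L \lesssim R/\sigma_t$, $B \lesssim \sqrt{d}/\sigma_t$, and the Sellke Rademacher bound, then simplifies under $R \asymp \sqrt{d}/\sigma_t$---exactly the steps you outline. The only extra step you add is the triangle-inequality passage from $s_t$ to $\phi_{\theta^*}$, which the paper leaves implicit; note also that the $R \gtrsim \sqrt{d}/\sigma_t$ you invoke and the $R \le O(\sqrt{d}/\sigma_t)$ in the statement together amount to $R \asymp \sqrt{d}/\sigma_t$, which is how the paper's discussion arrives at the $d^{5/2}R^3/\sigma_t^3$ scaling.
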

To learn score approximations in $L^2$ for all the $N$ relevant timesteps in order to achieve TV error $\eps$ and Wasserstein error $\gamma \cdot R$, this implies a sample complexity bound of $\wt O\left(\frac{d^{5/2}}{\gamma^3 \eps^2} (\Theta^{2}P)^D \sqrt{D} \log \frac{N}{\delta} \right) = \wt O\left( \frac{d^{5/2}}{\gamma^3 \eps^2} (\Theta^{2} P)^D \sqrt{D} \log \frac{1}{\delta} \right)$ for our final choice of $N$.

\newcommand{\s}{\sigma}
\newcommand{\Si}{\Sigma}
\section{Utility Results}

\begin{lemma}[From \cite{GLP23}]
\label{lem:smoothed_score}
Let $f$ be an arbitrary distribution on $\R^d$, and let $f_\Si$ be the $\Si$-smoothed version of $f$. That is, $f_\Si(x) = \E_{y \sim f}\left[(2 \pi)^{-d/2} \det(\Si)^{-1/2} \exp\left(-\frac{1}{2} (x - Y)^T \Si^{-1}(x-Y) \right)\right]$. Let $s_\Si$ be the score function of $f_\Si$. Let $(X, Y, Z_\Si)$ be the joint distribution such that $Y \sim f$, $Z_\Si \sim \mathcal N(0, \Si)$ are independent, and $X = Y+Z_\Si \sim f_\Si$. We have for $\eps \in \R^d$, 
$$\frac{f_\Si(x + \eps )}{f_\Si(x)} = \E_{Z_\Si | x} \left[e^{-\eps ^T \Si^{-1} Z_\Si - \frac{1}{2}\eps^T \Si^{-1} \eps} \right]$$
so that
$$s_\Si(x) = \E_{Z_\Si | x} \left[-\Si^{-1}Z_\Si \right]$$
\end{lemma}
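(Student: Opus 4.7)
The plan is to derive both identities by unfolding the convolution definition of $f_\Sigma$, factoring the resulting Gaussian ratio, and then differentiating in $\eps$ at the origin. Throughout, let $\phi_\Sigma(u) := (2\pi)^{-d/2}\det(\Sigma)^{-1/2}\exp(-\tfrac{1}{2}u^\top \Sigma^{-1}u)$ denote the Gaussian density, so that $f_\Sigma(x) = \int f(y)\,\phi_\Sigma(x-y)\,dy$ and the conditional density of $Y$ given $X=x$ is $f_{Y|X=x}(y) = f(y)\phi_\Sigma(x-y)/f_\Sigma(x)$. Under the joint law in the statement, $Z_\Sigma = X - Y$, so conditioning on $X=x$ is equivalent to working with $Y\mid X=x$ with $Z_\Sigma = x - Y$.

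First I would prove the ratio identity. Writing
\[
\frac{f_\Sigma(x+\eps)}{f_\Sigma(x)} \;=\; \int \frac{f(y)\phi_\Sigma(x-y)}{f_\Sigma(x)} \cdot \frac{\phi_\Sigma(x+\eps-y)}{\phi_\Sigma(x-y)}\, dy \;=\; \E_{Y\mid X=x}\!\left[\frac{\phi_\Sigma(x+\eps-Y)}{\phi_\Sigma(x-Y)}\right],
\]
the proof reduces to the pointwise algebraic identity
\[
\frac{\phi_\Sigma(z+\eps)}{\phi_\Sigma(z)} \;=\; \exp\!\left(-\eps^\top \Sigma^{-1} z \;-\; \tfrac{1}{2}\eps^\top \Sigma^{-1}\eps\right),
\]
which follows by expanding $(z+\eps)^\top \Sigma^{-1}(z+\eps) - z^\top \Sigma^{-1} z = 2\eps^\top \Sigma^{-1} z + \eps^\top \Sigma^{-1}\eps$ using the symmetry of $\Sigma^{-1}$. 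Substituting $z = x-Y$ (so that $z = Z_\Sigma$ under the joint law) and rewriting the conditional expectation in terms of $Z_\Sigma \mid X=x$ gives exactly the stated formula for $f_\Sigma(x+\eps)/f_\Sigma(x)$.

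For the score formula, I would take logarithms of both sides and differentiate in $\eps$ at $\eps=0$. The left-hand side gradient is $\nabla \log f_\Sigma(x) = s_\Sigma(x)$. For the right-hand side, let $g(\eps,Z_\Sigma) := -\eps^\top \Sigma^{-1} Z_\Sigma - \tfrac{1}{2}\eps^\top \Sigma^{-1}\eps$, so that $g(0,\cdot)=0$ and $\nabla_\eps g|_{\eps=0} = -\Sigma^{-1} Z_\Sigma$. Then
\[
\nabla_\eps \log \E_{Z_\Sigma\mid x}[e^{g(\eps,Z_\Sigma)}]\Big|_{\eps=0} \;=\; \frac{\E_{Z_\Sigma\mid x}[e^{g}\,\nabla_\eps g]}{\E_{Z_\Sigma\mid x}[e^{g}]}\bigg|_{\eps=0} \;=\; \E_{Z_\Sigma\mid x}[-\Sigma^{-1} Z_\Sigma],
\]
which yields the claimed $s_\Sigma(x) = \E[-\Sigma^{-1}Z_\Sigma \mid X=x]$.

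The only real obstacle is justifying the interchange of differentiation and expectation in the last step, since $Z_\Sigma\mid X=x$ need not be Gaussian. The cleanest route is to note that $g(\eps,Z_\Sigma)$ is quadratic in $\eps$ with dominant term $-\tfrac{1}{2}\eps^\top\Sigma^{-1}\eps$ independent of $Z_\Sigma$, so for $\eps$ in any bounded neighborhood of $0$ the integrand $e^{g}$ is dominated by $\exp(C\|\eps\|\|\Sigma^{-1}Z_\Sigma\|)$ times a constant; since $Z_\Sigma$ has Gaussian-like tails relative to the conditional density (the prior $\phi_\Sigma$ already controls the tails of $f_{Z_\Sigma\mid X=x}$ up to the bounded Radon–Nikodym factor $f(x-Z_\Sigma)/f_\Sigma(x)$), the required integrability and dominated-convergence hypotheses hold, legitimating the differentiation. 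Alternatively, one can bypass this analytic subtlety entirely by computing $\nabla_x f_\Sigma(x)$ directly from $f_\Sigma(x) = \int f(y)\phi_\Sigma(x-y)\,dy$: differentiate under the integral using $\nabla_x \phi_\Sigma(x-y) = -\Sigma^{-1}(x-y)\phi_\Sigma(x-y)$, divide by $f_\Sigma(x)$ to convert the integral into the conditional expectation $\E[-\Sigma^{-1}(x-Y)\mid X=x] = \E[-\Sigma^{-1}Z_\Sigma\mid X=x]$, and conclude. This second route is arguably the shortest and will be my fallback if the tail-control argument becomes cumbersome.
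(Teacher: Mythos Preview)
The paper does not supply a proof of this lemma; it is imported from \cite{GLP23} and merely stated in the utility appendix. Your argument is correct and is the standard derivation: the ratio identity is an immediate consequence of expanding the Gaussian quadratic form, and the score formula then follows by differentiating at $\eps=0$ (or, equivalently, by differentiating the convolution directly, as in your fallback).

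One small inaccuracy in the justification: you describe $f(x-Z_\Sigma)/f_\Sigma(x)$ as a ``bounded Radon--Nikodym factor,'' but $f$ is an arbitrary density and need not be bounded. What actually makes the dominated-convergence step go through is that the conditional density of $Z_\Sigma$ given $X=x$ is $f(x-z)\phi_\Sigma(z)/f_\Sigma(x)$, and the Gaussian factor $\phi_\Sigma(z)$ absorbs the growth $e^{\delta\|\Sigma^{-1}z\|}\|\Sigma^{-1}z\|$ so that the product is uniformly bounded in $z$; integrating against $f(x-z)$ (which has total mass one) then gives the needed finiteness. Your alternative route---differentiating $f_\Sigma(x)=\int f(y)\phi_\Sigma(x-y)\,dy$ directly and dividing by $f_\Sigma(x)$---sidesteps this entirely and is the cleaner presentation.
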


\begin{lemma}[From \cite{Hsu:2012}, restated]
\label{lem:subgaussian_norm_bound}
    Let $x$ be a mean-zero random vector in $\R^d$ that is $\Sigma$-subgaussian. That is, for every vector $v$,
    \begin{align*}
        \E\left[e^{\lam \inner{x, v}} \right] \le e^{\lam^2 v^T \Sigma v / 2}
    \end{align*}
    Then, with probability $1 - \delta$,
    \begin{align*}
        \|x\| \lesssim \sqrt{\Tr(\Sigma)} + \sqrt{2 \|\Sigma\| \log \frac{1}{\delta}}
    \end{align*}
\end{lemma}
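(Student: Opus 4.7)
The plan is to prove a Bernstein-type tail bound on $\|x\|^2$ via a Chernoff argument, using a Gaussian decoupling trick to reduce from the vector-valued sub-Gaussian MGF to a quadratic form in independent Gaussians, and then to translate the bound on $\|x\|^2$ back to one on $\|x\|$ by taking square roots and applying AM-GM.

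First, I would introduce an auxiliary $g \sim \mathcal N(0, I_d)$ independent of $x$ and use the identity $\E_g\bigl[e^{\langle \sqrt{2\lambda}\,g, x\rangle} \mid x\bigr] = e^{\lambda \|x\|^2}$ valid for $\lambda > 0$. By Fubini and the assumed sub-Gaussian MGF bound (conditionally on $g$),
\begin{align*}
  \E_x\bigl[e^{\lambda \|x\|^2}\bigr]
  &= \E_g\,\E_x\bigl[e^{\langle \sqrt{2\lambda}\,g, x\rangle}\bigr]
  \le \E_g\bigl[e^{\lambda\, g^\top \Sigma g}\bigr].
\end{align*}
Next, diagonalizing $\Sigma = U D U^\top$ with eigenvalues $\sigma_1, \dots, \sigma_d \ge 0$ and using that $U^\top g$ has the same law as $g$, the right-hand side equals $\prod_i (1 - 2\lambda \sigma_i)^{-1/2}$ for $\lambda < 1/(2\|\Sigma\|)$. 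Using $-\tfrac{1}{2}\log(1 - u) \le \tfrac{u}{2} + u^2$ for $u \in [0, 1/2]$, I obtain
\begin{align*}
  \log \E_x\bigl[e^{\lambda \|x\|^2}\bigr]
  \le \lambda\, \Tr(\Sigma) + 4\lambda^2 \|\Sigma\|_F^2
\end{align*}
whenever $\lambda \le 1/(4\|\Sigma\|)$.

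Given this MGF bound, a standard Chernoff/Markov step yields $\Pr[\|x\|^2 \ge \Tr(\Sigma) + t] \le \exp(-\lambda t + 4\lambda^2 \|\Sigma\|_F^2)$ for $\lambda \in (0, 1/(4\|\Sigma\|)]$. Optimizing $\lambda$ in the two regimes (interior optimum versus the boundary $\lambda = 1/(4\|\Sigma\|)$) gives the Bernstein-type tail
\begin{align*}
  \Pr\bigl[\|x\|^2 \ge \Tr(\Sigma) + t\bigr]
  \le \exp\!\left( -c \min\!\left( \tfrac{t^2}{\|\Sigma\|_F^2}, \tfrac{t}{\|\Sigma\|} \right) \right).
\end{align*}
Setting the right-hand side equal to $\delta$ and using $\|\Sigma\|_F^2 \le \|\Sigma\|\Tr(\Sigma)$ produces $t \lesssim \sqrt{\|\Sigma\|\Tr(\Sigma)\log(1/\delta)} + \|\Sigma\|\log(1/\delta)$. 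Finally, taking square roots, applying $\sqrt{a+b} \le \sqrt a + \sqrt b$, and absorbing the cross term via AM-GM ($\sqrt{\|\Sigma\|\Tr(\Sigma)\log(1/\delta)} \le \tfrac{1}{2}\Tr(\Sigma) + \tfrac{1}{2}\|\Sigma\|\log(1/\delta)$) yields the claimed $\|x\| \lesssim \sqrt{\Tr(\Sigma)} + \sqrt{\|\Sigma\|\log(1/\delta)}$.

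The only delicate step is step two (the MGF computation and its valid range of $\lambda$): one must verify the decoupling identity carefully and ensure the quadratic approximation to $-\tfrac{1}{2}\log(1-u)$ is used only in its regime of validity, which is exactly what forces the two-regime (sub-Gaussian/sub-exponential) tail and necessitates the AM-GM step at the end. Everything else is routine Chernoff optimization.
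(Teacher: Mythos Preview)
Your proof is correct and is essentially the argument from the cited reference \cite{Hsu:2012}; the paper itself does not supply a proof of this lemma, merely restating it as a utility result. The Gaussian decoupling, chi-square MGF computation, Bernstein-type Chernoff optimization, and final AM-GM step are exactly the standard route, and each step checks out as written.
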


\begin{lemma}
\label{lem:score_directionally_subgaussian}
    For $s_\Si$ the score function of an $\Si$-smoothed distribution where $\Si =\s^2 I$, we have that $v^T s_\Si(x)$ is $O(1/\s^2)$-subgaussian, when $x \sim f_\Si$ and $\|v\| = 1$.
\end{lemma}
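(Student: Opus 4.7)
}
The plan is to reduce the MGF of $v^{T} s_{\Sigma}(x)$ to a Gaussian MGF by using Tweedie's formula together with a single application of Jensen's inequality. Fix a unit vector $v \in \mathbb{R}^{d}$. By Lemma~\ref{lem:smoothed_score}, with $Z \sim \mathcal{N}(0,\sigma^{2} I)$ independent of $Y \sim f$ and $x = Y + Z$, we have
\[
s_{\Sigma}(x) \;=\; \E_{Z \mid x}\!\left[-Z/\sigma^{2}\right],
\qquad\text{so}\qquad
v^{T} s_{\Sigma}(x) \;=\; \E_{Z \mid x}\!\left[-v^{T} Z/\sigma^{2}\right].
\]
First I would note that the mean is zero: $\E_{x \sim f_{\Sigma}}[v^{T} s_{\Sigma}(x)] = \E[-v^{T} Z/\sigma^{2}] = 0$ because $Z$ is marginally centered Gaussian.

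For the subgaussian bound, fix $\lambda \in \mathbb{R}$ and apply Jensen's inequality to the convex function $u \mapsto e^{\lambda u}$ against the conditional distribution of $Z$ given $x$:
\[
\exp\!\left(\lambda\, v^{T} s_{\Sigma}(x)\right)
\;=\; \exp\!\left(\lambda\, \E_{Z \mid x}\!\left[-v^{T} Z/\sigma^{2}\right]\right)
\;\le\; \E_{Z \mid x}\!\left[\exp\!\left(-\lambda\, v^{T} Z/\sigma^{2}\right)\right].
\]
Taking expectation over $x \sim f_{\Sigma}$ and using the tower property together with the fact that the marginal of $Z$ in the joint distribution $(Y,Z,x)$ is $\mathcal{N}(0, \sigma^{2} I)$,
\[
\E_{x \sim f_{\Sigma}}\!\left[e^{\lambda v^{T} s_{\Sigma}(x)}\right]
\;\le\; \E_{Z \sim \mathcal{N}(0,\sigma^{2} I)}\!\left[e^{-\lambda v^{T} Z/\sigma^{2}}\right].
\]
Since $\|v\| = 1$, the random variable $v^{T} Z/\sigma^{2}$ is $\mathcal{N}(0, 1/\sigma^{2})$, so the right-hand side equals $\exp\!\left(\lambda^{2}/(2\sigma^{2})\right)$. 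This is exactly the definition of $1/\sigma^{2}$-subgaussianity for $v^{T} s_{\Sigma}(x)$, which is the desired $O(1/\sigma^{2})$ bound.

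The argument is short and largely mechanical, so there is no real obstacle; the only thing to be careful about is the direction of Jensen's inequality (we need the convex direction $e^{\lambda \E[\,\cdot\,]} \le \E[e^{\lambda \cdot}]$) and verifying that the marginal of $Z$ under the joint law coincides with the unconditional Gaussian, which is immediate from the construction in Lemma~\ref{lem:smoothed_score}. The same template in fact works for any smoothing noise $Z$ with a known MGF, which suggests that the method is tight up to constants for Gaussian smoothing.
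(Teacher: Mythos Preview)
Your proof is correct and follows essentially the same approach as the paper: both apply Jensen's inequality to the conditional-expectation form of the score from Lemma~\ref{lem:smoothed_score} and then use that the marginal of $Z$ is $\mathcal{N}(0,\sigma^{2}I)$. The only cosmetic difference is that the paper bounds the $k$-th moments $\E[(v^{T} s_{\Sigma}(x))^{k}] \le \E[(v^{T} Z/\sigma^{2})^{k}]$ and reads off subgaussianity from there, whereas you bound the MGF directly; your version is arguably cleaner since $e^{\lambda u}$ is convex for all $\lambda$, avoiding any subtlety about odd $k$.
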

\begin{proof}
    We have by Lemma~\ref{lem:smoothed_score} that
    \begin{align*}
        s_\Si(x) = \E_{Z_\Si | x}\left[ \Si^{-1} Z_\Si\right] 
    \end{align*}
    So,
    \begin{align*}
        \E_{x \sim f_\Si}\left[(v^T s_\Si(x))^k \right] &= \E_{x \sim f_\Si}\left[ \E_{Z_\Si | x}\left[v^T \Si^{-1} Z_\Si \right]^k\right]\\
        &\le \E_{Z_\Si}\left [(v^T \Si^{-1} Z_\Si)^k \right]\\
        &\le \frac{k^{k/2}}{\s^{k}} \quad \text{since $v^T Z_\Si \sim \mathcal N(0, \s^2)$}\\
    \end{align*}
    The claim follows.
\end{proof}

\begin{lemma}
    \label{lem:score_bound_whp}
    Let $\Si = \s^2 I$, and let $x \sim f_\Si$. We have that with probability $1 - \delta$,
    \[
        \|s_\Si(x)\|^2 \lesssim \frac{d + \log \frac{1}{\delta}}{\s^2}
    \]
\end{lemma}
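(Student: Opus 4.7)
The plan is to combine the two preceding lemmas in a straightforward way, treating $s_\Sigma(x)$ as a mean-zero subgaussian random vector and then invoking the Hsu-style tail bound for the norm.

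First I would check that $s_\Sigma(x)$ has mean zero when $x \sim f_\Sigma$. This follows immediately from Lemma~\ref{lem:smoothed_score}: since $s_\Sigma(x) = \E_{Z_\Sigma \mid x}[-\Sigma^{-1} Z_\Sigma]$, tower law gives $\E_{x \sim f_\Sigma}[s_\Sigma(x)] = -\Sigma^{-1}\E[Z_\Sigma] = 0$. Next, Lemma~\ref{lem:score_directionally_subgaussian} tells us that for every unit vector $v$, the one-dimensional marginal $v^T s_\Sigma(x)$ is $O(1/\sigma^2)$-subgaussian. This is exactly the directional subgaussianity hypothesis required by Lemma~\ref{lem:subgaussian_norm_bound}, with covariance parameter $\Sigma' = O(1/\sigma^2)\, I_d$.

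I would then apply Lemma~\ref{lem:subgaussian_norm_bound} with this $\Sigma'$. The trace term gives $\sqrt{\operatorname{Tr}(\Sigma')} \lesssim \sqrt{d/\sigma^2}$, and the operator-norm term gives $\sqrt{2\|\Sigma'\| \log(1/\delta)} \lesssim \sqrt{\log(1/\delta)/\sigma^2}$. So with probability $1-\delta$,
\[
\|s_\Sigma(x)\| \lesssim \sqrt{d/\sigma^2} + \sqrt{\log(1/\delta)/\sigma^2}.
\]
Squaring and using $(a+b)^2 \lesssim a^2 + b^2$ yields the desired bound $\|s_\Sigma(x)\|^2 \lesssim (d + \log(1/\delta))/\sigma^2$.

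There is no real obstacle here; the only mild subtlety is verifying that the directional bound from Lemma~\ref{lem:score_directionally_subgaussian} is precisely the form needed by Lemma~\ref{lem:subgaussian_norm_bound}, i.e.\ that $\E[e^{\lambda\langle s_\Sigma(x), v\rangle}] \le e^{\lambda^2 v^T \Sigma' v/2}$ for all $v$ (not just unit $v$). This extends by homogeneity since multiplying $v$ by a scalar $c$ on both sides corresponds to replacing $\lambda$ by $c\lambda$ on the left and scales the quadratic form on the right appropriately. Once that is in place, the statement follows by a direct substitution.
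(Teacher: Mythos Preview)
Your proposal is correct and takes exactly the same route as the paper, which simply states that the result follows from Lemmas~\ref{lem:score_directionally_subgaussian} and~\ref{lem:subgaussian_norm_bound}. You have merely spelled out the details (the mean-zero check, the identification $\Sigma' = O(1/\sigma^2) I_d$, and the squaring step) that the paper leaves implicit.
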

\begin{proof}
    Follows from Lemmas \ref{lem:score_directionally_subgaussian} and \ref{lem:subgaussian_norm_bound}.
\end{proof}

\begin{lemma}
    \label{lem:gaussian_norm_concentration}
    For $z \sim \mathcal N(0, \s^2 I_d)$, with probability $1 - \delta$,
    \begin{align*}
        \left\|\frac{z}{\s^2}\right\| \lesssim \frac{\sqrt{d + \log \frac{1}{\delta}}}{\s}
    \end{align*}
\end{lemma}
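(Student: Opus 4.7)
The plan is to reduce this to the subgaussian norm bound already stated as Lemma~\ref{lem:subgaussian_norm_bound}. Observe that the vector $w := z/\sigma^2$ is itself Gaussian, namely $w \sim \mathcal{N}(0, (1/\sigma^2) I_d)$, and in particular is mean-zero and $\Sigma$-subgaussian for $\Sigma = (1/\sigma^2) I_d$. This is immediate from the moment generating function of a Gaussian: for any unit vector $v$, $\E[e^{\lambda \langle w, v\rangle}] = e^{\lambda^2 / (2\sigma^2)} = e^{\lambda^2 v^\top \Sigma v / 2}$.

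Having verified the hypothesis, I would then apply Lemma~\ref{lem:subgaussian_norm_bound} directly to $w$. For $\Sigma = (1/\sigma^2) I_d$, we have $\Tr(\Sigma) = d/\sigma^2$ and $\|\Sigma\| = 1/\sigma^2$, so the lemma yields
\[
\left\|\frac{z}{\sigma^2}\right\| \lesssim \sqrt{\frac{d}{\sigma^2}} + \sqrt{\frac{2 \log(1/\delta)}{\sigma^2}} = \frac{\sqrt{d} + \sqrt{2\log(1/\delta)}}{\sigma}
\]
with probability $1-\delta$. Finally, I would absorb the additive split into a single square root via the elementary inequality $\sqrt{a}+\sqrt{b} \leq \sqrt{2(a+b)}$, giving the desired bound $\|z/\sigma^2\| \lesssim \sqrt{d + \log(1/\delta)}/\sigma$.

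There is no real obstacle here: the lemma is a direct rescaling of the standard Gaussian norm concentration inequality, and the work has already been done upstream in Lemma~\ref{lem:subgaussian_norm_bound} (itself a restatement of the Hsu--Kakade--Zhang bound). The only thing to be careful about is the factor of $\sigma$ versus $\sigma^2$ in the denominator --- it is $\sigma$ because the \emph{norm} of the covariance $I_d/\sigma^2$ enters as $1/\sigma^2$, whose square root is $1/\sigma$.
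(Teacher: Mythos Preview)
Your proof is correct, but it takes a different route from the paper. The paper proves this directly via the chi-square MGF and a Chernoff bound: it notes that $\|z\|^2/\sigma^2$ is chi-square with $d$ degrees of freedom, writes down the MGF $\E[e^{\lambda \|z/\sigma^2\|^2}] \le (1 - 2\lambda/\sigma^2)^{-d/2}$ for $0 \le \lambda < \sigma^2/2$, and applies Markov's inequality to the exponential. Your approach instead black-boxes the work into Lemma~\ref{lem:subgaussian_norm_bound} (the Hsu--Kakade--Zhang bound) applied to the rescaled Gaussian $z/\sigma^2$ with $\Sigma = \sigma^{-2} I_d$. Both are standard and equally short; your version is arguably cleaner given that Lemma~\ref{lem:subgaussian_norm_bound} is already available in the paper, while the paper's direct argument is more self-contained and avoids any dependence on the upstream lemma.
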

\begin{proof}
    Note that $\|z\|^2$ is chi-square, so that we have for any $0 \le \lam < \s^2/2$,
    \begin{align*}
        \E_z\left[e^{\lam \left\|\frac{\|z\|}{\s^2}\right\|^2} \right] \le \frac{1}{(1 - 2(\lam/\s^2))^{d/2}}
    \end{align*}
    The claim then follows by the Chernoff bound.
\end{proof}

\begin{lemma}
\label{lem:norm_Z_R_concentration}
    For $x \sim f_\Si$, with probability $1 - \delta$,
    \begin{align*}
        \E_{Z_\Si|x}\left[\frac{\|Z_\Si\|}{\s^2} \right] \lesssim \frac{\sqrt{d + \log \frac{1}{\delta}}}{\s}
    \end{align*}
\end{lemma}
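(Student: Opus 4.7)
The plan is to split $\|Z_\Sigma\|$ into a bounded head and a tail, bound the head deterministically, and bound the conditional expectation of the tail via Markov's inequality after observing that its unconditional expectation is very small.

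First, I will pick a threshold $B \asymp \sigma\sqrt{d+\log(1/\delta)}$ so that the unconditional tail probability of $\|Z_\Sigma\|$ is even smaller than $\delta$. Concretely, since $\|Z_\Sigma\|^2/\sigma^2 \sim \chi_d^2$, standard Laurent--Massart bounds give
\[
\Pr_{Z_\Sigma \sim \mathcal N(0,\sigma^2 I)}\!\left[\|Z_\Sigma\| > B\right] \;\le\; \delta^2
\]
for $B = C\sigma\sqrt{d + \log(1/\delta)}$ with $C$ a sufficiently large absolute constant. I split
\[
\E_{Z_\Sigma|x}\!\left[\|Z_\Sigma\|\right] \;=\; \E_{Z_\Sigma|x}\!\left[\|Z_\Sigma\|\,\mathbb{1}_{\|Z_\Sigma\|\le B}\right] \;+\; \E_{Z_\Sigma|x}\!\left[\|Z_\Sigma\|\,\mathbb{1}_{\|Z_\Sigma\|> B}\right],
\]
where the first summand is at most $B$ pointwise in $x$.

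For the second summand, I will control its average over $x$ by the tower property and Cauchy--Schwarz:
\[
\E_{x\sim f_\Sigma}\!\left[\E_{Z_\Sigma|x}\!\left[\|Z_\Sigma\|\,\mathbb{1}_{\|Z_\Sigma\|>B}\right]\right] \;=\; \E\!\left[\|Z_\Sigma\|\,\mathbb{1}_{\|Z_\Sigma\|>B}\right] \;\le\; \sqrt{\E[\|Z_\Sigma\|^2]\cdot \Pr[\|Z_\Sigma\|>B]} \;\le\; \sigma\sqrt{d}\,\delta.
\]
Now Markov's inequality over $x \sim f_\Sigma$ yields, with probability at least $1-\delta$,
\[
\E_{Z_\Sigma|x}\!\left[\|Z_\Sigma\|\,\mathbb{1}_{\|Z_\Sigma\|>B}\right] \;\le\; \sigma\sqrt{d}.
\]
Combining the two pieces gives $\E_{Z_\Sigma|x}[\|Z_\Sigma\|] \lesssim \sigma\sqrt{d+\log(1/\delta)}$ with probability $1-\delta$, and dividing through by $\sigma^2$ yields the stated bound.

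There is no real obstacle here; the only thing to be careful about is choosing $B$ so that $\Pr[\|Z_\Sigma\|>B]\le \delta^2$ (rather than $\delta$), because otherwise the Markov step would only give $\E_{Z_\Sigma|x}[\|Z_\Sigma\|\mathbb{1}_{\|Z_\Sigma\|>B}] \le \sigma\sqrt{d}/\delta^{1/2}$, which is too weak. Using the quadratic decay from Cauchy--Schwarz together with a slightly enlarged threshold fixes this cleanly.
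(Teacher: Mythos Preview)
Your proof is correct, but it takes a different route from the paper's. The paper observes that $u \mapsto e^{\lambda u^2}$ is convex and applies Jensen's inequality to push the conditional expectation inside:
\[
\E_{x\sim f_\Sigma}\!\left[e^{\lambda\, \E_{Z_\Sigma|x}[\|Z_\Sigma\|/\sigma^2]^{\,2}}\right]
\;\le\;
\E_{Z_\Sigma}\!\left[e^{\lambda \|Z_\Sigma\|^2/\sigma^4}\right]
\;\le\;
(1-2\lambda/\sigma^2)^{-d/2},
\]
and then reads off the tail bound by a Chernoff argument. This is a one-line MGF argument that in fact yields full sub-Gaussian tails for the random variable $x \mapsto \E_{Z_\Sigma|x}[\|Z_\Sigma\|/\sigma^2]$.

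Your approach instead truncates at a threshold $B \asymp \sigma\sqrt{d+\log(1/\delta)}$, bounds the head deterministically, and controls the tail's \emph{unconditional} mean via Cauchy--Schwarz before invoking Markov over $x$. This is slightly more hands-on but entirely elementary: it never touches an MGF and only needs a tail bound on $\|Z_\Sigma\|$ plus Markov. The tradeoff is that you only get a $1-\delta$ probability statement (exactly what the lemma asks for), whereas the paper's Jensen-plus-Chernoff route would give exponential tails for free. Your care in choosing $B$ so that $\Pr[\|Z_\Sigma\|>B]\le \delta^2$ rather than $\delta$ is exactly the right calibration to make the Markov step land.
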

\begin{proof}
    Since $Z_\Si \sim \mathcal N(0, \s^2 I_d)$ so that $\|Z_\Si\|^2$ is chi-square, we have that for any $0 \le \lam < \s^2/2$, by Jensen's inequality, 
    \begin{align*}
        \E_{x\sim f_\Si}\left[e^{\lam \E_{Z_\Si|x}\left[\frac{\|Z_\Si\|}{\s^2} \right]^2} \right] \le \E_{Z_\Si}\left[e^{\lam \frac{\|Z_\Si\|^2}{\s^4}} \right] \le \frac{1}{(1 - 2 (\lam/\s^2))^{d/2}}
    \end{align*}
    The claim then follows by the Chernoff bound. That is, setting $\lam = \s^2/4$, for any $t > 0$,
    \begin{align*}
        \Pr_{x \sim f_\Si}\left[\E_{Z_\Si | x}\left[ \frac{\|Z_\Si\|}{\s^2}\right]^2 \ge t \right] \le \frac{\E_{x \sim f_\Si}\left[e^{\lam \E_{Z_\Si | x} \left[\frac{\|Z_\Si\|}{\s^2} \right]^2} \right]}{e^{\lam t}} \le 2^{d/2} e^{-t\s^2/4} = 2^{\frac{d \ln 2}{2} - \frac{t 
        \s^2}{4}}
    \end{align*}
    For $t = O\left(\frac{d + \log \frac{1}{\delta}}{\s^2}\right)$, this is less than $\delta$.
\end{proof}

\begin{lemma}[\cite{syamantak}]
\label{lem:syamantak}
Let $X$ be a non-negative random variable such that for every $t \ge 0$, $\Pr[X \ge t] \le \exp(- \lambda t)$ for some constant $\lambda \ge 0$. Then, for $K \ge 0$,
\begin{align*}
    \E[X | X \ge K] \lesssim \E[X] + \frac{1}{\lambda} \log \left(\frac{1}{\Pr[X \ge K]} \right)
\end{align*}
\begin{proof}
    Let $p := \Pr[X \ge K]$ and denote the distribution of $X$ as $\mathcal P$. Consider the sequence of iid samples $\{X_i\}_{i \ge 0}$ sampled from $\mathcal P$. Let $T$ be the first index $i$, where $X_i \ge K$. Then the distribution of $X_T$ is the conditional distribution of $X$ given $X \ge K$. For any $m > 0$,
    \begin{align*}
        X_T &\le \sum_{k \ge 1} 1\left((k-1) m < T \le km \right) \sup_{(k-1)m < i \le km} X_i\\
        &\le \sum_{k \ge 1} 1\left((k-1) m < T \right) \sup_{(k-1)m < i \le km} X_i
    \end{align*}
    Now, $1\left((k-1) m < T \right)$ and $\sup_{(k-1)m < i \le km} X_i$ are independent since $1\left((k-1) m < T \right)$ depends only on $\{X_i\}_{i \in [1, (k-1) m]}$. Thus,
    \begin{align*}
        \E[X | X \ge K] &= \E[X_T] \le \sum_{k \ge 1} \E\left[1\left((k-1) m < T \right)  \right] \E\left[\sup_{(k-1)m < i \le km} X_i \right]\\
        &\lesssim (1 - p)^{(k-1)m} \left(\E[X] + \frac{\log m}{\lambda} \right)\\
        &\lesssim \frac{1}{1 - (1 - p)^m} \cdot \left(\E[X] + \frac{\log m}{\lambda} \right)
    \end{align*}
    Choosing $m = \frac{1}{p}$ so that $(1 - p)^m \le \frac{1}{e}$ gives the claim.
\end{proof}
\end{lemma}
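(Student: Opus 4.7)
The plan is to reduce the conditional expectation to an unconditional one via a stopping-time representation, and then exploit the memoryless structure through a block decomposition. Let $\{X_i\}_{i \ge 1}$ be i.i.d.\ copies of $X$ and define the stopping time $T := \min\{i : X_i \ge K\}$. Because the $X_i$ are i.i.d.\ and each exceeds $K$ with probability $p := \Pr[X \ge K]$, the variable $T$ is geometric with parameter $p$, so $\Pr[T > s] = (1-p)^s$. The key observation is that $X_T$ has exactly the law of $X \mid X \ge K$, so it suffices to upper bound $\E[X_T]$ in terms of $\E[X]$ and $\log(1/p)/\lambda$.

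To relate $\E[X_T]$ to $\E[X]$, I would partition the samples into consecutive blocks of size $m$ (to be chosen). Whenever $T$ falls in the $k$th block, $X_T$ is at most the block maximum, so
\[
X_T \le \sum_{k \ge 1} \mathbf{1}[T > (k-1)m] \cdot \max_{(k-1)m < i \le km} X_i.
\]
Using the indicator $\mathbf{1}[T > (k-1)m]$ rather than the tighter $\mathbf{1}[T \in \text{block } k]$ is essential: the indicator depends only on $X_1,\dots,X_{(k-1)m}$ while the maximum depends only on $X_{(k-1)m+1},\dots,X_{km}$, so the two are \emph{independent}. Taking expectations,
\[
\E[X_T] \le \sum_{k \ge 1} (1-p)^{(k-1)m} \cdot \E\!\left[\max_{j \le m} X_j\right].
\]

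For the within-block maximum, I would establish $\E[\max_{j \le m} X_j] \lesssim \E[X] + \log(m)/\lambda$ via a centering trick: write $\max_j X_j \le \E[X] + \max_j (X_j - \E[X])^+$, note that the tail $\Pr[(X-\E[X])^+ \ge t] \le e^{-\lambda \E[X]}\,e^{-\lambda t}$ carries an extra decay factor of $e^{-\lambda \E[X]}$, and bound the expectation of the centered maximum by $\int_0^\infty \min\bigl(1, m\,e^{-\lambda \E[X]}e^{-\lambda t}\bigr) dt$. Finally, choose $m = \lceil 1/p \rceil$ so that $(1-p)^m \le e^{-1}$, making $\sum_k (1-p)^{(k-1)m} = O(1)$. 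Since $\log m \lesssim \log(1/p)$, plugging everything in gives $\E[X_T] \lesssim \E[X] + \log(1/p)/\lambda$, as required.

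I expect the main obstacle to be the within-block maximum bound in the clean form $\E[X] + \log(m)/\lambda$. A naive split of the integral $\int_0^\infty \min(1, m e^{-\lambda t})\,dt$ at $t = \log(m)/\lambda$ yields $\log(m)/\lambda + 1/\lambda$, and that residual $1/\lambda$ need not be comparable to $\E[X]$. The centering-plus-refined-tail step above is what turns the residual into something absorbed by $\E[X]$. Once that lemma is in place, the remainder (independence between the indicator and the block maximum, geometric summation, and the choice $m = \lceil 1/p \rceil$) is routine bookkeeping.
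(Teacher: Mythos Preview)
Your proposal is correct and is exactly the paper's argument: represent $X \mid X \ge K$ as $X_T$ for a geometric stopping time, block the samples, use independence of the event $\{T > (k-1)m\}$ from the $k$th block maximum, sum the geometric series, and take $m \approx 1/p$. One small correction: the centering trick does not actually remove the residual $1/\lambda$ (after adding back $\E[X]$, the integral $\int_0^\infty \min(1, m e^{-\lambda\E[X]}e^{-\lambda t})\,dt$ still leaves a $1/\lambda$). But the worry is moot anyway: for $m \ge 2$ one has $1/\lambda \le \log(m)/(\lambda \log 2)$, so it is absorbed by the $\log(m)/\lambda$ term, and the case $p > 1/2$ (where $m$ could be $1$) follows directly from $\E[X \mid X \ge K] \le \E[X]/p \le 2\E[X]$.
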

\begin{theorem}[Girsanov's theorem]
\label{thm:girsanov}
  For $t \in [0, T]$, let $\mathcal{L}_t = \int_0^t b_s \d B_s$ where $B$ is a $Q$-Brownian motion. Assume Novikov's condition is satisfied: \[
      \Ex[Q]{\exp\left(\frac{1}{2}\int_0^T \ltwo{b_s} \d s\right)} < \infty.
  \]
  Then \[
      \mathcal{E(L)}_t := \exp \left(\int_0^tb_s \d B_s - \frac{1}{2}\int_0^t\ltwo{b_s} \d s\right)
  \]
  is a $Q$-martingale and \[
      \widetilde{B}_t :=  B_t - \int_0^t b_s \d s
  \]
  is a Brownian motion under $P$ where $P := \mathcal{E(L)}_TQ$, the probability distribution with density $\mathcal{E(L)}_T$ w.r.t. Q.
\end{theorem}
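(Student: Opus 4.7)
The plan is to prove the two conclusions in sequence using standard stochastic calculus, treating Novikov's condition as the essential input that promotes local martingales to true martingales.

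First, I would establish that $\mathcal{E(L)}_t$ is a $Q$-martingale. Writing $\mathcal{E(L)}_t = \exp(X_t)$ with $X_t = \int_0^t b_s \d B_s - \tfrac{1}{2}\int_0^t \|b_s\|^2 \d s$, Itô's formula applied to $f(x) = e^x$ gives
\[
\d \mathcal{E(L)}_t = \mathcal{E(L)}_t \d X_t + \tfrac{1}{2}\mathcal{E(L)}_t \d\langle X\rangle_t = \mathcal{E(L)}_t\, b_t \d B_t,
\]
since the $-\tfrac{1}{2}\|b_t\|^2 \d t$ drift cancels the $\tfrac{1}{2} \d\langle X\rangle_t = \tfrac{1}{2}\|b_t\|^2 \d t$ correction. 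Thus $\mathcal{E(L)}_t$ is always a nonnegative $Q$-local martingale, hence a $Q$-supermartingale with $\mathbb{E}_Q[\mathcal{E(L)}_t] \le 1$. The nontrivial step is upgrading to a true martingale, i.e.\ showing $\mathbb{E}_Q[\mathcal{E(L)}_T] = 1$; this is exactly what Novikov's condition buys us via the classical argument of stopping $\mathcal{E(L)}_t$ at $\tau_n = \inf\{t : |X_t| \ge n\}$, using the Novikov bound to control $\mathbb{E}_Q[\mathcal{E(L)}_{t\wedge \tau_n}^p]$ uniformly in $n$ for some $p>1$, and passing to the limit by uniform integrability. This also makes $P := \mathcal{E(L)}_T Q$ a bona fide probability measure.

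Next I would show $\widetilde{B}_t = B_t - \int_0^t b_s \d s$ is a $P$-Brownian motion via Lévy's characterization: continuity is immediate (both summands are continuous), and its quadratic variation equals $\langle B\rangle_t = t I_d$ because $\int_0^t b_s \d s$ has finite variation. The remaining and main step is the $P$-martingale property of $\widetilde{B}_t$ (and of $\widetilde{B}_t \widetilde{B}_t^\top - tI_d$). I would use the standard fact that a process $M_t$ is a $P$-martingale iff $M_t \mathcal{E(L)}_t$ is a $Q$-martingale. Applying Itô's product rule,
\[
\d(\widetilde{B}_t \mathcal{E(L)}_t) = \mathcal{E(L)}_t\, \d\widetilde{B}_t + \widetilde{B}_t\, \d\mathcal{E(L)}_t + \d\langle \widetilde{B}, \mathcal{E(L)}\rangle_t,
\]
and using $\d\widetilde{B}_t = \d B_t - b_t \d t$ together with $\d\mathcal{E(L)}_t = \mathcal{E(L)}_t b_t \d B_t$ so that $\d\langle \widetilde{B}, \mathcal{E(L)}\rangle_t = \mathcal{E(L)}_t b_t \d t$, the $b_t \d t$ terms cancel and we are left with a pure $\d B_t$ integral. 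Hence $\widetilde{B}_t \mathcal{E(L)}_t$ is a $Q$-local martingale, and the same Novikov-based argument upgrades it to a true martingale, giving the $P$-martingale property of $\widetilde{B}_t$. An analogous computation with the product $\widetilde{B}_t^i \widetilde{B}_t^j - \delta_{ij} t$ times $\mathcal{E(L)}_t$ verifies the quadratic variation martingale required by Lévy.

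The main obstacle, as usual for Girsanov, is the integrability/uniform-integrability step needed to turn local martingales into true ones; Novikov's condition is included in the hypothesis precisely to bypass this obstruction, so the remaining work is mechanical Itô calculus. Since this theorem is standard (see e.g.\ any stochastic analysis textbook), in the paper one could alternatively simply cite the classical statement rather than redo the full proof.
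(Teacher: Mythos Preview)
Your proof sketch is correct and follows the standard textbook route (Itô's formula to get the Doléans-Dade SDE, Novikov to upgrade the local martingale, then Lévy's characterization under the new measure via the product rule). However, the paper does not prove this statement at all: Girsanov's theorem is listed in the ``Utility Results'' appendix as a classical fact and is simply quoted without proof, to be invoked in the sampling analysis. Your final remark that ``one could alternatively simply cite the classical statement'' is exactly what the paper does, so there is no proof in the paper to compare against.
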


\end{document}